\def\eqref#1{equation~\ref{#1}}
\def\ceil#1{\lceil #1 \rceil}
\def\1{\bm{1}}
\DeclareMathAlphabet{\mathsfit}{\encodingdefault}{\sfdefault}{m}{sl}
\SetMathAlphabet{\mathsfit}{bold}{\encodingdefault}{\sfdefault}{bx}{n}
\def\gF{{\mathcal{F}}}
\def\gG{{\mathcal{G}}}
\def\gH{{\mathcal{H}}}
\def\gM{{\mathcal{M}}}
\def\gT{{\mathcal{T}}}
\def\gX{{\mathcal{X}}}
\def\gY{{\mathcal{Y}}}
\newcommand{\E}{\mathbb{E}}
\newcommand{\R}{\mathbb{R}}
\def\P{\mathbb{P}}
\def\E{\mathbb{E}}
\newtheorem{theorem}{Theorem}
\newtheorem{prop}{Proposition}
\newtheorem{lemma}{Lemma}[section]
\newtheorem*{remark}{Remark}
\newtheorem{example}{Example}[section]
\newcommand{\RC}{\mathcal{R}}
\newcommand{\Bn}{\mathcal{B}}
\newcommand{\eP}{\mathbb{P}}
\newcommand{\nf}{\tilde{f}}
\newcommand{\nh}{\tilde{h}}
\newcommand{\Size}{\mathrm{Size}}
\title{Rethinking Breiman's Dilemma in Neural Networks: \\ Phase Transitions of Margin Dynamics}
\author{Weizhi Zhu, Yifei Huang, and Yuan Yao\footnote{Correspondence email: \url{yuany@ust.hk}. }
\\ \\
\emph{Department of Mathematics} \\
\emph{Hong Kong University of Science and Technology} \\
}
\date{}
\begin{document}
\maketitle

\begin{abstract}
Margin enlargement over training data has been an important strategy since perceptrons in machine learning for the purpose of boosting the confidence of training toward a good generalization ability. Yet Breiman shows a dilemma \citep{Breiman99} that a uniform improvement on margin distribution \emph{does not} necessarily reduces generalization errors. In this paper, we revisit Breiman's dilemma in deep neural networks with recently proposed spectrally normalized margins, from a novel perspective based on phase transitions of normalized margin distributions in training dynamics. Normalized margin distribution of a classifier over the data, can be divided into two parts: low/small margins such as some negative margins for misclassified samples vs. high/large margins for high confident correctly classified samples, that often behave differently during the training process. Low margins for training and test datasets are often effectively reduced in training, along with reductions of training and test errors; while high margins may exhibit different dynamics, reflecting the trade-off between expressive power of models and complexity of data. When data complexity is comparable to the model expressiveness, high margin distributions for both training and test data undergo similar decrease-increase phase transitions during training. In such cases, one can predict the trend of generalization or test error by margin-based generalization bounds with restricted Rademacher complexities, shown in two ways in this paper with early stopping time exploiting such phase transitions. On the other hand, over-expressive models may have both low and high training margins undergoing uniform improvements, with a distinct phase transition in test margin dynamics. This reconfirms the Breiman's dilemma associated with overparameterized neural networks where margins fail to predict overfitting. Experiments are conducted with some basic convolutional networks, AlexNet, VGG-16, and ResNet-18, on several datasets including Cifar10/100 and mini-ImageNet.
\end{abstract}

\section{Introduction}
Margin, as a measurement of the robustness allowing some perturbations on classifier without changing its decision on training data, has a long history in characterizing the performance of classification algorithms in machine learning. As early as \cite{Novikoff62}, it played a central role in the proof on finite-stopping or convergence of perceptron algorithm when training data is separable. Equipped with convex optimization technique, a plethora of large margin classifiers were triggered by support vector machines   \citep{svm,Vapnik98}. For neural networks, \cite{Bartlett97,Bartlett98} showed that the generalization error can be bounded by a margin-sensitive fat-shattering dimension, which is in turn bounded by the $\ell_1$-norm of weights, shedding light on possible good generalization ability of over-parameterizd networks with small size weights despite the large VC dimensionality. The same idea was later applied to AdaBoost, an iterative algorithm to combine an ensemble of classifiers proposed by \cite{adaboost}, often exhibiting a phenomenon of resistance to overfitting that during the training process the generalization error does not increase even when the training error drops to zero. Toward deciphering such a resistance to overfitting phenomenon, \cite{SFBL98} proposed an explanation that the training process keeps on improving a notion of classification margins in boosting, among later improvement \citep{koltchinskii2002empirical} and works on establishing consistency of boosting via early stopping regularization \citep{BuhYu02,ZhaYu05,YaoRosCap07}. Lately such a resistance to overfitting was again observed in deep neural networks with over-parameterized models \citep{zhang2016understanding}. A renaissance of margin theory was brought by \cite{BFT17} with a normalization of network using Lipschitz constants bounded by products of operator spectral norms. It inspires many further investigations in various settings \citep{miyato2018spectral,nati18_pacbayesian,Poggio_memo91}.  

However, the margin theory has a limitation that the improvement of margin distributions does not necessarily guarantee a better generalization performance, which is at least traced back to \cite{Breiman99} in his effort to understanding AdaBoost. In this work, Breiman designed an algorithm \emph{arc-gv} such that the margin can be maximized via a prediction game, then he demonstrated an example that one can achieve uniformly larger margin distributions on training data than AdaBoost but suffer a higher generalization error. In the end of this paper, Breiman made the following comments with a dilemma: 

\indent \emph{``The results above leave us in a quandary. The laboratory results for various arcing algorithms are excellent, but the theory is in disarray. The evidence is that if we try too hard to make the margins larger, then overfitting sets in. $\cdots$ My sense of it is that we just do not understand enough about what is going on."}


In this paper, we are going to revisit Breiman's dilemma in the scenario of deep neural networks. We shall see margin distributions on training and test data may behave differently on the low and high parts during training processes. First of all, let's look at the following illustration example.
\medskip
\begin{example}[Breiman's Dilemma with a CNN] \label{exmp:first}
A basic 5-layer convolutional neural network of $c$ channels (see Section \ref{sect:exp} for details) is trained with CIFAR-10 dataset whose 10 percent labels are randomly permuted as injected noises. When $c=50$ with $92,610$ parameters, Figure \ref{fig:bcnn-main} (a) shows the training error and generalization (test) error in solid curves. From the generalization error in (a) one can see that overfitting indeed happens after about 10 epochs, despite that training error continuously drops down to zero. One can successfully predict such an overfitting phenomenon from Figure \ref{fig:bcnn-main} (b), the evolution of normalized training margin distributions defined later in this paper. In (b), while low or small margins are monotonically improved during training, high or large margins undergoes a phase transition from increase to decrease around 10 epochs such that one can predict the tendency of generalization error in (a) using high margin dynamics. Two particular sections of high margin dynamics are highlighted in (b), one at 9.8 on $x$-axis that measures the percentage of normalized training margins no more than 9.8 (training margin error) and the other at 0.8 on $y$-axis that measures the normalized margins at quantile $q=0.8$ (i.e. $1/\hat{\gamma}_{q,t}$ defined later). Both of them meet the tendency of generalization error in (a) and find good early stopping time to avoid overfitting. However, as we increase the channel number to $c=400$ with about $5.8M$ parameters and retrain the model, (c) shows a similar overfitting phenomenon in generalization error; on the other hand, (d) exhibits a uniform improvement of both low and high normalized margins without a phase transition during the training and thus fails to capture the overfitting. This demonstrates the Breiman's dilemma in wide CNN.
\end{example}

\begin{figure}[ht!] 
\begin{center}
\includegraphics[width=.48\textwidth]{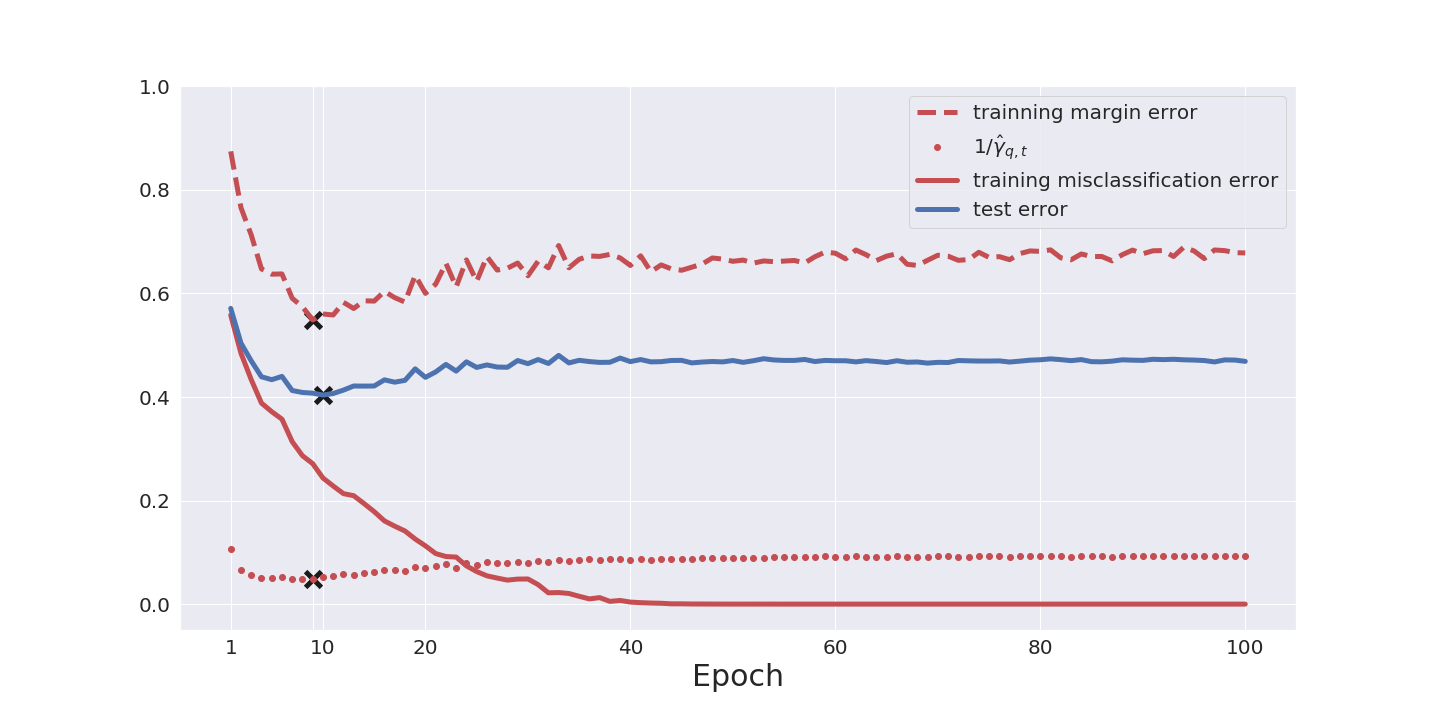}
\includegraphics[width=.48\textwidth]{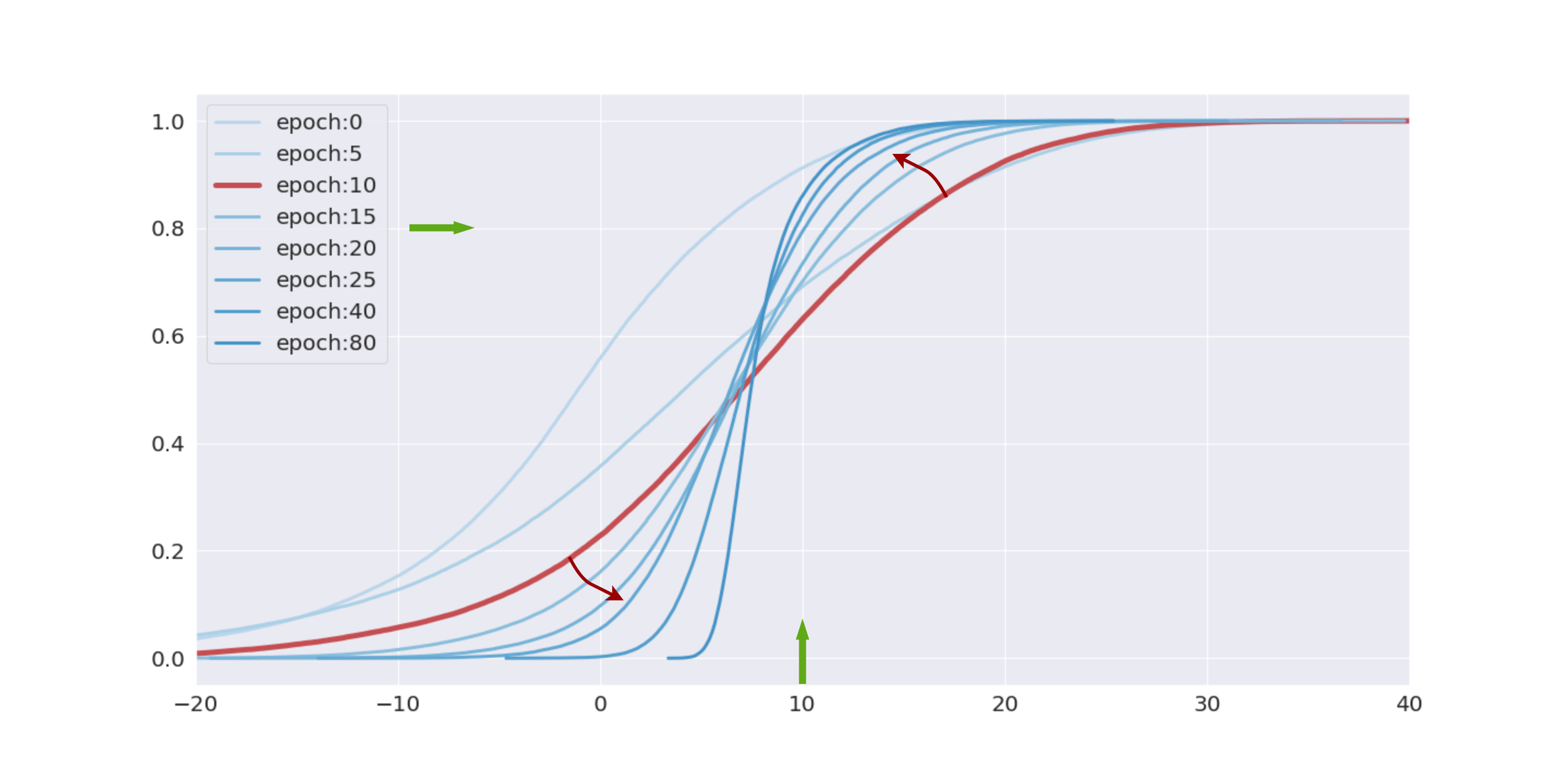}\\
(a)\ \ \ \ \ \ \ \ \ \ \ \ \ \ \ \ \ \ \ \ \ \ \ \ \ \ \ \ \ \ \ \ \ \ \ \ \ \ \ \ \ \ \ \ \ \ \ \ \ \ \ \ \ \ \ \ \ (b) \\
\includegraphics[width=.48\textwidth]{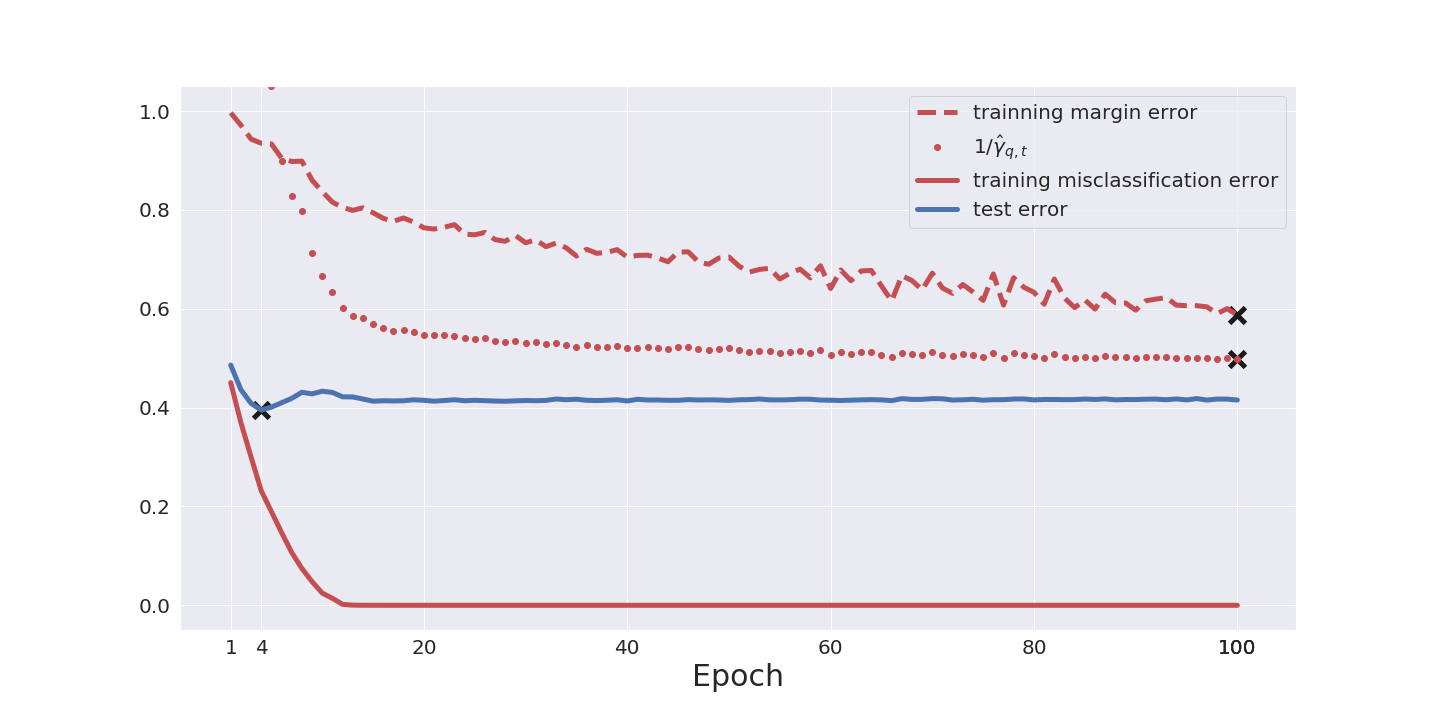} 
\includegraphics[width=.48\textwidth]{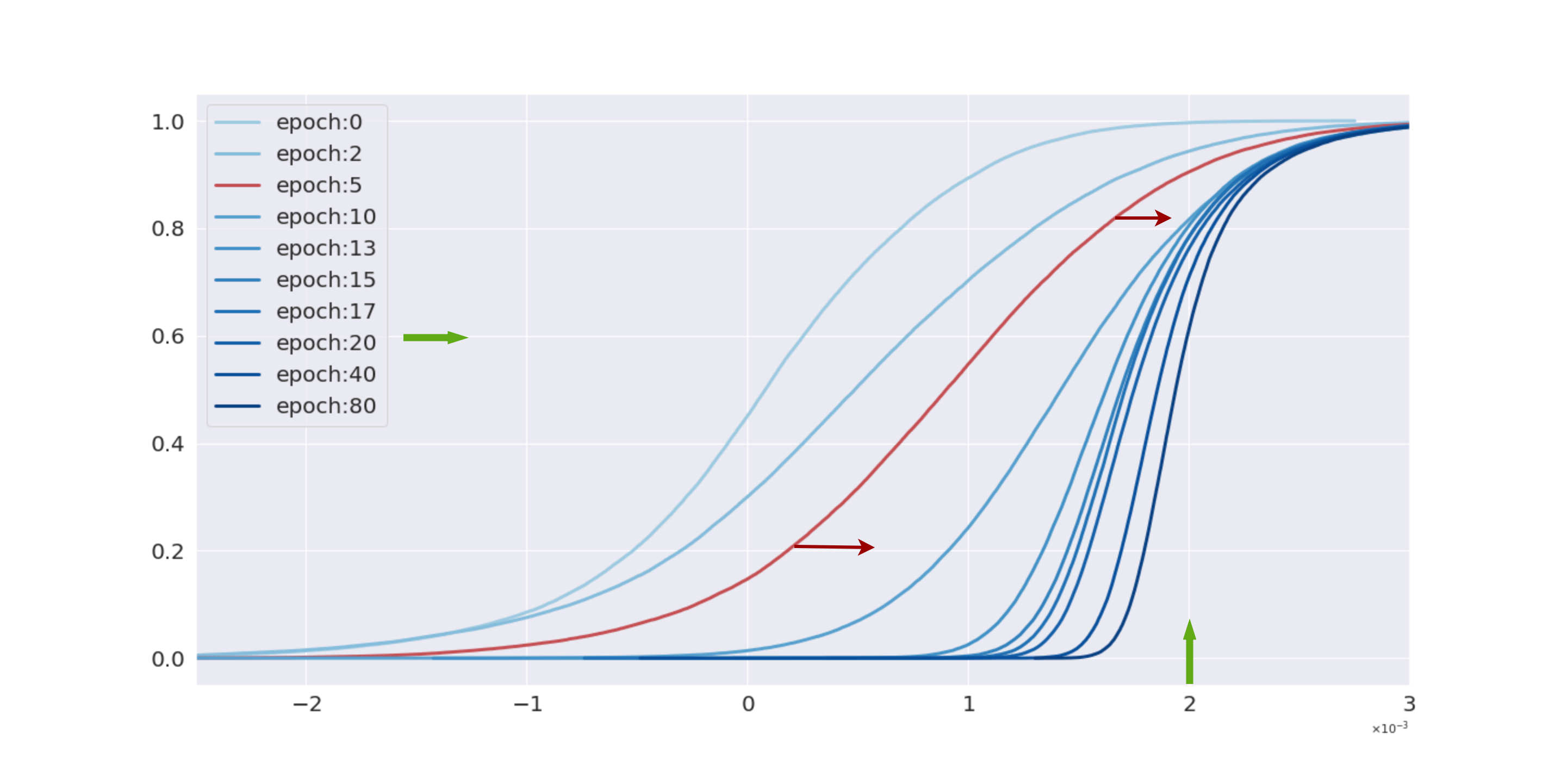} \\
(c)\ \ \ \ \ \ \ \ \ \ \ \ \ \ \ \ \ \ \ \ \ \ \ \ \ \ \ \ \ \ \ \ \ \ \ \ \ \ \ \ \ \ \ \ \ \ \ \ \ \ \ \ \ \ \ \ \ (d)
\end{center}
\caption{Demonstration of Breiman's Dilemma in Convolutional Neural Networks. See Example \ref{exmp:first} for details. 
}\label{fig:bcnn-main}
\end{figure}

A key insight behind this dilemma, is that one needs a trade-off between the expressive power of models and the complexity of the dataset to endorse training margins a prediction power. On one hand, when a model has a limited expressive power relative to the training dataset, in the sense that the low and high training margins CAN NOT be uniformly improved during training, low margins can be effectively enlarged during training by reducing the training loss, at the cost of sacrificing high margins that does not affect the training loss as much as low margins indicating mis-classfied samples. In this case, the generalization or test error may be predicted from dynamics of normalized training margin distributions by the increase-decrease phase transition that high margins experience. On the other hand, if we push too hard to improve margins by giving models too much degree of freedom such that the training margins are uniformly improved during training process, the predictability may be lost and overfitting may set in. A trade-off is thus necessary to balance the complexity of model and dataset, otherwise one is doomed to meet Breiman's dilemma when the models arbitrarily increase the expressive power.  

The example above shows that the expressive power of models relative to the complexity of dataset, can be observed from the dynamics of normalized margins in training, instead of counting the number of parameters in neural networks. In the sequel, our main contributions are to make these precise by revisiting the Rademacher complexity bounds on network generalization error. 

\begin{itemize}
\item With the Lipschitz-normalized margins, a linear inequality is established between training margin and test margin in Theorem \ref{thm:marg-err}. When both training and test normalized margin distributions undergo similar phase transitions on increase-decrease during the training process, one may predict the generalization error based on the training margins as illustrated in Figure \ref{fig:bcnn-main}. 
\item In a dual direction, one can define a \emph{quantile margin} via the inverse of margin distribution functions, to establish another linear inequality between the inverse quantile margins and the test margins as shown in Theorem \ref{thm:qmargin}. Quantile margin is far easier to tune in practice and enjoys a stronger prediction power exploiting an adaptive selection of margins along model training.
\item In all cases, Breiman's dilemma may fail both of the methods above when dynamics of normalized training margins undergo different phase transitions to that of test margins during training, where a uniform improvement of training margins results in overfitting. 
\end{itemize}

Section \ref{sect:method} describes our method to derive the two linear inequalities of generalization bounds above. Extensive experimental results are
shown in Section \ref{sect:exp} with basic CNNs, AlexNet, VGG, ResNet, and various datasets including CIFAR10, CIFAR100, and mini-Imagenet. Conclusions and future directions are discussed in Section \ref{sec:conclusion}. More experimental figures and proofs are collected in Appendices. 

\section{Methodology} \label{sect:method}

\subsection{Definitions and Notation}
Let $\gX$ be the input space (e.g. $\gX\subseteq\R^{C\times W\times H}$ in image classification of size \#(channel)-by-\#(width)-by-\#(height)) and $\gY:=\{1,\ldots,K\}$ be the space of $K$ classes. Consider a sample set of $n$ observations $S=\{(x_1, y_1),\ldots,(x_n,y_n):x_i\in\gX, y_i\in \gY\}$ that are drawn i.i.d. from $P_{X,Y}$. For any function $f:\gX\times\gY \to \R$, let $\P f = \int_{\gX\times\gY} f(X, Y) dP_{X, Y}$ be the population expectation and $\eP_n f=(1/n)\sum_{i=1}^nf(x_i)$ be the sample average. 

Define $\gF$ to be the space of functions $f: \gX\to\R^K$ represented by neural networks, 
\begin{equation}\label{eq:F} 
\left\{
\begin{array}{rcl}
x_{0} & = & x, \\ 
x_{i}  & = & \sigma_{i} (W_{i} x_{i-1} + b_{i}), \ \ \ \ i=1,\ldots,l-1,  \\
f(x) & = & W_lx_{l-1}+b_{l},  
\end{array}
\right.
\end{equation}
where $l$ is the depth of the network, $W_i$ is the weight matrix corresponding to a linear operator on $x_{i}$ and $\sigma_i$ stands for either element-wise activation function (e.g. ReLU) or pooling operator that are assumed to be Lipschitz bounded with constant $L_{\sigma_i}$. For example, in convolutional network, $W_i x_i + b_i = w_i \ast x_i + b_i$ where $\ast$ stands for the convolution between input tensor $x_l$ and kernel tensor $w_l$. We equip $\gF$ with the \emph{Lipschitz semi-norm}, that is, for each $f$,
\begin{equation}\label{eq:Lip}
    \|f\|_{\gF}:= \sup_{x\neq x\prime}\frac{\|f(x) - f(x^\prime)\|_2}{\|x-x^\prime\|_2}\leq L_\sigma \prod_{i=1}^{l} \|W_i\|_{\sigma} := L_f, 
\end{equation}
where $\|\cdot\|_{\sigma}$ is the spectral norm and $L_\sigma=\prod_{i=1}^l L_{\sigma_i}$. Without loss of generality, we assume $L_\sigma=1$ for simplicity. Moreover we consider the following family of hypothesis functions as network mapping evaluated at $(x,y)$,
\begin{equation}\label{eq:H}
\gH = \{h(x)=[f(x)]_y: \gX\to\R, f\in\gF, y\in\gY\},
\end{equation}
where $[\cdot]_j$ denotes the $j^{\textnormal{th}}$ coordinate and we further define the following class induced by Lipschitz semi-norm bound on $\gF$,
\begin{equation}\label{eq:H1}
 \gH_L=\{h(x)=[f(x)]_y: \gX\to\R, h(x)=[f(x)]_y\in\gH\ \textnormal{with}\ \|f\|_{\gF}\leq L, y\in\gY\}.
\end{equation}

Now, rather than merely looking at whether a prediction $f(x)$ on $y$ is correct or not, we further consider the prediction \emph{margin} defined as $\zeta(f(x), y) = [f(x)]_{y} - \max_{\{j: j\neq y\}} [f(x)]_j$. With that, we can define the \emph{ramp loss} and \emph{margin error} depending on the confidence of predictions. Given two thresholds $\gamma_2 > \gamma_1 \geq 0$, define the \emph{ramp loss} to be
\begin{equation*}
\ell_{(\gamma_1, \gamma_2)}(\zeta) = \left\{
\begin{array}{rcl}
1 & & {\zeta < \gamma_1},\\
- \frac{1}{\Delta}(\zeta-\gamma_2) & & {\gamma_1 \leq\zeta\leq \gamma_2},\\
0 & & {\zeta>\gamma_2},\\
\end{array} \right.
\end{equation*}
where $\Delta:=\gamma_2-\gamma_1$. In particular $\gamma_1=0$ and $\gamma_2=\gamma$, we also write $\ell_\gamma=\ell_{(0, \gamma)}$ for simplicity. 
Define the \emph{margin error} to measure if $f$ has margin no more than a threshold $\gamma$,
\begin{equation}\label{eq:marg-err}
e_\gamma(f(x), y) = \left\{
\begin{array}{rcl}
1 & & {\zeta(f(x),y) \leq \gamma}\\
0 & & {\zeta(f(x),y) > \gamma}\\
\end{array}. \right.
\end{equation}
In particular, $e_0(f(x),y)$ is the common mis-classification error and $\E [e_0(f(x),y)] = \eP [\zeta(f(x),y)<0]$. Note that $e_0 \leq \ell_\gamma \leq e_\gamma$, and $\ell_\gamma$ is Lipschitz bounded by $1/\gamma$. 

\subsection{Rademacher Complexity and the Scaling Issue}
The central question we try to answer is, \emph{can we find a proper upper bound to predict the tendency of the generalization error along training, such that one can early stop the training near the epoch that $\eP[\zeta(f_t(x),y)<0]$ is minimized?} 

We begin with the following lemma, as a margin-based generalization bound with network Rademacher complexity for multi-label classifications, using the uniform law of large numbers \citep{koltchinskii2002empirical,cortes2013multi,kuznetsov2015rademacher,BFT17} 
\begin{lemma}[Rademacher Complexity based Generalization Bound] \label{lem:margbound}
Given a $\gamma_0>0$, then, for any $\delta \in(0,1)$, with probability at least $1-\delta$, the following holds for any $f\in \gF$ with $\|f\|_\gF\leq L$,
\begin{equation}\label{eq:rc-standard}
    \E[\ell_{\gamma_0}(f(x),y)] \leq \frac{1}{n}\sum_{i=1}^n [\ell_{\gamma_0}(f(x_i),y_i)] +
    \frac{4K}{\gamma_0}\RC_{n}(\gH_L) + \sqrt{\frac{\log(1/\delta)}{2n}},
\end{equation}
where 
\begin{equation}
    \RC_{n}(\gH_L) = \E_{x_i,\varepsilon_i} \sup_{h\in \gH_L} \frac{1}{n} \sum_{i=1}^n \varepsilon_i h(x_{i})  
\end{equation}
is the Rademacher complexity of function class $\gH_L$ with respect to $n$ samples, and the expectation is taken over $x_i, \varepsilon_i$, $i=1,...,n$.
\end{lemma}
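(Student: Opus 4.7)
The plan is to follow the textbook route for margin-based Rademacher generalization bounds, adapted to the multi-label setting. The three ingredients are (i) a one-sided uniform deviation bound via McDiarmid, (ii) Talagrand's contraction principle for the Lipschitz ramp loss, and (iii) a reduction of the multi-class margin functional $\zeta$ to the coordinate class $\gH_L$.

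First I would introduce the composed loss class
\begin{equation*}
\gG_L \;=\; \bigl\{\, (x,y)\mapsto \ell_{\gamma_0}(\zeta(f(x),y)) \,:\, f\in\gF,\ \|f\|_\gF\le L\,\bigr\},
\end{equation*}
whose elements take values in $[0,1]$ and satisfy the bounded-differences condition with constant $1/n$. Applying McDiarmid's inequality to $\sup_{g\in\gG_L}(\P g-\eP_n g)$ together with the standard symmetrization argument yields, with probability at least $1-\delta$,
\begin{equation*}
\sup_{g\in\gG_L}\bigl(\P g-\eP_n g\bigr) \;\le\; 2\,\RC_n(\gG_L) \;+\; \sqrt{\frac{\log(1/\delta)}{2n}}.
\end{equation*}
This already matches the $\sqrt{\log(1/\delta)/(2n)}$ term in the target bound, so the remaining task is to show $2\,\RC_n(\gG_L)\le (4K/\gamma_0)\,\RC_n(\gH_L)$.

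Next I would peel off the loss layer. The ramp loss $\ell_{\gamma_0}$ is $(1/\gamma_0)$-Lipschitz and satisfies $\ell_{\gamma_0}(0)=1$, so Talagrand's contraction lemma (applied to the centered version $\ell_{\gamma_0}-1$, which has the same Rademacher complexity) gives $\RC_n(\gG_L)\le (1/\gamma_0)\,\RC_n(\gM_L)$, where $\gM_L:=\{(x,y)\mapsto \zeta(f(x),y):\|f\|_\gF\le L\}$. To bound $\RC_n(\gM_L)$ in terms of $\RC_n(\gH_L)$, I would write
\begin{equation*}
\zeta(f(x),y) \;=\; [f(x)]_y \;-\; \max_{j\ne y}[f(x)]_j,
\end{equation*}
so that a Rademacher sum over $\gM_L$ splits into a sum over $\gH_L$ and one over $\{\max_{j\ne y}[f(x)]_j\}$. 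The former contributes $\RC_n(\gH_L)$. For the latter, the $1$-Lipschitz contraction for $\max$ (or, equivalently, writing $\max_{j\ne y}[f(x)]_j=\sum_{j\ne y}[f(x)]_j\mathbf{1}_{\{j=\arg\max\}}$ and then a union-type bound) introduces a factor of $K$, giving at most $K\,\RC_n(\gH_L)$. Absorbing the factor $2$ from symmetrization, I would conclude $2\,\RC_n(\gG_L)\le (4K/\gamma_0)\,\RC_n(\gH_L)$, which closes the proof since $\P\ell_{\gamma_0}-\eP_n\ell_{\gamma_0}\le\sup_{g\in\gG_L}(\P g-\eP_n g)$ for every admissible $f$.

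The main obstacle is the third step, in which the multi-class max has to be controlled by the scalar hypothesis class $\gH_L$ with the correct constant. The naive contraction bound for $\max$ is not tight out of the box in the Rademacher setting, and one needs either the Cortes--Mohri--Rostamizadeh style decomposition of the margin or a careful application of the vector-contraction inequality; whichever route one takes, the bookkeeping must produce a factor exactly $2K$ (rather than $K^2$ or $(K-1)$) so that the final coefficient becomes $4K/\gamma_0$ as stated. Everything else, namely the symmetrization step, McDiarmid's inequality, and Talagrand's contraction for $\ell_{\gamma_0}$, is routine once the class $\gG_L$ is set up.
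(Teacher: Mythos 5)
Your overall architecture---McDiarmid plus symmetrization (the paper's Lemma \ref{lem:ulll}), Talagrand contraction with Lipschitz constant $1/\gamma_0$, then a reduction of the margin class to $\gH_L$---is the same route the paper takes; the paper proves this lemma only implicitly, inside the proof of Theorem \ref{thm:marg-err} (take $\gamma_1=0$, $\gamma_2=\gamma_0$ and replace $\gH_1$ by $\gH_L$). However, the step you defer is exactly the step that carries the content, and as sketched it has a genuine gap. First, the map $(x,y)\mapsto[f(x)]_y$ is \emph{not} an element of $\gH_L$: each $h\in\gH_L$ is the coordinate map for one fixed class, whereas here the coordinate is selected by the random label $y_i$. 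In the paper this term (the quantity $A_1$) is handled by writing $[f(x_i)]_{y_i}=\sum_{y\in\gY}[f(x_i)]_y 1[y=y_i]$, splitting $1[y=y_i]=\tfrac{1}{2}(2\cdot 1[y=y_i]-1)+\tfrac12$ and using that $\epsilon_i(2\cdot 1[y=y_i]-1)$ is again Rademacher; this costs a factor $K$, not the factor $1$ you assert. Second, the term $\max_{j\neq y}[f(x)]_j$ has a label-dependent index set, so the max-of-classes bound (Lemma \ref{lem:maxrad}) cannot be applied to it directly; decomposing over both $y$ and $j$ is precisely what produces the $K^2$ of Cortes--Mohri-type bounds. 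The device the paper uses to avoid this---and which your outline does not supply---is the shifted margin $\zeta^{\theta}(f(x),y)=[f(x)]_y-\max_{y'}\bigl([f(x)]_{y'}-\theta\, 1[y'=y]\bigr)$ with $\theta=2\gamma_0$, which satisfies $\ell_{\gamma_0}(\zeta^{2\gamma_0}(f(x),y))=\ell_{\gamma_0}(\zeta(f(x),y))$ pointwise and makes the max range over all $K$ classes independently of the label, so that Lemma \ref{lem:maxrad} bounds the second term ($A_2$) by $K\RC_n(\gH_L)$ as well. With $A_1+A_2\le 2K\RC_n(\gH_L)$, symmetrization and contraction then give the stated coefficient $4K/\gamma_0$.

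In short, you correctly name the obstacle (``the bookkeeping must produce a factor exactly $2K$'') but do not overcome it: the claim that the label-indexed coordinate term contributes $\RC_n(\gH_L)$ is false as stated, and the claim that the restricted max contributes $K\RC_n(\gH_L)$ rests on applying the max lemma to a class it does not cover. The missing idea is the $\zeta^{2\gamma_0}$ reformulation of the ramp loss (the Kuznetsov--Mohri-style argument the paper follows), or an equivalent vector-contraction argument, carried out explicitly.
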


Unfortunately, direct application of such bound in neural networks with a constant $\gamma_0$ will suffer from the so-called \emph{scaling issue}. To see this issue, let's look at the following proposition as a lower bound of Rademacher complexity term. 

\begin{prop}[Lower Bound of the Rademacher Complexity] \label{prop:lowbound}
Consider the networks with activation functions $\sigma$, where we assume $\sigma$ is Lipschitz continuous and there exists $x_0$ such that $\sigma'(x_0) \neq 0$ and $\sigma''(x_0)$ exists. Then for any $L>0$, there holds,
\begin{equation}
\RC_n(\gH_L) \geq CL \E_S{\sqrt{\frac{1}{n}\sum_{i=1}^n\|x_i\|_2}},
\end{equation}
where $C>0$ is a constant that does not depend on $S$. 
\end{prop}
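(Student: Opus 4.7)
The plan is to lower bound $\RC_n(\gH_L)$ by exhibiting within $\gH_L$ an explicit one-parameter family of networks that approximate every bounded linear functional on $\gX$ via a first-order Taylor expansion of $\sigma$ at $x_0$. For each unit vector $\hat w\in\R^d$ and each $\epsilon>0$, I will use the depth-two subnetwork
\[
 h_{\hat w,\epsilon}(x)\ :=\ \tfrac{L}{\epsilon}\bigl[\sigma(\epsilon\,\hat w^\top x+x_0)-\sigma(x_0)\bigr],
\]
built from $W_1=\epsilon\hat w^\top$, $b_1=x_0$, $W_2=L/\epsilon$, $b_2=-(L/\epsilon)\sigma(x_0)$, routed into one fixed output coordinate with zero padding elsewhere. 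A direct calculation gives the spectral-norm product $\|W_1\|_\sigma\|W_2\|_\sigma=L$, so $h_{\hat w,\epsilon}\in\gH_L$; when the prescribed depth $l>2$, pad with layers whose weights keep $\sigma$ in an active regime so they act as rescaled identities (for ReLU, choose biases ensuring positivity on the data), leaving the spectral-norm product unchanged.

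Next, since $\sigma''(x_0)$ exists, $\sigma$ admits a first-order Taylor expansion around $x_0$ with an $O(\epsilon^2)$ remainder uniform on any bounded set of inputs, yielding
\[
 h_{\hat w,\epsilon}(x)\ =\ L\sigma'(x_0)\,\hat w^\top x+R_\epsilon(x),\qquad |R_\epsilon(x)|\le C_1\epsilon\|x\|^2,
\]
with $C_1$ depending only on $\sigma$ and $x_0$. Because $h_{\hat w,\epsilon}\in\gH_L$ for every $\epsilon>0$ and because the sample $S$ and signs $\varepsilon_i$ are fixed when the inner sup is taken, letting $\epsilon\to 0$ pointwise inside the supremum and subsequently taking $\sup$ over $\hat w$ on the unit sphere gives
\[
 \sup_{h\in\gH_L}\frac{1}{n}\sum_{i=1}^n\varepsilon_i h(x_i)\ \ge\ \frac{L\,|\sigma'(x_0)|}{n}\,\Bigl\|\sum_{i=1}^n\varepsilon_i x_i\Bigr\|.
\]

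Finally, I will take expectations and invoke the Hilbert-space Khintchine–Kahane inequality
\[
 \E_{\varepsilon}\Bigl\|\sum_{i=1}^n\varepsilon_i x_i\Bigr\|\ \ge\ \tfrac{1}{\sqrt 2}\Bigl(\sum_{i=1}^n\|x_i\|^2\Bigr)^{1/2},
\]
followed by $\E_S$, to obtain the claim with $C=|\sigma'(x_0)|/(n\sqrt 2)$, which does not depend on $S$.

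The main obstacle I expect is neither the Taylor step nor Khintchine, but rather the depth-$l$ bookkeeping: one must verify that the construction realises the spectral-norm product \emph{exactly} $L$ (so as to extract the full linear factor of $L$) and that the padding used to meet the architectural depth does not forfeit either the spectral bound or the affine behaviour of the padded layers on the sample. A secondary, minor point is that the pointwise limit inside $\sup_h$ silently uses the fact that $\epsilon$ ranges freely over $(0,\infty)$ in $\gH_L$; if one instead wished to interchange $\lim_\epsilon$ with $\E_S$, dominated convergence would be needed, but the order of operations above avoids this entirely.
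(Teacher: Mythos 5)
Your strategy is essentially the paper's: exploit differentiability of $\sigma$ at $x_0$ to build networks in $\gH_L$ whose spectral-norm product is exactly $L$ and which approximate the linear class $\{\langle w,x\rangle:\|w\|_2\le L\}$, pass the approximation limit inside the supremum so that $\RC_n(\gH_L)$ dominates the Rademacher complexity of that linear class, and conclude with a Khintchine-type inequality. Two of your choices are in fact slightly cleaner than the paper's write-up: using the biases $b_1=x_0$, $b_2=-(L/\epsilon)\sigma(x_0)$ instead of the paper's ``WLOG $x_0=0,\ \sigma(0)=0,\ \sigma'(0)=1$'' reduction, and sending $\epsilon\to 0$ inside the sup for fixed sample and signs (the paper instead fixes a sample-dependent $\delta_{M_S}$ and bounds the approximation error by half the main term before taking $\E_S$).

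The step that does not work as written is the depth-$l$ padding, which you yourself flag as the main obstacle. For a general $\sigma$ satisfying only the stated hypotheses, padded layers cannot be ``rescaled identities'': $\sigma$ is only approximately linear \emph{near $x_0$}, and in your construction the amplification by $L/\epsilon$ already occurs at the second layer, so any padding appended after it feeds a signal of size of order $L\|x\|$ into the subsequent activations, where the Taylor expansion at $x_0$ gives no control whatsoever (your ReLU remark is correct, but ReLU-type exact linearity is precisely what the proposition is meant to dispense with, as it extends Theorem 3.4 of Bartlett et al.\ to general activations). The repair is the arrangement used in the paper's proof: contract the signal at the first layer by a factor $\epsilon$ (there, $\delta/(ML)$), keep it inside an $O(\epsilon)$-neighborhood of $x_0$ through \emph{all} $l-1$ hidden activations, where each layer acts approximately as multiplication by $\sigma'(x_0)$, and amplify by $L/\epsilon$ only at the final affine layer, which carries no activation in the architecture (\ref{eq:F}); the accumulated deviation is then a telescoping sum of per-layer linearization errors, each of size $o(\epsilon)/\epsilon$, hence vanishes as $\epsilon\to 0$, while the harmless factor $|\sigma'(x_0)|^{\,l-1}$ is absorbed into $C$. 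With that reordering (and the per-layer error bookkeeping it entails), your argument coincides with the paper's proof, including the final Khintchine step and the constant's independence of $S$.
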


This proposition extends Theorem 3.4 in \cite{BFT17} to general activation functions and multi-class scenario, and the proof is presented in Appendix. 

The scaling issue refers to the fact that if the network Lipschitz $L\to\infty$, by this Lemma the upper bound (\ref{eq:rc-standard}) becomes trivial since $\RC_{n}(\gH_L)\to\infty$. On the other hand, the gradient descent method with logistic regression (cross-entropy) loss \citep{Telgarsky13} and exponential loss (boosting) \citep{soudry2017implicit} will drive weight estimates to approach infinity for max-margin classifiers when the data is linearly separable. In particular, the latter work shows the growth rate of weight estimates is $\log (t)$. As for the deep neural network with cross-entropy loss, the input of last layer is usually be viewed as features extracted from original input. Training the last layer with other layers fixed is exactly a logistic regression, and the feature is linearly separable as long as the training error achieves zero. Therefore, without any normalization, the hypothesis space during training has no upper bound on $L$, and thus the upper bound (\ref{eq:rc-standard}) is useless. 

To solve the scaling issue, in the following we are going to present normalization of margins and restricted Rademacher complexity within unit Lipschitz ball. We are going to see when such bounds are tight enough to predict generalization error based on training data.  

\subsection{Generalization Bounds by Normalized Margins and Restricted Rademacher Complexity}
The first remedy is to restrict our attention on $\gH_{1}$ by normalizing $f$ with its Lipschitz semi-norm $\|f\|_\gF$ or some tight upper bound estimates. Note that a normalized network $\nf=f/C$ has the same mis-classification error as $f$ for all $C>0$. For the choice of $C$, it's hard in practice to directly compute the Lipschitz semi-norm of a network, but instead some approximate estimates on the upper bound $L_f$ in (\ref{eq:Lip}) are available as discussed in Section \ref{app:est}. 

In the sequel, let $\nf=f/L_f$ be the normalized network and $\nh=h/L_f=\zeta(f, y)/L_f = \zeta(\nf, y)\in\gH_1$ be the corresponding normalized hypothesis function from (\ref{eq:H}). Now a simple idea is to regard $\RC_{n}(\gH_{1})$ as a constant when the model complexity is not over-expressive against data, then one can predict the tendency of generalization error via training margin error of the normalized network, that avoids the scaling issue and the exact computation of Rademacher complexity. In the following we present two bounds, with one on normalized margin error bound as the direct application of Lemma \ref{lem:margbound} and the other on quantile margin error bound as the inverse of the former that turns out to be more effective in applications.

\subsubsection{Normalized Margin Error Bound} 
The following theorem states that the probability of normalized test margins than $\gamma_1$ is controlled by the percentage of normalized training margins less than $\gamma_2>\gamma_1$, up to a constant $\RC_{n}(\gH_1)/(\gamma_2-\gamma_1)$ if the Rademacher complexity of unit ball $\RC_{n}(\gH_1)$ is not large. 


\begin{theorem}\label{thm:marg-err}
Given $\gamma_{1}$ and $\gamma_{2}$ such that $\gamma_2>\gamma_{1}\geq 0$ and $\Delta:=\gamma_2 - \gamma_1\geq 0$, for any $\delta>0$, with probability at least $1-\delta$, along the training epoch $t=1,\ldots,T$, the following holds for each $f_t$,
\begin{equation} \label{eq:margin-dis}
    \eP[\zeta(\nf_{t}(x),y) < \gamma_{1}] \leq\eP_n 1[\zeta(\nf_t(x),y)<\gamma_{2}] + \frac{C_{\gH}}{\Delta} + \sqrt{\frac{\log(1/\delta)}{2n}} 
\end{equation}
where $C_{\gH} = 4K\RC_{n}(\gH_1)$. 
\end{theorem}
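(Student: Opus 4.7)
The plan is to mimic the proof of Lemma \ref{lem:margbound}, replacing the one-threshold ramp loss $\ell_{\gamma_0}$ by the two-threshold loss $\ell_{(\gamma_1,\gamma_2)}$ and restricting all networks to the normalized class $\gH_1$ (equivalently, setting $L=1$). The loss $\ell_{(\gamma_1,\gamma_2)}$ has two key properties: it is $1/\Delta$-Lipschitz in its scalar argument, and it sandwiches the indicators of interest,
\[
1[\zeta<\gamma_1]\ \leq\ \ell_{(\gamma_1,\gamma_2)}(\zeta)\ \leq\ 1[\zeta<\gamma_2],
\]
which is immediate from its piecewise-linear definition.

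First I would use this sandwich to reduce the claim to a population--empirical gap bound on the ramp loss. Writing $\zeta_t := \zeta(\nf_t(x),y)$, the left inequality gives $\P[\zeta_t<\gamma_1] \leq \E\,\ell_{(\gamma_1,\gamma_2)}(\zeta_t)$, while the right inequality gives $\eP_n \ell_{(\gamma_1,\gamma_2)}(\zeta_t) \leq \eP_n 1[\zeta_t<\gamma_2]$. Hence it suffices to show that, with probability at least $1-\delta$,
\[
\sup_{\|\nf\|_\gF \leq 1}\ \E\,\ell_{(\gamma_1,\gamma_2)}(\zeta(\nf,y)) - \eP_n \ell_{(\gamma_1,\gamma_2)}(\zeta(\nf,y))\ \leq\ \frac{C_\gH}{\Delta} + \sqrt{\frac{\log(1/\delta)}{2n}}.
\]

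Next I would obtain this uniform gap bound by the standard symmetrization plus McDiarmid argument on the loss class $\gL = \{(x,y) \mapsto \ell_{(\gamma_1,\gamma_2)}(\zeta(\nf(x),y)) : \|\nf\|_\gF \leq 1\}$, which takes values in $[0,1]$; McDiarmid contributes the $\sqrt{\log(1/\delta)/(2n)}$ tail term, and symmetrization yields a $2\RC_n(\gL)$ complexity term. Talagrand's contraction lemma, applied with the $1/\Delta$-Lipschitz ramp, converts this to $(2/\Delta)\,\RC_n(\zeta\circ\gH_1)$. The remaining step is the multi-class reduction: writing $\zeta(\nf(x),y)=[\nf(x)]_y - \max_{j\neq y}[\nf(x)]_j$ and using a Koltchinskii--Panchenko-style decomposition of the $\max$ over $K-1$ coordinates, one controls $\RC_n(\zeta\circ\gH_1) \leq 2K\,\RC_n(\gH_1)$, exactly as in Lemma \ref{lem:margbound}. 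This produces the combined complexity constant $C_\gH/\Delta = 4K\,\RC_n(\gH_1)/\Delta$.

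Combining the sandwich with the uniform gap bound yields inequality (\ref{eq:margin-dis}) for every $\nf$ with $\|\nf\|_\gF \leq 1$ simultaneously; since each trained $f_t$ produces a normalized $\nf_t$ in this class by construction, the bound holds along the entire training trajectory $t=1,\ldots,T$ on a single high-probability event, and no additional union bound over epochs is needed. The main technical obstacle is the multi-class reduction that extracts the factor $K$ while respecting the $\max$ inside $\zeta$; the other ingredients --- sandwich, symmetrization, McDiarmid, and Talagrand contraction --- are the same routine steps that already underlie Lemma \ref{lem:margbound}.
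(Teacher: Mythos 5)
Your overall skeleton (the indicator sandwich, symmetrization plus McDiarmid, Talagrand contraction with the $1/\Delta$-Lipschitz ramp, and uniformity over the whole normalized class so that no union bound over epochs is needed) is exactly the paper's route. The gap is in the one step you treat as routine: the multi-class reduction. You assert $\RC_n(\zeta\circ\gH_1)\le 2K\,\RC_n(\gH_1)$ ``via a Koltchinskii--Panchenko-style decomposition of the max over $K-1$ coordinates, exactly as in Lemma \ref{lem:margbound}'', but Lemma \ref{lem:margbound} is stated without proof in the paper, and the decomposition you name does not deliver this constant. The difficulty is that in $\zeta(f(x),y)=[f(x)]_y-\max_{j\neq y}[f(x)]_j$ the index set of the max depends on the label. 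The natural implementation of your plan --- write $\max_{j\neq y_i}[f(x_i)]_j=\sum_{y\in\gY}1[y_i=y]\max_{j\neq y}[f(x_i)]_j$, peel off the label indicator (a factor $K$), then apply Lemma \ref{lem:maxrad} to the max over the remaining $K-1$ coordinates (a factor $K-1$) --- produces a bound of order $K^2\RC_n(\gH_1)$, which is precisely the ``typical'' $O(K^2)$ dependence the paper's closing remark contrasts itself with; it does not yield $C_\gH=4K\RC_n(\gH_1)$.

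The missing idea is the label-shifted surrogate used in the paper's proof (following \citep{kuznetsov2015rademacher}): $\zeta^{\theta}(f(x),y):=[f(x)]_y-\max_{y'\in\gY}\bigl([f(x)]_{y'}-\theta 1[y'=y]\bigr)$ with $\theta=2\gamma_2$, which satisfies $\ell_{(\gamma_1,\gamma_2)}(\zeta^{2\gamma_2}(f(x),y))=\ell_{(\gamma_1,\gamma_2)}(\zeta(f(x),y))$, so one may replace $\zeta$ by $\zeta^{2\gamma_2}$ inside the ramp loss before contraction. The max in $\zeta^{2\gamma_2}$ runs over all $K$ coordinates and its index set no longer depends on the label, so Lemma \ref{lem:maxrad} applies with only $K$ hypothesis classes, the offsets $-2\gamma_2 1[y'=y]$ vanish in Rademacher expectation, and the remaining $[f(x)]_{y}$ term is handled with the indicator trick; each piece costs $K\RC_n(\gH_1)$, giving $2K$, and the symmetrization factor $2$ then produces $4K/\Delta$. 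Without this device (or an alternative such as a vector-contraction inequality, which would change the constant and is not what you describe), your argument as written does not establish the stated $C_\gH/\Delta$ term, only a weaker $O(K^2)$ version.
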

\begin{remark}
In particular, when we take $\gamma_1=0$ and $\gamma_{2}=\gamma>0$, the bound above becomes,
\begin{equation} \label{eq:marg-err}
    \eP[\zeta(f_t(x),y)< 0] \leq\eP_n [\zeta(\nf_t(x_i),y_i)<\gamma] + \frac{C_{\gH}}{\gamma} + \sqrt{\frac{\log(1/\delta)}{2n}}
\end{equation}
\end{remark}

\begin{remark}
Recently \cite{Poggio_memo91} investigates for normalized networks, the strong linear relationship between cross-entropy training loss and test loss when the training epochs are large enough. However, the bound here is applied to the whole training process for all epoch $t$, that enables us to find early stopping time $t^*$ by looking at change points of $\eP_n 1[\zeta(\nf_t(x),y)<\gamma_{2}]$ in dynamics of high training margin distributions that will be discussed below. 
\end{remark}

Theorem \ref{thm:marg-err} says that, one can bound the normalized test margin distribution $\eP[\zeta(\nf_{t}(x),y) < \gamma_{1}]$ by the normalized training margin distribution $\eP_n [\zeta(\nf_t(x),y)<\gamma_{2}]$. In particular, one hopes to predict the trend of generalization (test) error by choosing $\gamma_{1}=0$ and a proper $\gamma>$ such that the high training margin errors $\eP_n [\zeta(\nf_t(x_i),y_i)<\gamma]$ enjoy a high correlation with test error up to a monotone transform. To achieve this goal, the following facts makes it possible. 
\begin{itemize}
\item First, we do not expect the bound, for example (\ref{eq:marg-err}), is tight for every choice of $\gamma>0$, instead we hope there exists some $\gamma$ such that the training margin error almost changes monotonically with generalization error. This indeed happens when the model complexity is not too much where one can not uniformly enlarge the high training margins. For example, Figure \ref{fig:marerr-spear} below shows the existence of such $\gamma$ when models are not too big by exhibiting rank correlations between training margin error at various $\gamma$ and test error, for a CNN trained on CIFAR10 dataset. Moreover, Figure \ref{fig:bcnn-fixrc} below shows that the training margin error at such a good $\gamma$ successfully recover the tendency of generalization error. 
\item Second, the normalizing factor is not necessarily to be an upper bound of Lipschitz semi-norm. The key point is to prevent the complexity term of the normalized network going to infinity. Since for any constant $c>0$, normalization by $\bar{L} = cL$ works in practice where the constant could be absorbed to $\gamma$, we could ignore the Lipschitz constant introduced by general activation functions in the hidden layers. 
\end{itemize}

However, such a strategy may fail. As shown by Example \ref{exmp:first} with Figure \ref{fig:bcnn-main} above, once the training margin distribution is uniformly improved, dynamic of training margin error fails to capture the change point (minimum) of generalization error in the early stage. This is because when network structure becomes complex and over-expressive enough against the data, the training margin distribution could be more easily improved. In this case the restricted Rademacher complexity $\RC_{n}(\gH_1)$ in Theorem \ref{thm:marg-err} will blow up such that it is invalid to bound the generalization error using merely the training margins, $\eP_n [\zeta(\nf_t(x_i),y_i)<\gamma]$, despite it is reduced in training. This is exactly the same observation in \cite{Breiman99} to doubt the margin theory in boosting type algorithms. More detailed discussions will be given in Section \ref{sect:over} with experiments. 

\subsubsection{Quantile Normalized Margin Error Bound}
A serious limitation of Theorem \ref{thm:marg-err} lies in we must fix a $\gamma$ along the whole training process. In fact, the first term and second term in the bound (\ref{eq:marg-err}) vary in the opposite directions with respect to $\gamma$, and thus it is possible that different $f_{t}$ at different $t$ may prefer different $\gamma$ for a trade-off. \emph{Can we adaptively choose good $\gamma_t$ at different $t$}? 

The answer is \emph{Yes}. In fact as shown in Figure \ref{fig:bcnn-main} (b) of Example \ref{exmp:first} above, while choosing $\gamma$ is to fix an $x$-coordinate section of margin distributions, another direction is to look for a $y$-coordinate section which enables different margins for different $f_t$. This motivates us to define the \emph{quantile margin} below. Let $\hat{\gamma}_{q,f}$ be the \emph{$q^{\textnormal{th}}$ quantile margin} of the network $f$ with respect to sample $S$, {\it i.e.} 
\begin{equation}\label{eq:qmargin}
    \hat{\gamma}_{q,f} = \inf \left\{\gamma: \eP_n 1[\zeta(f(x_i),y_i)\leq \gamma] \geq q \right\}.
\end{equation}

The following theorem bounds the generalization error by the inverse of quantile margins on training data. 
\begin{theorem}\label{thm:qmargin}
Assume the input space is bounded by $M>0$, that is $\|x\|_{2}\leq M,\ \forall x\in\gX$. Given a quantile $q\in [0,1]$, for any $\delta\in (0,1)$ and $\tau>0$, the following holds with probability at least $1-\delta$ for all $f_t$ satisfying $\hat{\gamma}_{q,\nf_t}>\tau$, 
\begin{equation} \label{eq:qmargin}
    \eP[\zeta(f_t(x),y)<0] \leq C_q + \frac{C_\gH}{\hat{\gamma}_{q,\nf_t}} 
\end{equation}
where $C_q= q + \sqrt{\frac{\log(2/\delta)}{2n}} + \sqrt{\frac{\log\log_2(4(M+l)/\tau)}{n}}$ and $C_\gH=8K\RC_n(\gH_{1})$.
\end{theorem}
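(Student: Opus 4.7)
The plan is to apply Theorem~\ref{thm:marg-err} (specifically the remark with $\gamma_{1}=0$, $\gamma_{2}=\gamma$) at a threshold close to the data-dependent quantile margin $\hat{\gamma}_{q,\nf_t}$. The main obstacle is that Theorem~\ref{thm:marg-err} fixes $\gamma$ \emph{a priori}, whereas $\hat{\gamma}_{q,\nf_t}$ depends on both $f_t$ and the sample. The standard remedy is a dyadic discretization plus union bound that converts the free choice of $\gamma$ into a logarithmic overhead and produces exactly the $\log\log_2(4(M+l)/\tau)$ factor inside $C_q$.

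First I would bound the range of $|\zeta(\nf_t(x),y)|$ uniformly over all normalized networks with $\|\nf_t\|_\gF\leq 1$. Using the assumption $\|x\|_{2}\leq M$ together with the Lipschitz property $\|\nf_t(x)-\nf_t(0)\|_2\leq M$, and a layer-by-layer estimate of $\|\nf_t(0)\|_2$ growing at most linearly in the depth $l$ (from the normalized biases), one obtains $|\zeta(\nf_t(x),y)|\leq 2(M+l)$. I would then lay down the geometric grid $\gamma_k = 2(M+l)\cdot 2^{-k}$ for $k=1,\ldots,K^*$ with $K^*=\lceil\log_2(4(M+l)/\tau)\rceil$, chosen so that every admissible quantile margin in $(\tau,\,2(M+l)]$ is within a factor $2$ of some grid point.

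Next I would invoke the normalized-margin bound of Theorem~\ref{thm:marg-err} (with $\gamma_1=0$, $\gamma_2=\gamma_k$) at each $k$ with confidence parameter $\delta_k=\delta/(2K^*)$, and union-bound over $k$; reserving an additional $\delta/2$ budget (which yields the $2$ inside $\log(2/\delta)$) produces an event of probability at least $1-\delta$ on which, for every $f_t$ and every $k$ simultaneously,
\begin{equation*}
\eP[\zeta(\nf_t(x),y)<0] \leq \eP_n[\zeta(\nf_t(x_i),y_i)<\gamma_k] + \frac{4K\RC_n(\gH_1)}{\gamma_k} + \sqrt{\tfrac{\log(2K^*/\delta)}{2n}}.
\end{equation*}
For the specific $f_t$ with $\hat{\gamma}_{q,\nf_t}>\tau$, I would pick $k^*$ as the largest index satisfying $\gamma_{k^*}\leq \hat{\gamma}_{q,\nf_t}$; by construction $k^*\leq K^*$ and $\gamma_{k^*}\geq \hat{\gamma}_{q,\nf_t}/2$.

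The last step is an algebraic assembly. The infimum definition of $\hat{\gamma}_{q,\nf_t}$ gives $\eP_n[\zeta(\nf_t(x_i),y_i)<\gamma_{k^*}]\leq q$; the Rademacher term obeys $4K\RC_n(\gH_1)/\gamma_{k^*}\leq 8K\RC_n(\gH_1)/\hat{\gamma}_{q,\nf_t}=C_\gH/\hat{\gamma}_{q,\nf_t}$; and the confidence term decomposes via $\sqrt{a+b}\leq\sqrt{a}+\sqrt{b}$ into $\sqrt{\log(2/\delta)/(2n)}$ plus a term bounded by $\sqrt{\log\log_2(4(M+l)/\tau)/n}$, exactly reproducing $C_q$. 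The main difficulty is therefore the interplay between the dyadic resolution (which controls the factor-$2$ blow-up in the Rademacher term, explaining the jump from $4K\RC_n$ in Theorem~\ref{thm:marg-err} to $8K\RC_n$ here) and the uniform range bound $2(M+l)$ (which determines the number of grid points, and hence the $\log\log$ overhead); once those are in place the remainder is routine bookkeeping.
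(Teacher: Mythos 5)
Your proposal is correct and takes essentially the same route as the paper's proof: bound the normalized margin by $2(M+l)$, lay down a dyadic grid of margin scales, union-bound the fixed-$\gamma$ bound of Theorem~\ref{thm:marg-err} over the scales, and pass to the grid point within a factor $2$ of $\hat{\gamma}_{q,\nf_t}$, which is exactly what produces the $8K\RC_n(\gH_1)$ and the $\log\log$ term, with $\eP_n[\zeta<\gamma_{k^*}]\le q$ from the quantile definition. The only differences are bookkeeping --- the paper uses an infinite sequence of scales with slack $\epsilon_k=\epsilon+\sqrt{\log k/n}$ and $\sum_k k^{-2}\le 2$, while you truncate at $K^*=\lceil\log_2(4(M+l)/\tau)\rceil$ and split $\delta$ uniformly, both yielding the stated $C_q$ --- plus one wording fix: since the grid $\gamma_k$ is decreasing in $k$, your $k^*$ should be the \emph{smallest} (not largest) index with $\gamma_{k^*}\le\hat{\gamma}_{q,\nf_t}$, which is what makes $\gamma_{k^*}\ge\hat{\gamma}_{q,\nf_t}/2$ valid.
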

\begin{remark}
We simply denote $\gamma_{q,t}$ for $\gamma_{q, \nf_t}$ when there is no confusion.
\end{remark}

Compared with the bound (\ref{eq:marg-err}), Theorem \ref{thm:qmargin} bound (\ref{eq:qmargin}) makes it possible to choose $\gamma_t$ varying with $f_t$ and the cost is an additional constant term in $C_q$ as well as the constraint $\hat{\gamma}_{q,t}>\tau$ that typically holds for large enough $q$ in practice. In applications, the stochastic gradient descent method often effectively improves the training margin distributions along with the reduction of training errors, a small enough $\tau$ and large enough $q$ usually meet $\hat{\gamma}_{q,t}>\tau$. Moreover, even with the choice $\tau=\exp(-B)$, constant term $ \sqrt{[\log\log_2(4(M+l)/\tau)]/n}=O(\sqrt{\log B/n})$ is still negligible and thus very little cost is paid in the upper bound. 

In practice, tuning $q\in [0,1]$ is far easier than tuning $\gamma>0$ directly and setting a large enough $q$ usually provides us lots of information about the generalization performance. The quantile margin works effectively when the dynamics of high margin distributions reflect the behaviour of generalization error, e.g. as shown in Figure \ref{fig:bcnn-main}. In this case, after certain epochs of training, the high margins have to be sacrificed to further improve the low margins for reducing the training loss, that typically indicates a possible saturation or overfitting in test error. 

\subsection{Estimate of Normalization Factors}
\label{app:est}

It remains to discuss on how to estimate the Lipschitz constant bound in (\ref{eq:Lip}). Given an operator $W$ associated with a convolutional kernel $w$, i.e. $Wx=w\ast x$,
there are two ways to estimate its operator norm. We begin with the following proposition, part (A) of which is adapted from the continuous version of Young's convolution inequality in $L_p$ space (see Theorem 3.9.4 in \cite{bogachev2007measure}), and part (B) of which is a generalization to multiple channel kernels widely used in convolutional networks nowadays. The proof is presented in Appendix \ref{app:convnorm}.

\begin{prop} \label{prop:convnorm} 
(A) For convolution operator $W$ with kernel $w\in \R^{\Size}$ where $\Size=(\Size_i)_{i=1}^d$ is the $d$-dimensional kernel size, there holds 
\begin{equation} 
\|w\ast x\|_2 \leq \|w\|_1 \|x\|_2. 
\end{equation}
In other words, $\|W\|_\sigma\leq \|w\|_1$. 

(B) Consider a multiple channel convolutional kernel $w\in \R^{\mathrm{C_{out}} \times \mathrm{C_{in}} \times \Size}$ with stride $S$, which maps input signal $x$ of $C_{in}$ channels to the output of $C_{out}$ channels by
\[ (W x)(u,c_{out}) =  [w\ast x](u,c_{out}) := \sum_{v,c_{in}} x(v,c_{in}) w(c_{out},c_{in},u-v), \]
where $x$ and $w$ are assumed of zero-padding outside its support. The following upper bounds hold.
\begin{enumerate}
\item Let $\|w\|_{\infty,\infty,1}:=\max_{i,j} \|w(j,i,\cdot)\|_1$, then
\begin{equation} \label{eq:B1}
\|w\ast x\|_2 \leq \sqrt{\|w\|_1\|w\|_{\infty,\infty,1}}\|x\|_2; 
\end{equation}
\item Let $D:=\prod_i \lceil {\mathrm{Size_i}}/S \rceil$ where $\ceil{t}:=\inf_k\{k\in\mathbb{Z}: k\geq t\}$, then
\begin{equation} \label{eq:B2}
\|w\ast x\|_2 \leq \sqrt{D\|w\|_1 \|w\|_{\infty}}\|x\|_2. 
\end{equation}
\end{enumerate} 
\end{prop}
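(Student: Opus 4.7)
The plan is to treat parts (A), (B.1), and (B.2) in increasing order of difficulty, using single-channel Young as the core step and then applying Cauchy--Schwarz in progressively more careful forms to absorb channels and strides. For (A), I would adapt the continuous Young inequality to the discrete setting by a Minkowski-type argument: write $(w\ast x)(u)=\sum_{v} w(v)\,(T_{v}x)(u)$, where $T_{v}$ denotes translation by $v$, and pull the outer $\ell^{2}$ norm inside the sum by the triangle inequality to get $\|w\ast x\|_{2}\le \sum_{v}|w(v)|\,\|T_{v}x\|_{2}$. Since translations are $\ell^{2}$ isometries, $\|T_{v}x\|_{2}=\|x\|_{2}$, and the right-hand side collapses to $\|w\|_{1}\|x\|_{2}$, giving $\|W\|_{\sigma}\le \|w\|_{1}$.

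For (B.1), the plan is a weighted Cauchy--Schwarz on the joint $(c_{in}, v)$ sum. Writing $(Wx)(u,c_{out}) = \sum_{c_{in},v} w(c_{out},c_{in},u-v)\, x(v,c_{in})$ and applying Cauchy--Schwarz with weights $|w(c_{out},c_{in},u-v)|^{1/2}$ yields
\begin{equation*}
|(Wx)(u,c_{out})|^{2}\le a_{c_{out}}\sum_{c_{in},v} |w(c_{out},c_{in},u-v)|\,|x(v,c_{in})|^{2},
\end{equation*}
where $a_{c_{out}}:=\sum_{c_{in}}\|w(c_{out},c_{in},\cdot)\|_{1}$ is independent of $u$. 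Summing over $u$ and $c_{out}$ and exchanging the order of summation makes the inner $u$-sum collapse to another $\ell^{1}$ norm of a kernel slice; using $\sum_{c_{out}} a_{c_{out}}=\|w\|_{1}$ together with $\|w(c_{out},c_{in},\cdot)\|_{1}\le \|w\|_{\infty,\infty,1}$ produces the desired $\|Wx\|_{2}^{2}\le \|w\|_{1}\|w\|_{\infty,\infty,1}\|x\|_{2}^{2}$.

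For (B.2), the stride must enter through counting rather than norms. I would run the same Cauchy--Schwarz step on the $(c_{in}, v)$ sum and then change variables $z=uS+v$ in the inner sum so that
\begin{equation*}
\sum_{u,v}|w(c_{out},c_{in},v)|\,|x(uS+v,c_{in})|^{2}=\sum_{z}|x(z,c_{in})|^{2}\!\!\sum_{\substack{v\in\mathrm{supp}(w)\\ v\equiv z\,(\bmod\,S)}}\!\!|w(c_{out},c_{in},v)|.
\end{equation*}
The key combinatorial observation is that for each coset of $S\mathbb{Z}^{d}$, the kernel support contains at most $D=\prod_{i}\lceil \Size_{i}/S\rceil$ offsets; bounding each contributing term by $\|w\|_{\infty}$ extracts a factor $D\|w\|_{\infty}$. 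Combined with $\sum_{c_{out}} b_{c_{out}}=\|w\|_{1}$, where $b_{c_{out}}$ is the outer Cauchy--Schwarz factor, this delivers $\|Wx\|_{2}^{2}\le D\|w\|_{1}\|w\|_{\infty}\|x\|_{2}^{2}$.

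The main obstacle I expect is the bookkeeping in (B.2): choosing the correct weighting inside the initial Cauchy--Schwarz so that no spurious factors of $C_{in}$ or $C_{out}$ survive, and carefully verifying the multi-dimensional coset count, especially when $\Size_{i}$ is not a multiple of $S$ so the ceiling is attained. A secondary consistency check is to ensure that (B.1) truly sharpens the crude $\|W\|_{\sigma}\le \|w\|_{1}$ whenever $\|w\|_{\infty,\infty,1}<\|w\|_{1}$, and that (B.2) recovers a meaningful bound in the stride-free regime $S=1$.
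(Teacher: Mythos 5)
Your proposal is correct, and for the multi-channel bounds (B.1) and (B.2) it is essentially the paper's own argument: the same Cauchy--Schwarz step with the split $|w|=\sqrt{|w|}\cdot\sqrt{|w|}$ over the joint $(v,c_{in})$ sum, followed by exchanging summations so the inner $u$-sum collapses to $\ell^1$ norms of kernel slices, then bounding by $\|w\|_1\|w\|_{\infty,\infty,1}$ or $D\|w\|_1\|w\|_\infty$. Two points of difference are worth noting. For (A), you replace the paper's Cauchy--Schwarz computation by a Minkowski-type argument, writing $w\ast x=\sum_v w(v)\,T_v x$ and using that translations are $\ell^2$ isometries; this is an equally valid (and arguably cleaner) proof of the $1$--$2$--$2$ Young inequality. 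For (B.2), the paper compresses the stride effect into the one-line claim $\sum_u|w(j,i,u-v)|\leq D\|w(j,i,\cdot)\|_\infty$, whereas your change of variables $z=uS+v$ and the coset count (at most $\lceil \Size_i/S\rceil$ kernel offsets per residue class mod $S$ in each dimension) is exactly the justification of that claim, spelled out explicitly; this is the only place in the proposition where real care is needed, and your version handles it correctly, including the case where $\Size_i$ is not a multiple of $S$.
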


\begin{remark}
For stride $S=1$, the upper bound (\ref{eq:B1}) is tighter than (\ref{eq:B2}), while for a large stride $S\geq 2$, the second bound (\ref{eq:B2}) might become tighter by taking into acount the effect of stride. 
\end{remark} 

In all these cases, the $\ell_1$-norm of $w$ dominates the estimates, so in the following we will simply call these bounds $\ell_1$-based estimates. Another method is given in \citep{miyato2018spectral} based on power iterations \citep{golub2001eigenvalue}, as a fast numerical approximation for the spectral norm of the operator matrix. We compare the two estimates in Figure \ref{fig:powiter}. It turns out both of them can be used to predict the tendency of generalization error using normalized margins and both of them will fail when the network has large enough expressive power. Although using the $\ell_1$-based estimate is very efficient, the power iteration method may be tighter and have a wider range of predictability.


However, a shortcoming of the power method is that it can not be directly applied to the ResNet blocks. In the remaining of this section, we will particularly discuss the treatment of ResNets. ResNet is usually a composition of the basic blocks shown in Figure \ref{fig:res18-basicblock} with short-cut structure. The following method is used in this paper to estimate upper bounds of spectral norm of such a basic block of ResNet. 
 


\begin{figure}[htbp]
\centering
\includegraphics[width=.7\textwidth]{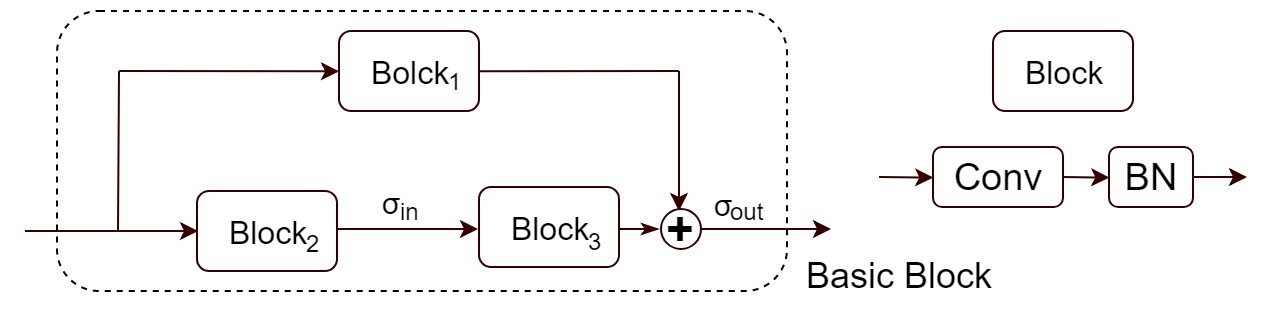}
\caption{A basic block in ResNets used in this paper. The shortcut consists of one block with convolutional and batch-normalization layers, while the main stream has two blocks. ResNets are constructed as a cascading of several basic blocks of various sizes.}\label{fig:res18-basicblock}
\end{figure}

\begin{itemize}
    \item[(a)] Convolution layer: its operator norm can be bounded either by the $\ell_1$-based estimate or by power iteration above.
    \item[(b)] Batch Normalization (BN): in training process, BN normalizes samples by $x^{+}=(x-\mu_B)/\sqrt{\sigma^2_B+\epsilon}$, where $\mu_B, \sigma^2_B$ are mean and variance of batch samples, while keeping an online averaging as $\hat{\mu}$ and $\hat{\sigma}^2$. Then BN rescales $x^{+}$ by estimated parameters $\hat{\alpha}, \hat{\beta}$ and output $\hat{x}=\hat{\alpha}x^{+}+\hat\beta$. Therefore the whole rescaling of BN on the kernel tensor $w$ of the convolution layer is $\hat{w}=w\hat\alpha/\sqrt{\hat\sigma^2+\epsilon}$ and its corresponding rescaled operator is $\|\hat{W}\|_{\sigma} = \|W\|_\sigma\hat\alpha/\sqrt{\hat{\sigma}^2+\epsilon}$. 
    \item[(b)] Activation and pooling: their Lipschitz constants can be known a priori, e.g.  $L_\sigma=1$ for ReLU and hence can be ignored. In general, $L_\sigma$ can not be ignored if they are in the shortcut as discussed below. 
    \item[(d)] Shortcut: In residue net with basic block in Figure \ref{fig:res18-basicblock}, one has to treat the mainstream $(\textnormal{Block}_2, \textnormal{Block}_3)$ and the shortcut $\textnormal{Block}_1$ separately. Since $\|f+g\|_{\gF}\leq\|f\|_{\gF}+\|g\|_{\gF}$, in this paper we take the Lipschitz upper bound by $L_{\sigma_{\textnormal{out}}}(\|\hat{W}_1\|_{\sigma} +  L_{\sigma_{\textnormal{in}}}\|\hat{W}_2\|_{\sigma}\|\hat{W}_3\|_{\sigma})$, where $\|\hat{W}_i\|_\sigma$ denotes a spectral norm estimate of BN-rescaled convolutional operator $W_i$. In particular $L_{\sigma_{\textnormal{out}}}$ can be ignored since all paths are normalized by the same constant, while $L_{\sigma_{\textnormal{in}}}$ can not be ignored due to its asymmetry.
\end{itemize}

\section{Experimental Results}\label{sect:exp}

The spirit of the following experiments is to show, \emph{when and how, the margin bound above could be used to numerically predict the tendency of generalization or test error along the training path?} We are going to show examples of both success and failure.  

\subsection{Networks and Datasets}
The networks and datasets used in the experiments are introduced in brief here. For the network, our illustration Example \ref{exmp:first} is based on a simple convolutional neural network whose architecture is shown in Figure \ref{fig:basic-structrue} (more details in Appendix Figure \ref{fig:cnns}), called \emph{basic CNN($c$)} here with $c$ channels that will be specified in different experiments below. Basically, it has five convolutional layers of $c$ channels at each, followed by batch normalization and ReLU, as well as a fully connected layer in the end. Furthermore, we consider various popular networks in applications, including AlexNet \citep{krizhevsky2012imagenet}, VGG-16 \citep{simonyan2014very} and ResNet-18 \citep{HZRS16}. For the dataset, we consider CIFAR10, CIFAR100 \citep{krizhevsky2009learning} and Mini-ImageNet \citep{vinyals2016matching}.

\begin{figure}[htbp]
\centering
\includegraphics[width=.7\textwidth]{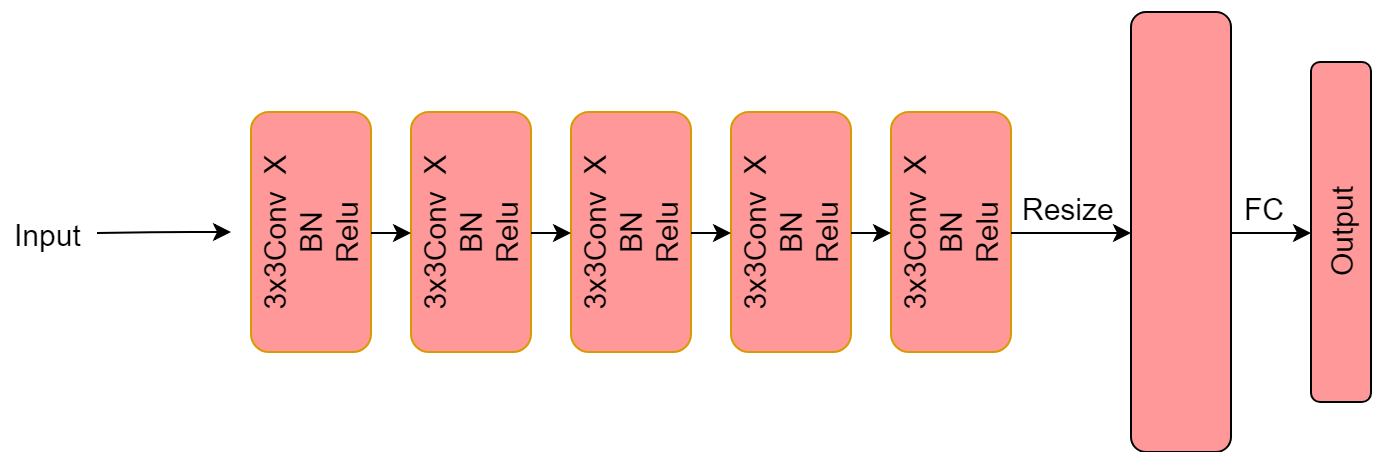}
\caption{Illustration of the architecture of basic CNN.}\label{fig:basic-structrue}
\end{figure}

\subsection{Success: Similar Phase Transitions in Training and Test Margin Dynamics}\label{sect:marg-err}
In this section, we show that when the expressive power of models are comparable to data complexity, the dynamics of training margin distributions and that of test margin distributions share similar phase transitions which enables us to predict generalization (test) error utilizing the theorems in this paper. In this experiment, we are going to demonstrate when there is a nearly monotone relationship between training margin error and test margin error such that Theorem \ref{thm:marg-err} and Theorem \ref{thm:qmargin} can be applied to predict the tendency of generalization (test) error. 
     
First let's consider training a basic CNN(50) on CIFAR10 dataset with and without random noise. The relations between test error and \emph{training margin error} $e_\gamma(\nf(x),y)$ with $\gamma=9.8$, \emph{inverse quantile margin} $1/\hat{\gamma}_{q,t}$ with $q=0.6$ are shown in Figure \ref{fig:bcnn-fixrc}. In this simple example where the network is small and the dataset is simple, the bounds (\ref{eq:margin-dis}) and (\ref{eq:qmargin}) show a good prediction power: they stop either near the epoch of sufficient training without noise (Left, original data) or before an overfitting occurs with noise (Right, 10 percents label corrupted).

\begin{figure}[htbp]
\centering
\includegraphics[width=.48\textwidth]{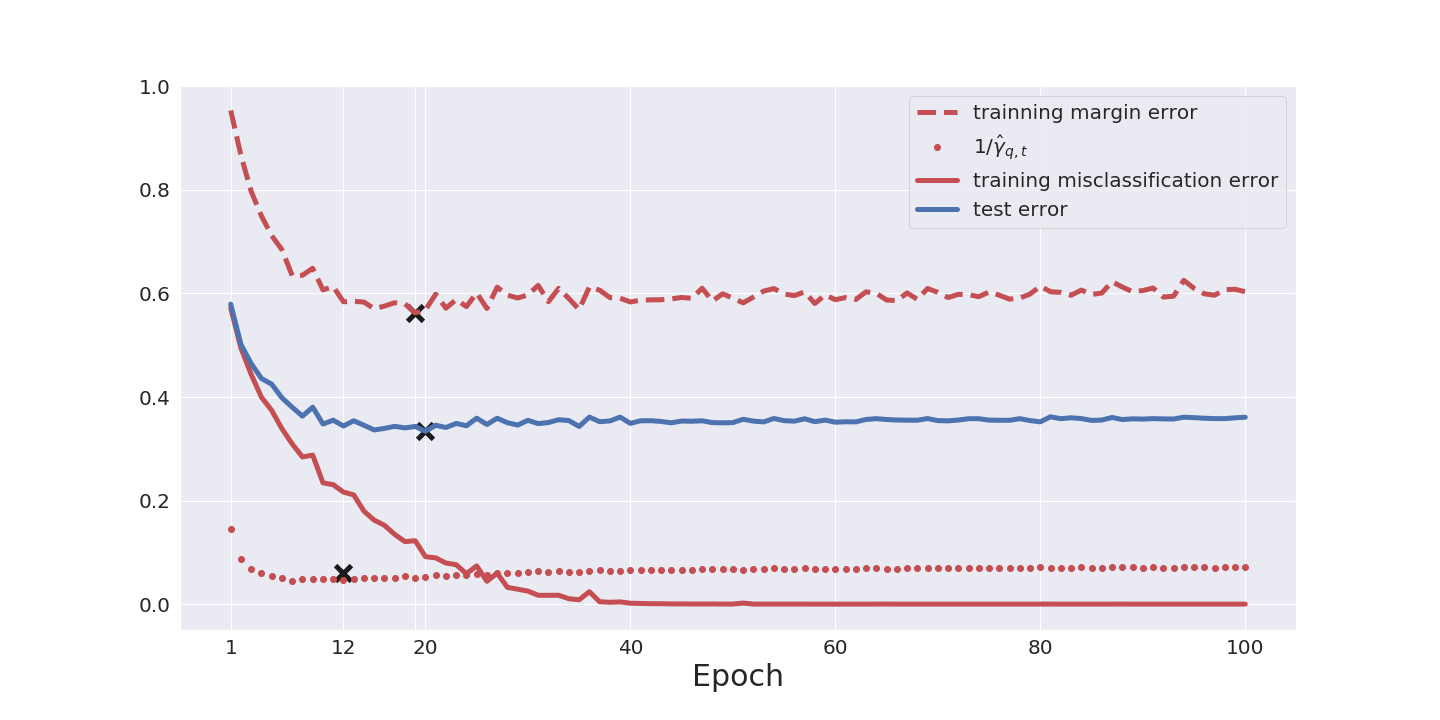}
\includegraphics[width=.48\textwidth]{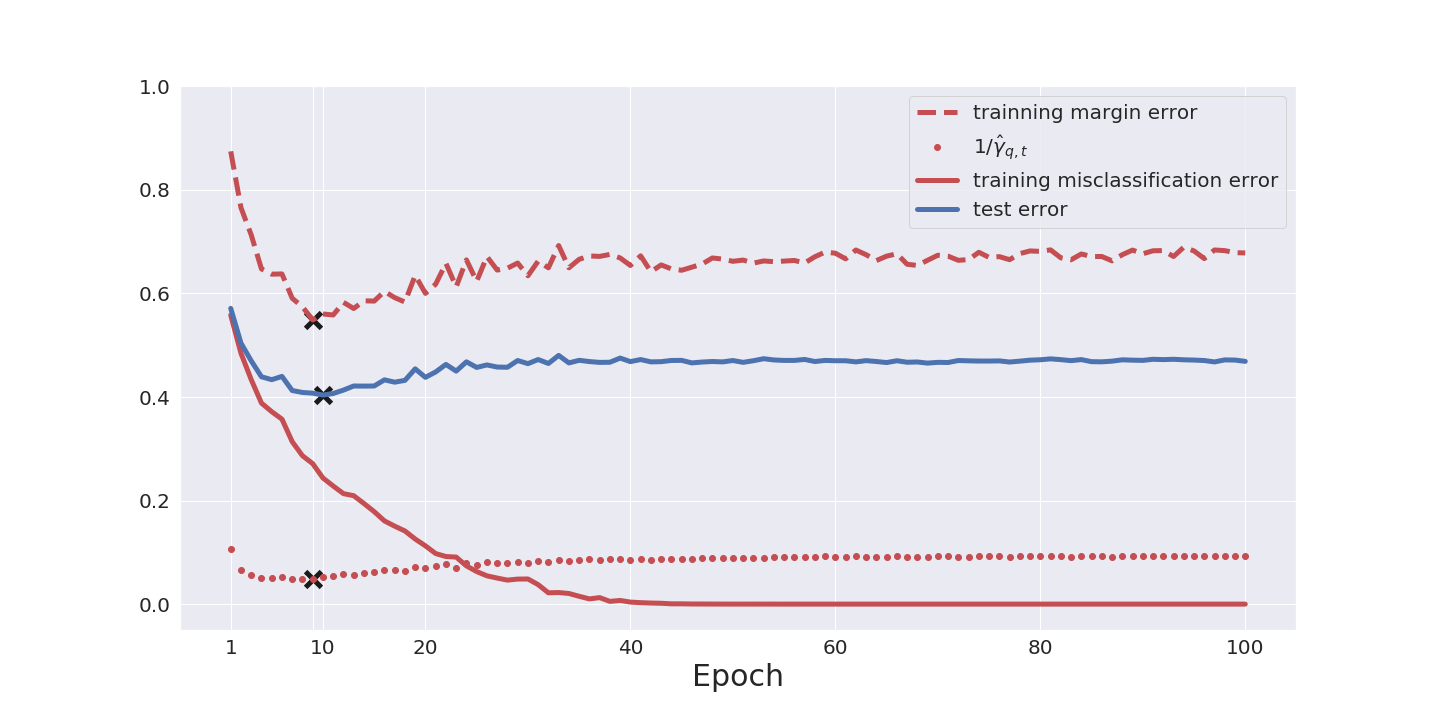}
\caption{Success examples. Net structure: basic CNN (50). Dataset: Original CIFAR10 (Left) and CIFAR10 with 10 percents label corrupted (Right). In each figure, we show training error (red solid), training margin error $\gamma=9.8$ (red dash) and inverse quantile margin (red dotted) with $q=0.6$ and generalization error (blue solid). The marker ``x" in each curve indicates the global minimum along epoch $1,\ldots,T$. Both training margin error and inverse quantile margin successfully predict the tendency of generalization error. }\label{fig:bcnn-fixrc}
\end{figure}

\begin{figure}[htbp]
\centering
\includegraphics[width=.45\textwidth]{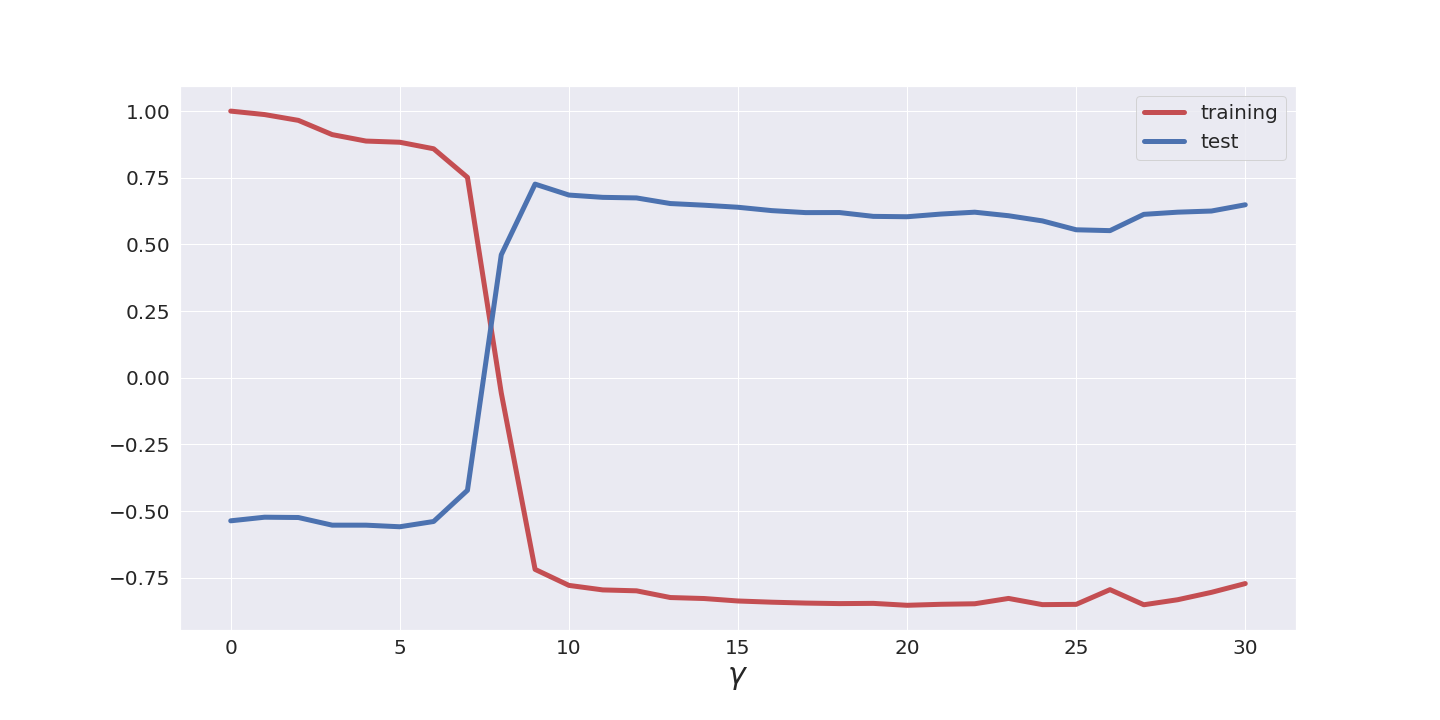}
\includegraphics[width=.45\textwidth]{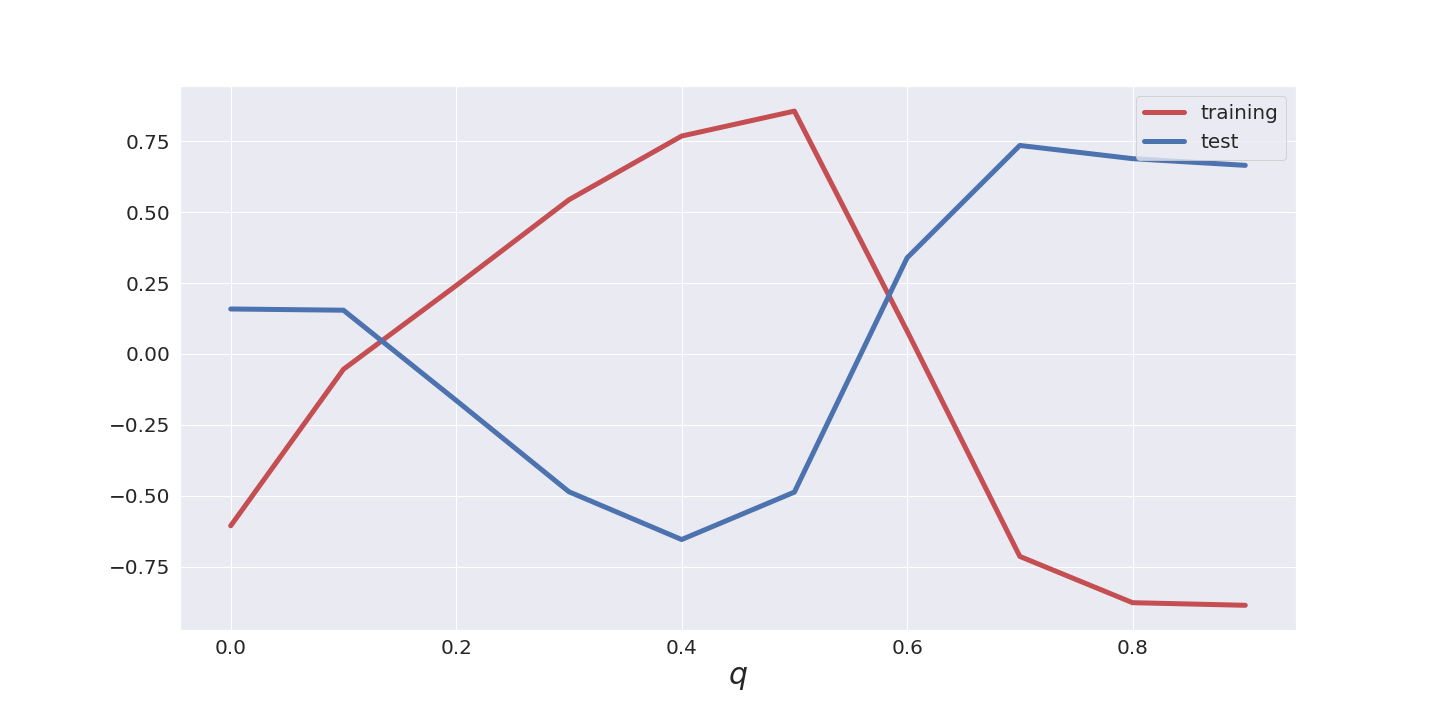}
\includegraphics[width=.45\textwidth]{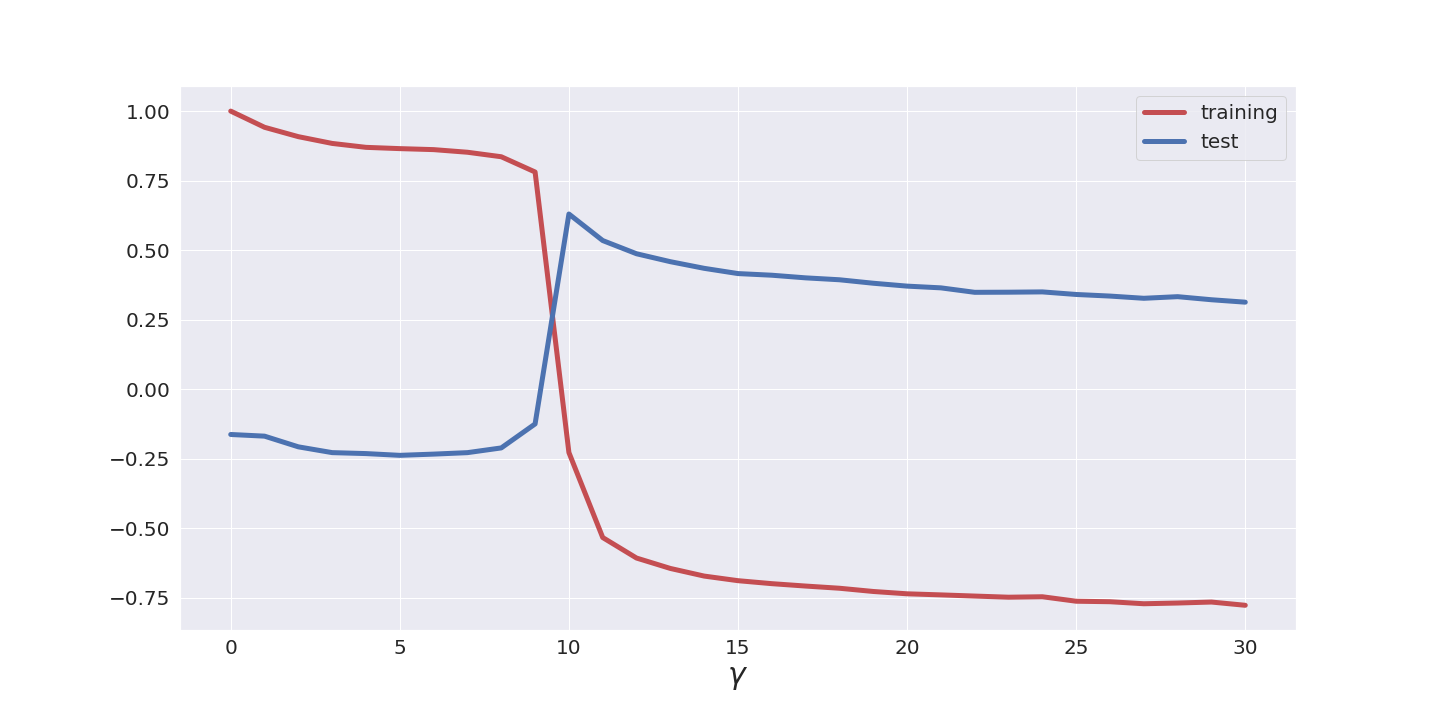}
\includegraphics[width=.45\textwidth]{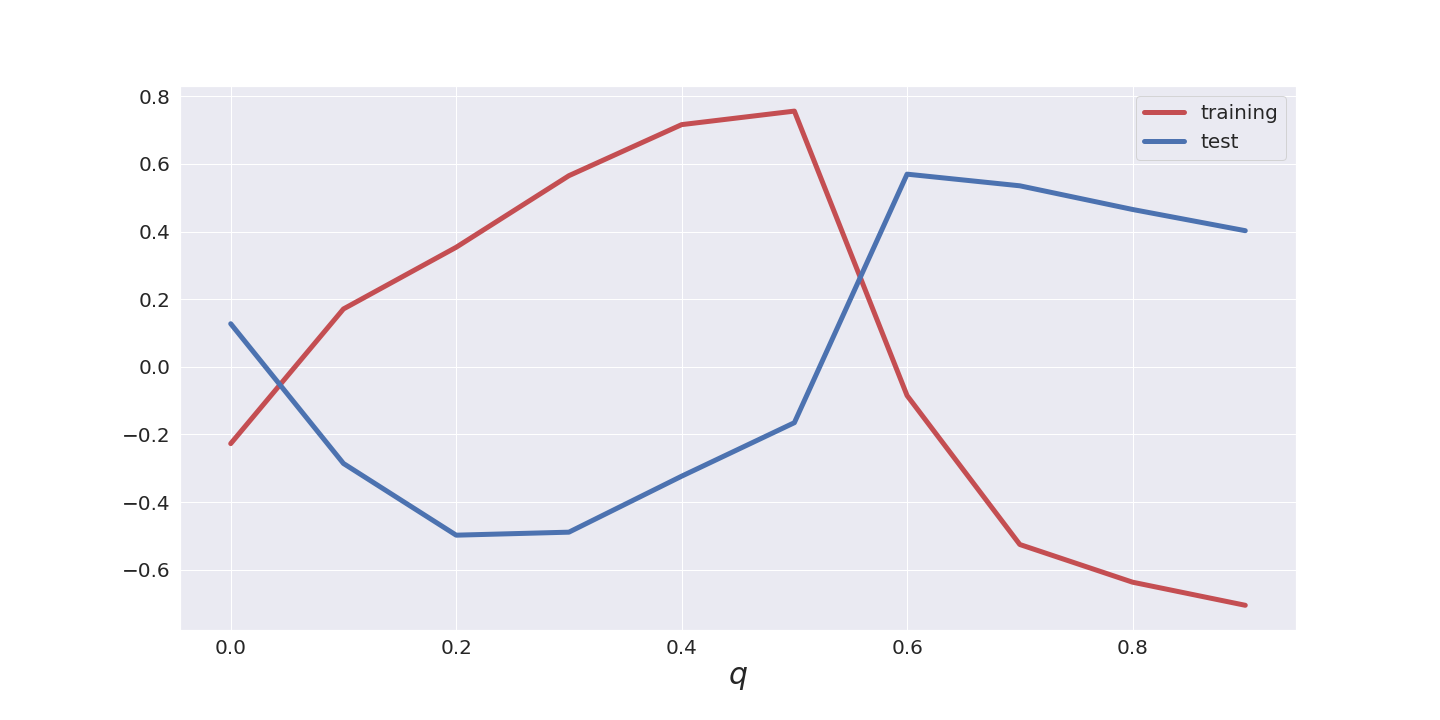}
\caption{Spearman's $\rho$ and Kendall's $\tau$ rank correlations between training (or quantile) margins and training errors, as well as training (or quantile) margins and test errors, at different $\gamma$ (or $q$, respectively). Net structure: Basic CNN(50). Dataset: CIFAR10. Top: Spearman's $\rho$ rank correlation. Bottom: Kendall's $\tau$ rank correlation. Left: Blue curves show rank correlations between training margin error and test (generalization) error, while Red curves show that between the training margin error and training error, at different $\gamma$. Right: Blue curves show rank correlations between inverse quantile margin and test error, and Red curves show that between inverse quantile margin and training error, at different $q$. Both Spearman's $\rho$ and Kendall's $\tau$ show qualitatively the same phenomenon that dynamics of large margins are closely related to the test errors in the sense that they have similar trends marked by large rank correlations. On the other hand, small margins are close to training errors in trend.}\label{fig:marerr-spear}
\end{figure}

Why does it work in this case? Here are some detailed explanations on its mechanism. The training margin error ($\eP_n [\zeta(\nf_t(x_i),y_i)<\gamma]$) and the inverse quantile margin ($1/\hat{\gamma}_{q,t}$) are both closely related to the dynamics of training margin distributions. Figure \ref{fig:bcnn-main} (b) actually shows that the dynamics of training margin distributions undergo a phase transition: while the low margins have a monotonic increase, the high or large margins undergo a phase transition from increase to decrease, indicated by the red arrows. Therefore different choices of $\gamma$ for the linear bounds (\ref{eq:margin-dis}) (a parallel argument holds for $q$ in (\ref{eq:qmargin})) will have different effects. In fact, the training margin error with a small $\gamma$ is close to the training error, while that with a large $\gamma$ is close to test error. Figure \ref{fig:marerr-spear} shows such a relation using rank correlations (in terms of Spearman-$\rho$ and Kendall-$\tau$\footnote{The Spearman's $\rho$ and Kendall's $\tau$ rank correlation coefficients measure how two variables are correlated up to a monotone transform and a larger correlation means a closer tendency.}) between training margin errors (or inverse quantile margins) and training errors, as well as training margin errors (or inverse quantile margins) and test errors, for each $\gamma$ (or $q$, respectively). In these plots one sees that the dynamics of large margins have a similar trend to the test errors, while small margins are close to training errors in rank correlations. Therefore for a good prediction, one can choose a large enough $\gamma=9.8$ (or $q=6.8$, respectively) at the peak point of rank correlation curve between training margins and test errors. Under such choices, the epoch when the phase transition above happens is featured with a \emph{cross-over} in dynamics of training margin distributions in Figure \ref{fig:bcnn-main} (b), and lives near the optima of the training margin error curve.

Although both the training margin error ($\eP_n [\zeta(\nf_t(x_i),y_i)<\gamma]$) and the inverse quantile margin ($1/\hat{\gamma}_{q,t}$) can be used here to successfully predict the trend of test (generalization) error, the latter can be more powerful in our studies. In fact, dynamics of the inverse quantile margins can adaptively select $\gamma_t$ for each $f_t$ without access to the complexity term. Unlike merely looking at the training margin error with a fixed $\gamma$, quantile margin bound (\ref{eq:qmargin}) in Theorem \ref{thm:qmargin} shows a stronger prediction power than (\ref{eq:marg-err}) and is even able to capture more local optima. In Figure \ref{fig:bcnn-qmar-twoloc}, the test error curve has two valleys corresponding to a local optimum and a global optimum, and the quantile margin curve with $q=0.95$ successfully identifies both. However, if we consider the dynamics of training margin errors, it's rarely possible to recover the two valleys at the same time since their critical thresholds $\gamma_{t_{1}}$ and $\gamma_{t_{2}}$ are different. Another example of ResNet-18 is given in Figure \ref{fig:res-qmar-twoloc} in Appendix.

\begin{figure}[htbp]
\centering
\includegraphics[width=.48\textwidth]{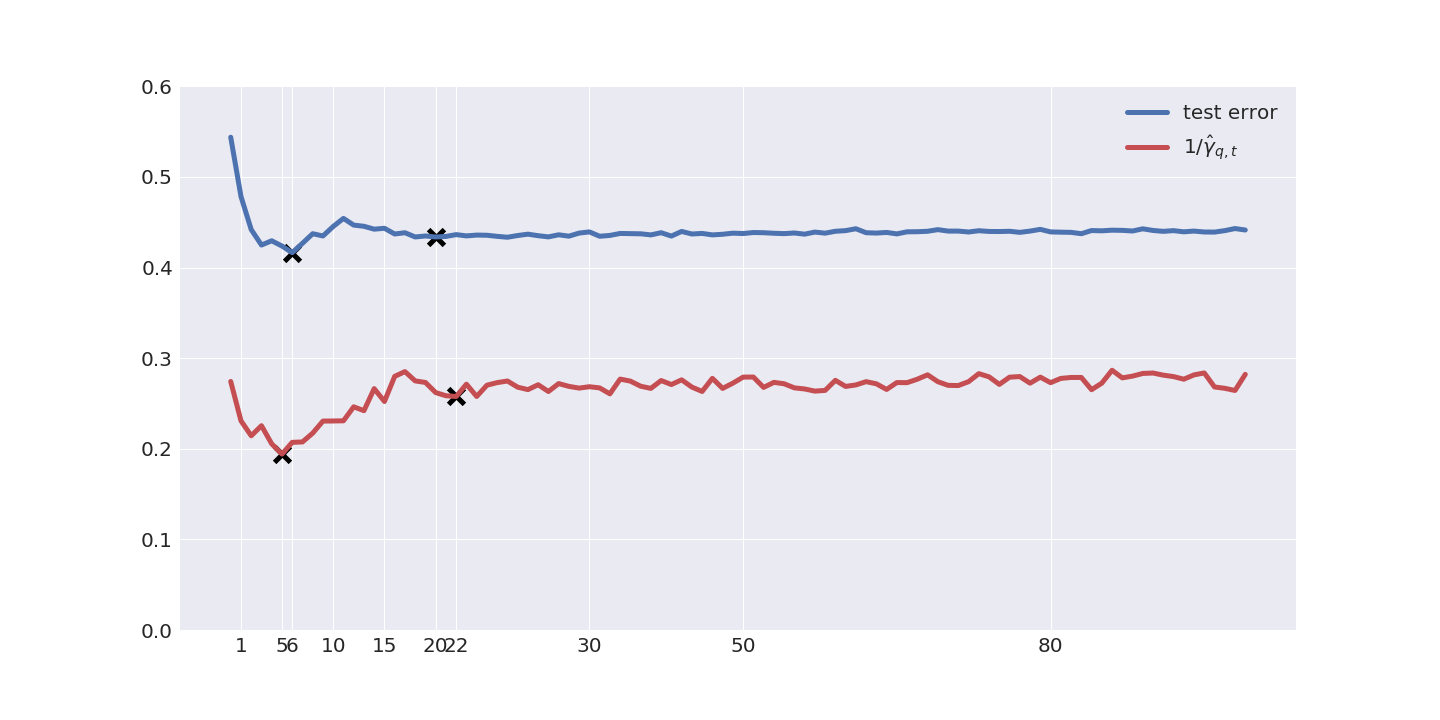}
\includegraphics[width=.48\textwidth]{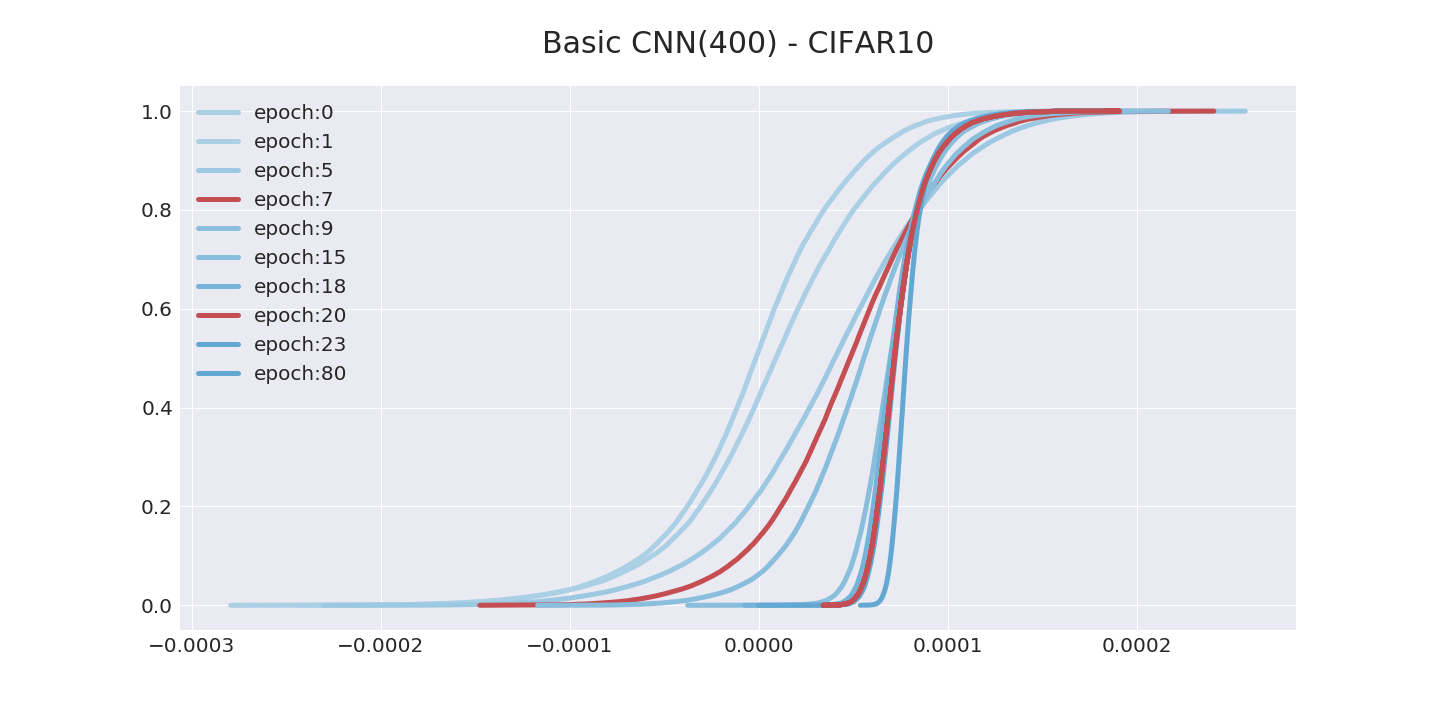}
\caption{Inverse quantile margin. Net structure: CNN(400). Dataset: CIFAR10 with 10 percents label corrupted. Left: the dynamics of test error (blue) and inverse quantile margin with $q=0.95$ (red). Two local minima are marked by ``x" in each curve. Right: dynamics of training margin distributions, where two distributions in red color correspond to when the two local minima occur. The inverse quantile margin successfully captures two local minima of test error.}\label{fig:bcnn-qmar-twoloc}
\end{figure}

In a summary, when training and test margin dynamics share similar phase transitions, both theorems we developed can be used to predict test (generalization) error via normalized training margins, even leaving us data-dependent early stopping rule to avoid overfitting when data is noisy. However, below we shall see a different scenario when training and test margin dynamics are of distinct phase transitions, such a prediction fails as Breiman's dilemma.

\subsection{Failure: Distinct Phase Transitions in Margin Dynamics and Breiman's Dilemma}\label{sect:over}
In this part, when model complexity arbitrarily increase to be over-expressive against the dataset, the training margins can be monotonically improved, while high test margin dynamics undergoes a distinct phase transition of decrease-increase. In this case the prediction power of training margin based bounds is lost and overfitting may set in. This exhibits Breiman's dilemma in neural networks. 

We conduct three sets of experiments in the following. 

\subsubsection{Experiment I: Basic CNNs on CIFAR10} 

\begin{figure}[H]
\centering
\includegraphics[width=.32\textwidth]{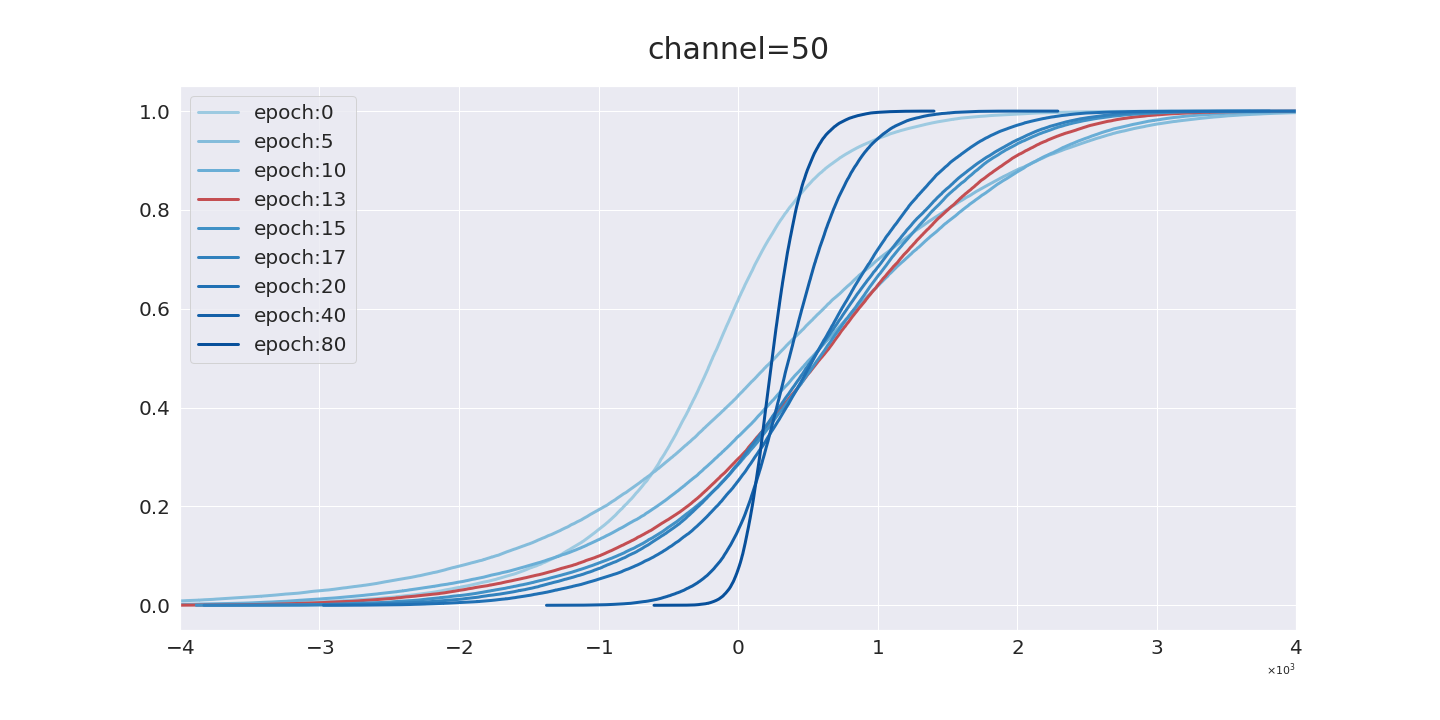}
\includegraphics[width=.32\textwidth]{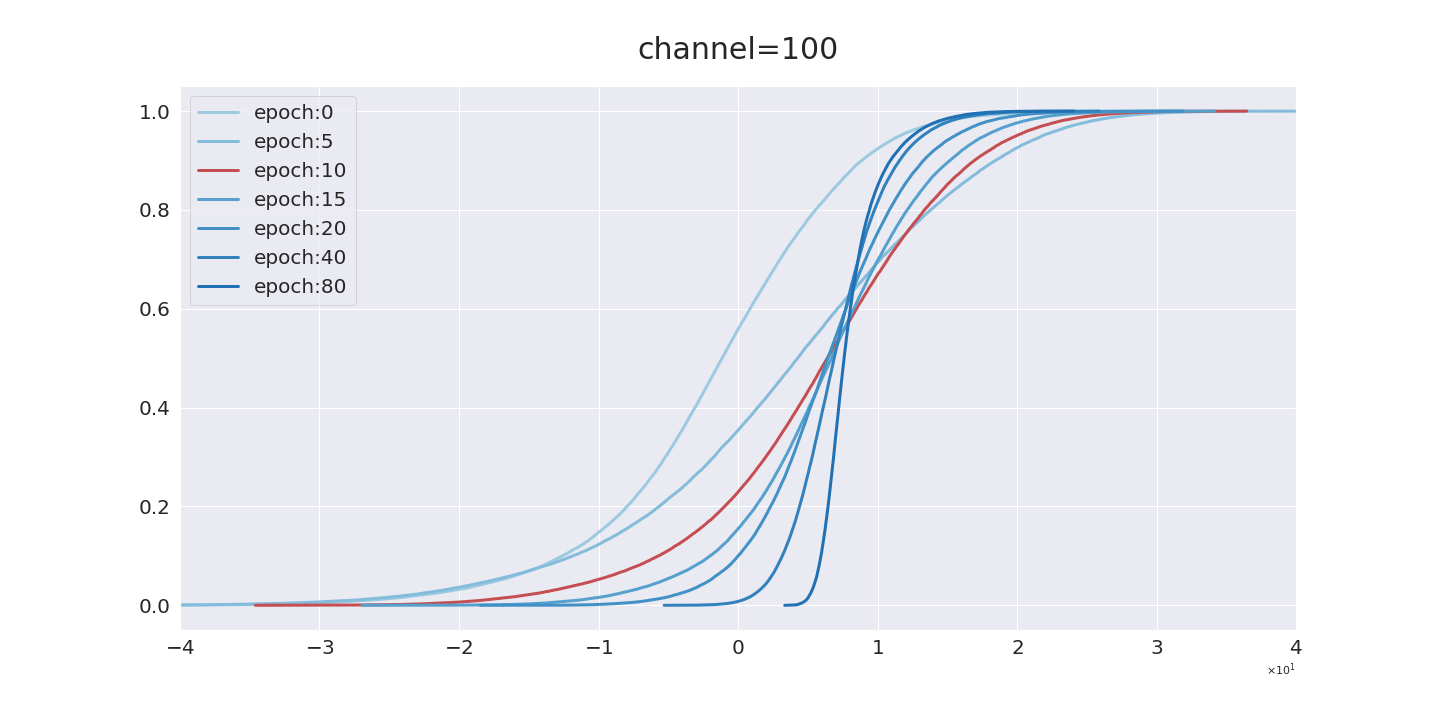}
\includegraphics[width=.32\textwidth]{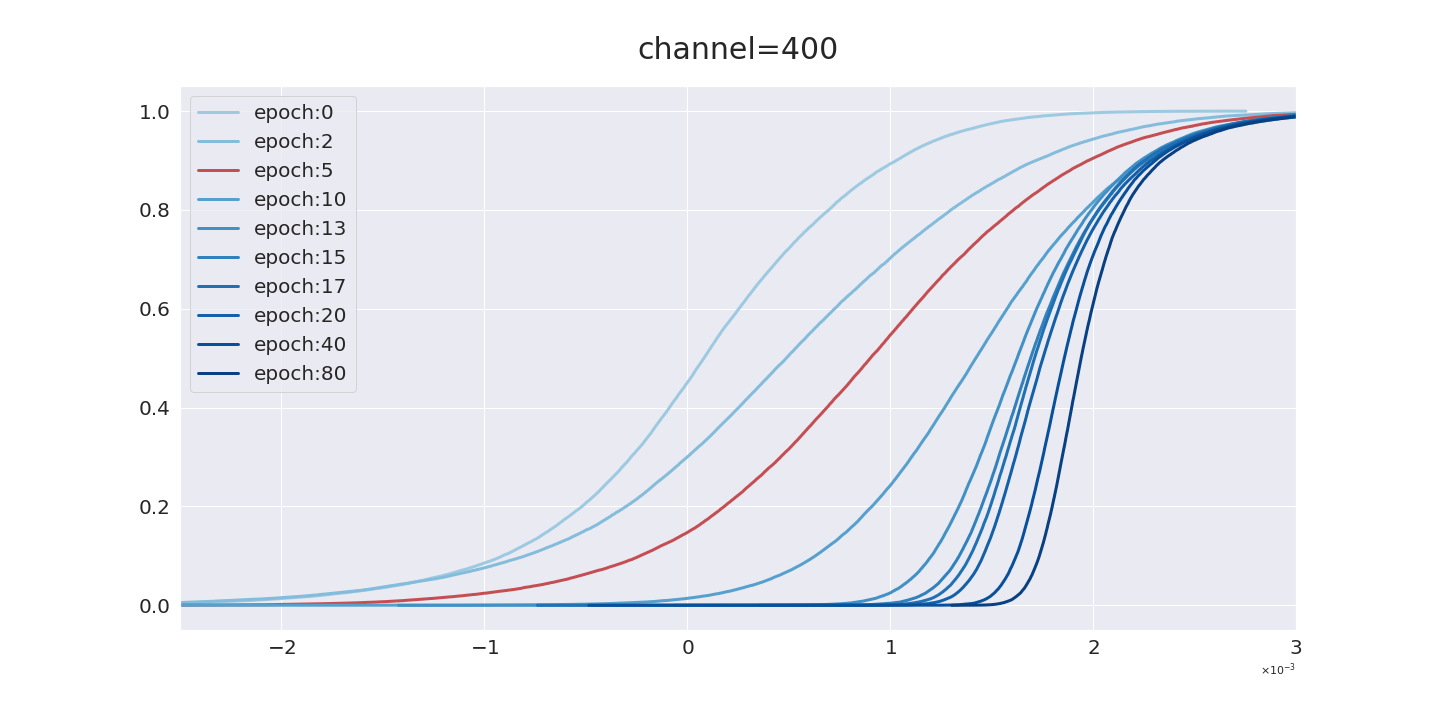}
\includegraphics[width=.32\textwidth]{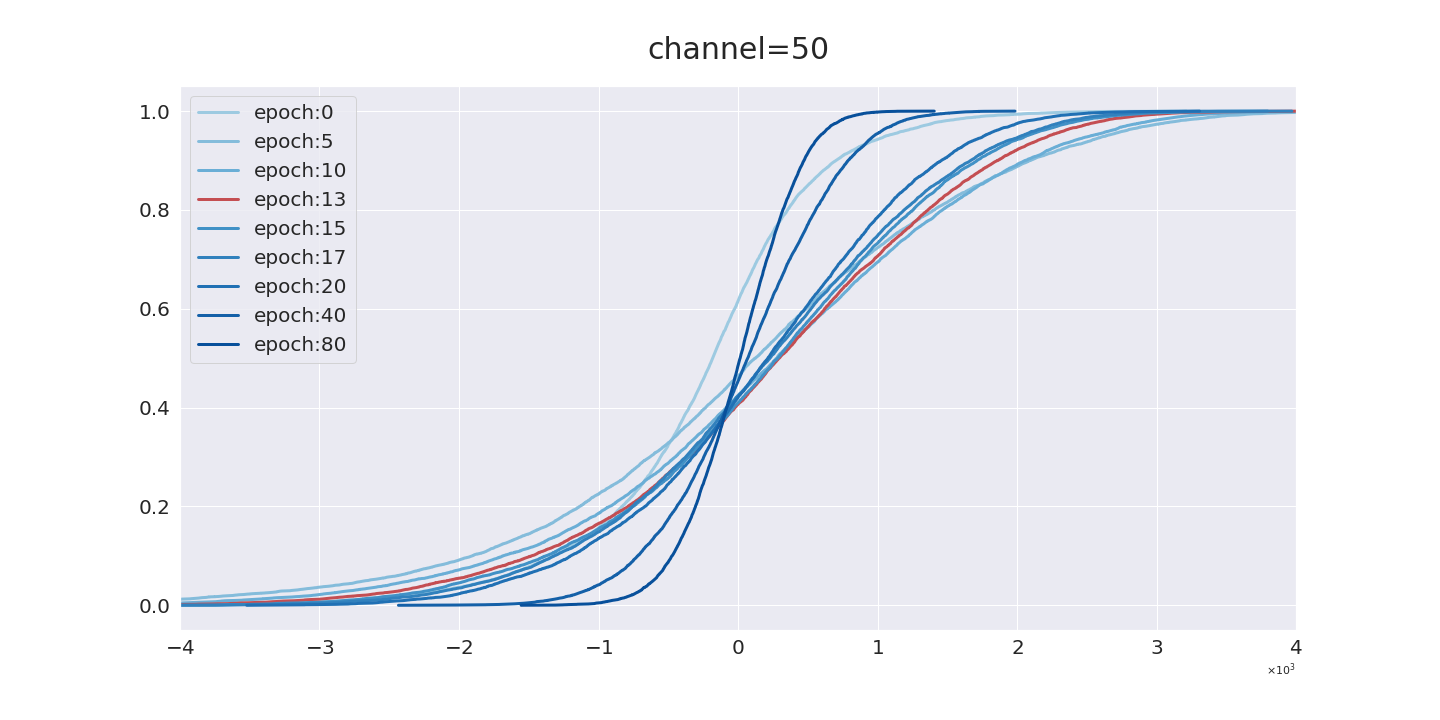}
\includegraphics[width=.32\textwidth]{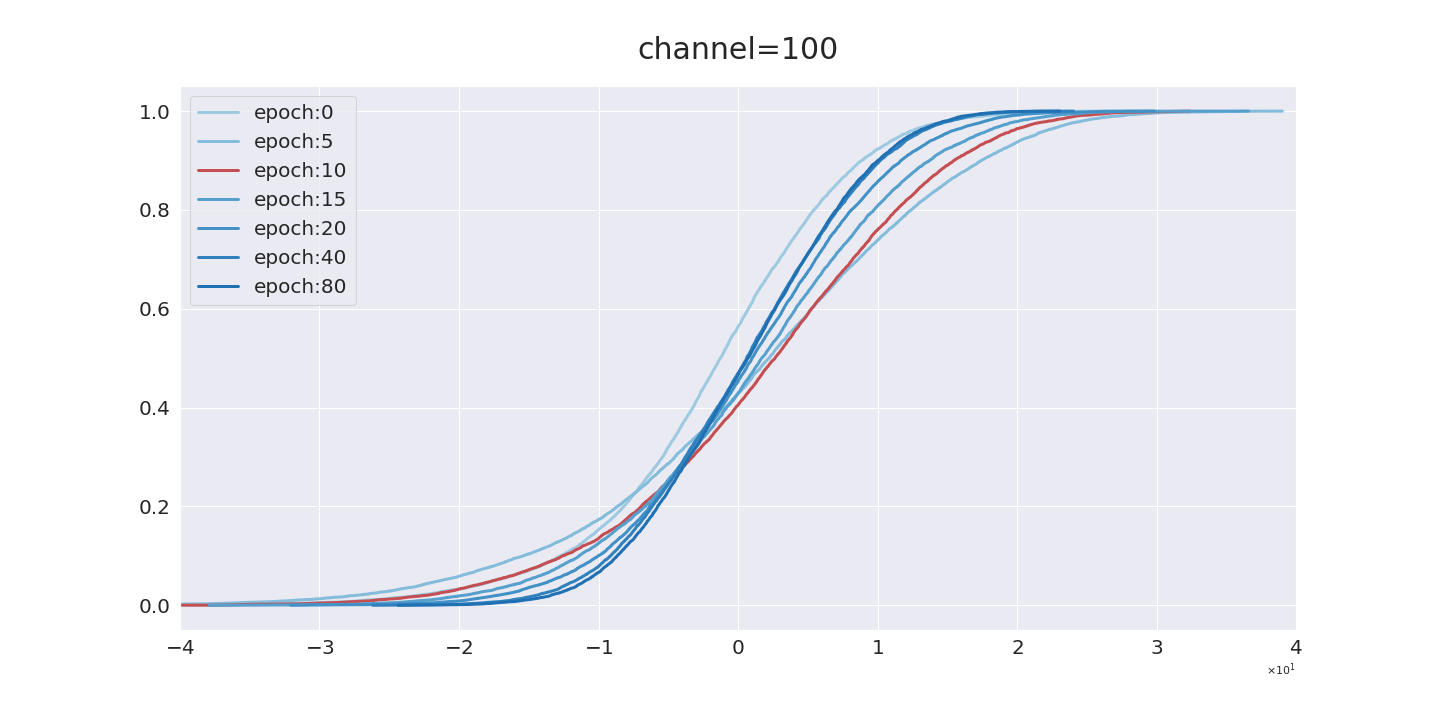}
\includegraphics[width=.32\textwidth]{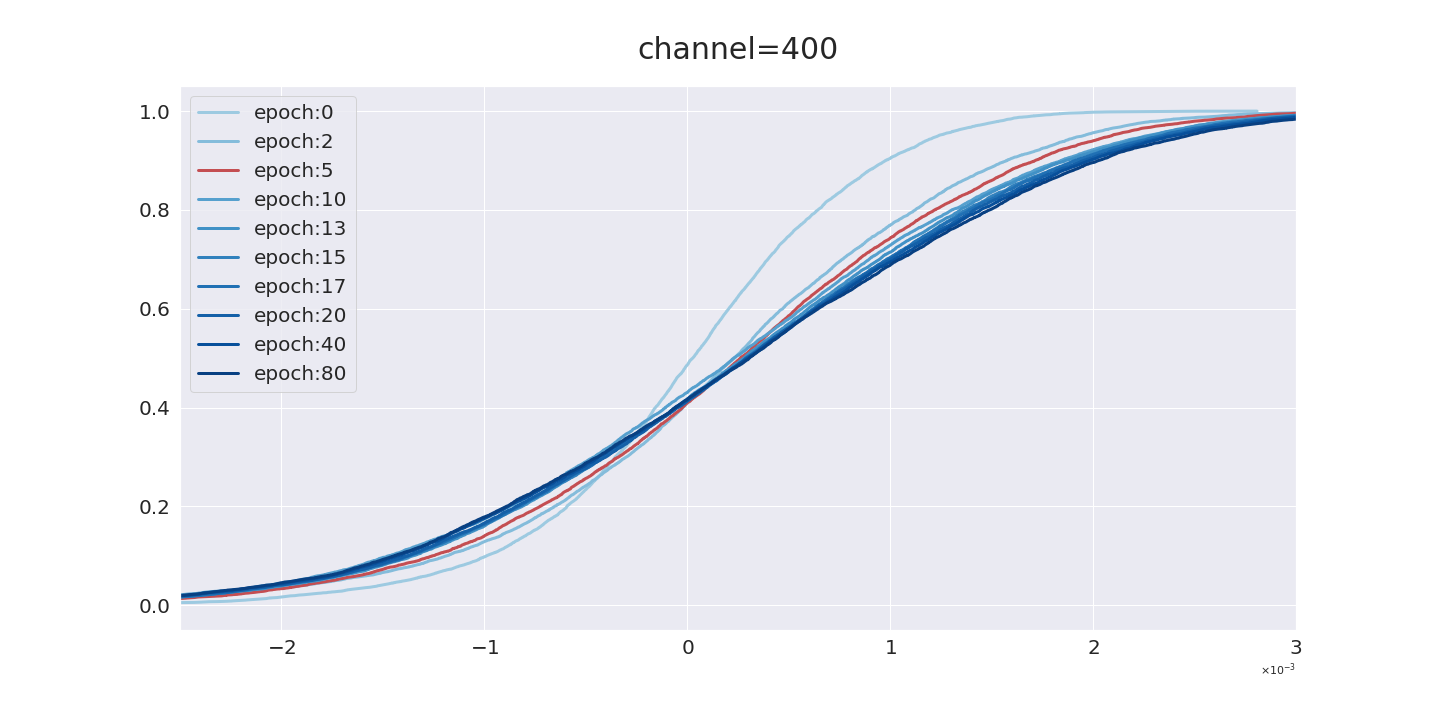}
\includegraphics[width=.32\textwidth]{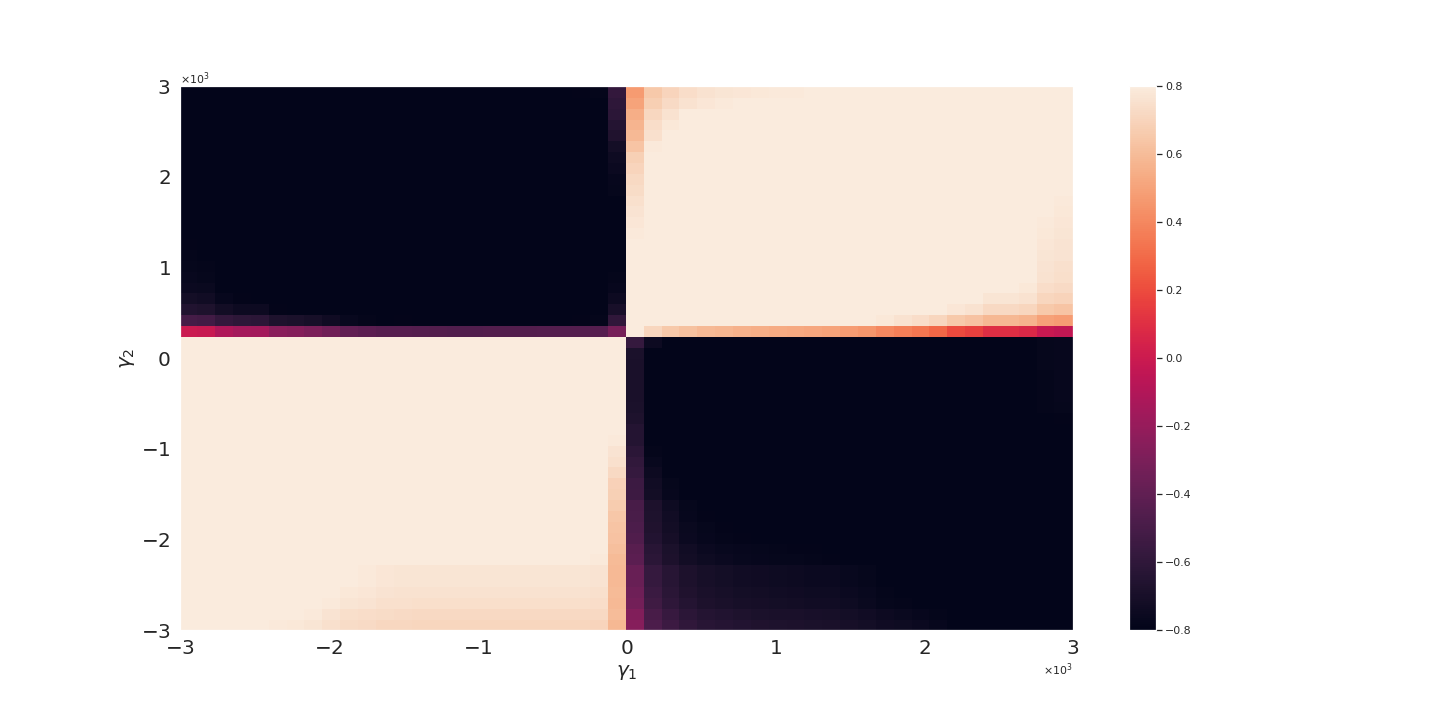}
\includegraphics[width=.32\textwidth]{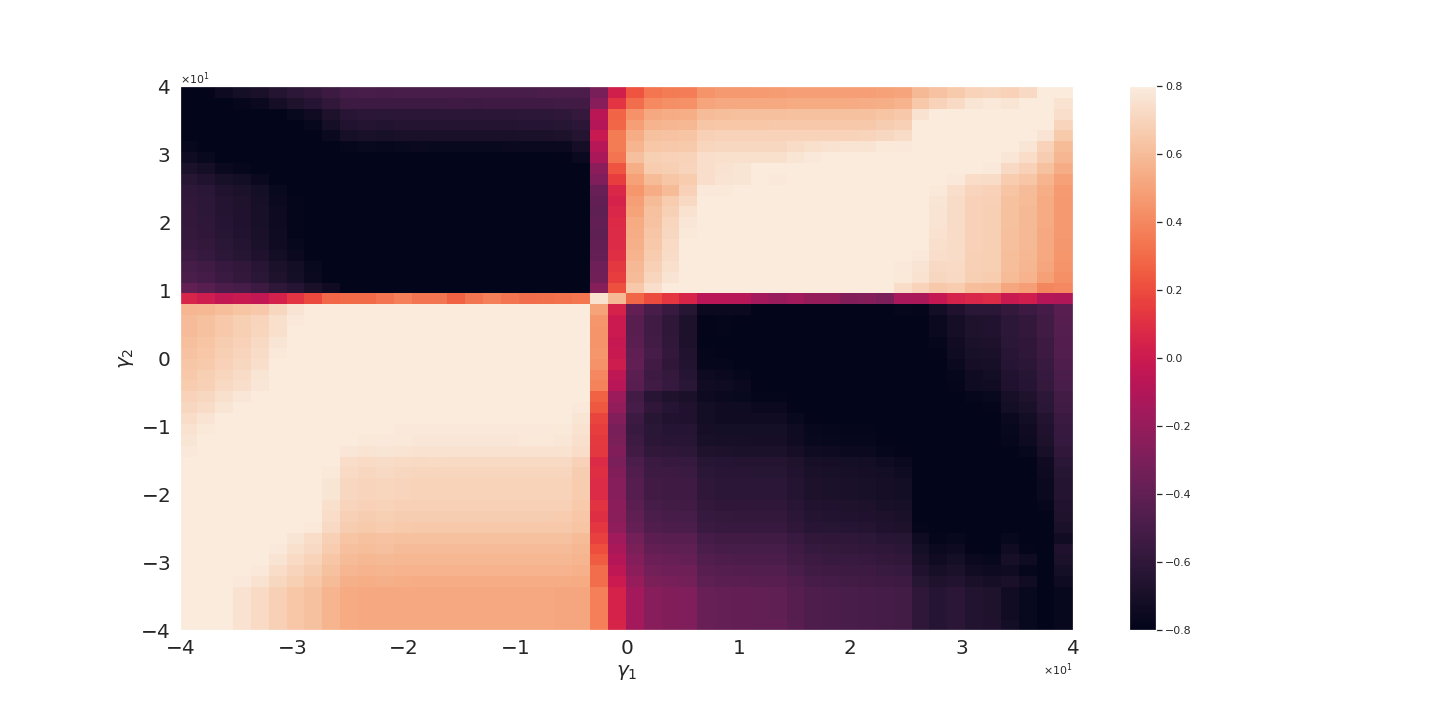}
\includegraphics[width=.32\textwidth]{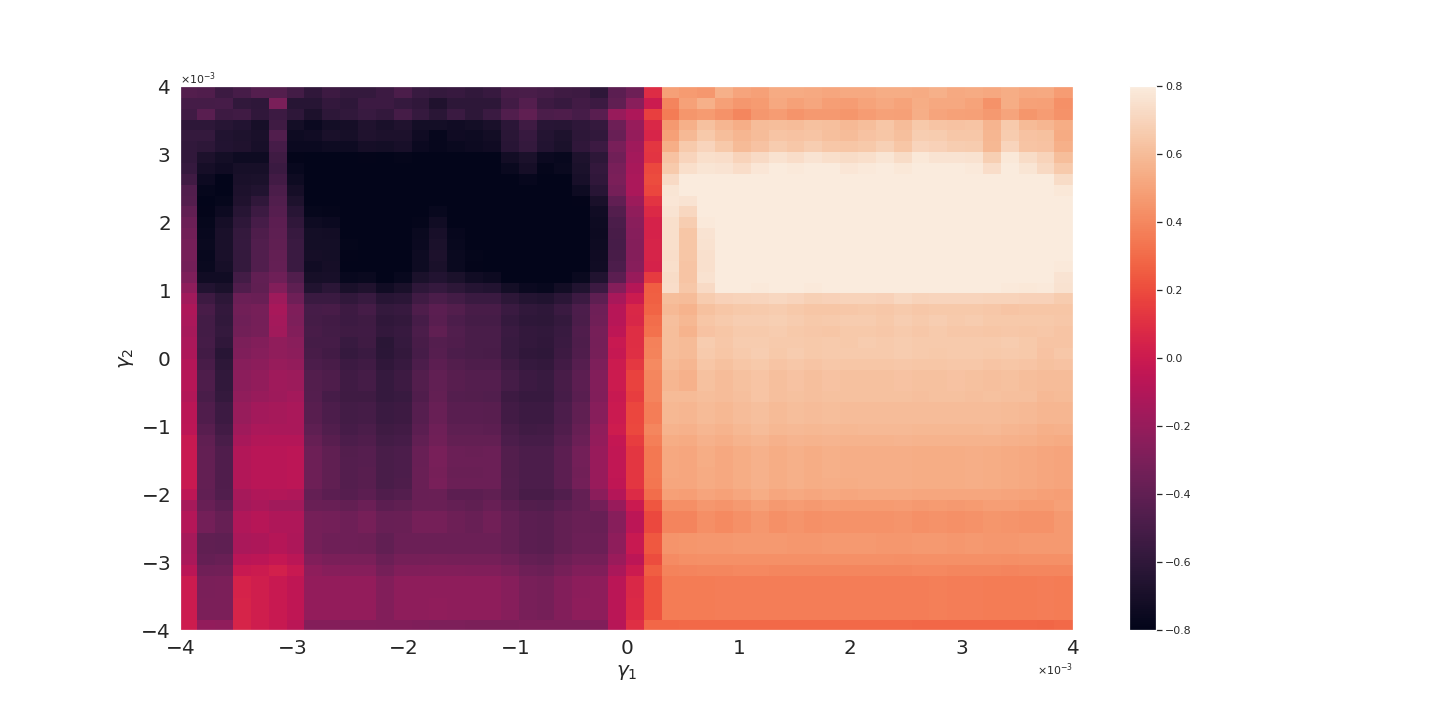}
\caption{Breiman's Dilemma I: comparisons between dynamics of test margin distributions and training margin distributions. Net structure: Basic CNN(50) (Left), Basic CNN(100) (Middle), Basic CNN(400) (Right). Dataset: CIFAR10 with 10 percent labels corrupted. First row: evolutions of training margin distributions. Second row: evolutions of test margin distributions. Third row: heatmaps are Spearman-$\rho$ rank correlation coefficients between dynamics of training margin error ($\eP_n[e_{\gamma_2}(\nf(x_i),y_i)]$) and dynamics of test margin error ($\P[e_{\gamma_1}(\nf_t(x),y)]$) drawn on the $(\gamma_1,\gamma_2)$ plane. CNN(50) and CNN(100) share similar phase transitions in training and test margin dynamics while CNN(400) does not. When model becomes over-representative to dataset, training margins can be monotonically improved while test margins can not be, losing the predictability.}\label{fig:phase-fixdata}
\end{figure}

In the first experiment shown in Figure \ref{fig:phase-fixdata}, we fix the dataset to be CIFAR10 with 10 percent of labels randomly permuted, and gradually increase the channels from basic CNN(50) to CNN(400). For CNN(50) (\#(parameters) is 92,610) and CNN(100) (\#(parameters) is 365,210), both training margin dyamics and test margin dynamics share a similar phase transition during training: small margins are monotonically improved while large margins are firstly improved then dropped afterwards. The last row in Figure \ref{fig:phase-fixdata} shows the heatmaps as Spearman-$\rho$ rank correlations between these two dynamics drawn in $\gamma_1$-$\gamma_2$ plane. The block diagonal structures in the rank correlation heatmaps illustrates such a similarity in phase transitions. To be specific, small (or large) margins in both training margins and test margins share high level rank correlations marked by diagonal blocks in light color, while the difference between small and large margins are marked by off-diagonal blocks in dark color. Particularly at $\gamma_1=0$, the test (generalization) error dynamics can be predicted using large training margins, as their rank correlations are high. 

However, as the channel number increases to CNN(400) (\#(parameters) is 5,780,810), the dynamics of the training margin distributions becomes a monotone improvement without the phase transition above. This phenomenon is not a surprise since with a strong representation power, the whole training margin distribution can be monotonically improved without sacrificing the large margins. On the other hand, the generalization or test error can not be monotonically improved. The heatmap of rank correlations between training and test margin dynamics thus exhibits such a distinction in phase transitions by changing the block diagonal structure above to double column blocks for CNN(400). In particular, for $\gamma_1\leq 0$, test margin dynamics have low rank correlations with all training margin dynamics as they are of different phase transitions in evolutions. As a result, one CAN NOT predict test error at $\gamma=0$ using training margin dynamics.

\subsubsection{Experiment II: CNN(400) and ResNet-18 on CIFAR100 and Mini-ImageNet}
In the second experiment shown in Figure \ref{fig:phase-fixnet}, we compare the normalized margin dynamics of training CNN(400) and ResNet-18 on two different datasets, CIFAR100 and Mini-ImageNet. CIFAR100 is more complex than CIFAR10, but less complex than Mini-ImageNet. It shows that: (a) CNN(400) does not have an over-expressive power on CIFAR100, whose normalized training margin dynamics exhibits a phase transition -- a sacrifice of large margins to improve small margins during training; (b) ResNet-18 does have an over-expressive power on CIFAR100 by exhibiting a monotone improvement on training margins, but loses such a power in Mini-ImageNet with phase transitions of training margin dynamics. 

\begin{figure}[H]
\centering
\includegraphics[width=.32\textwidth]{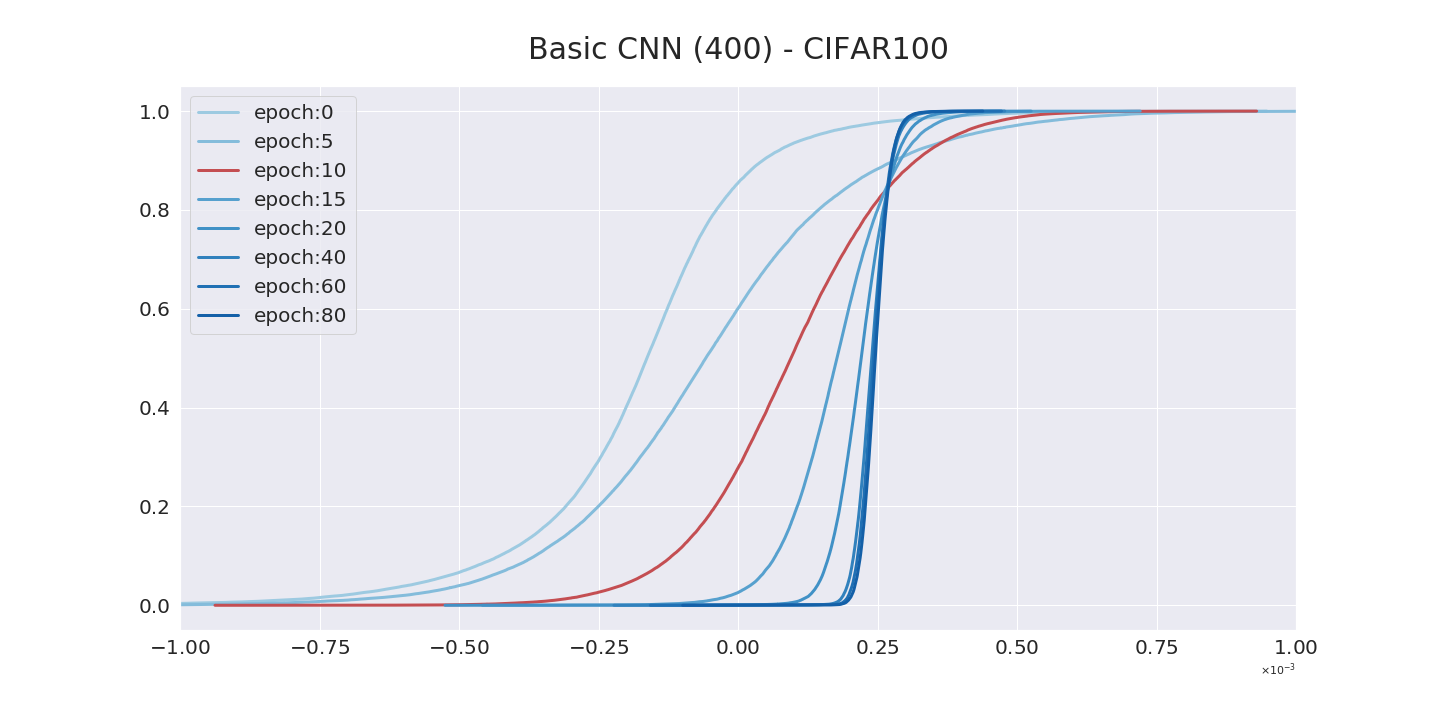}
\includegraphics[width=.32\textwidth]{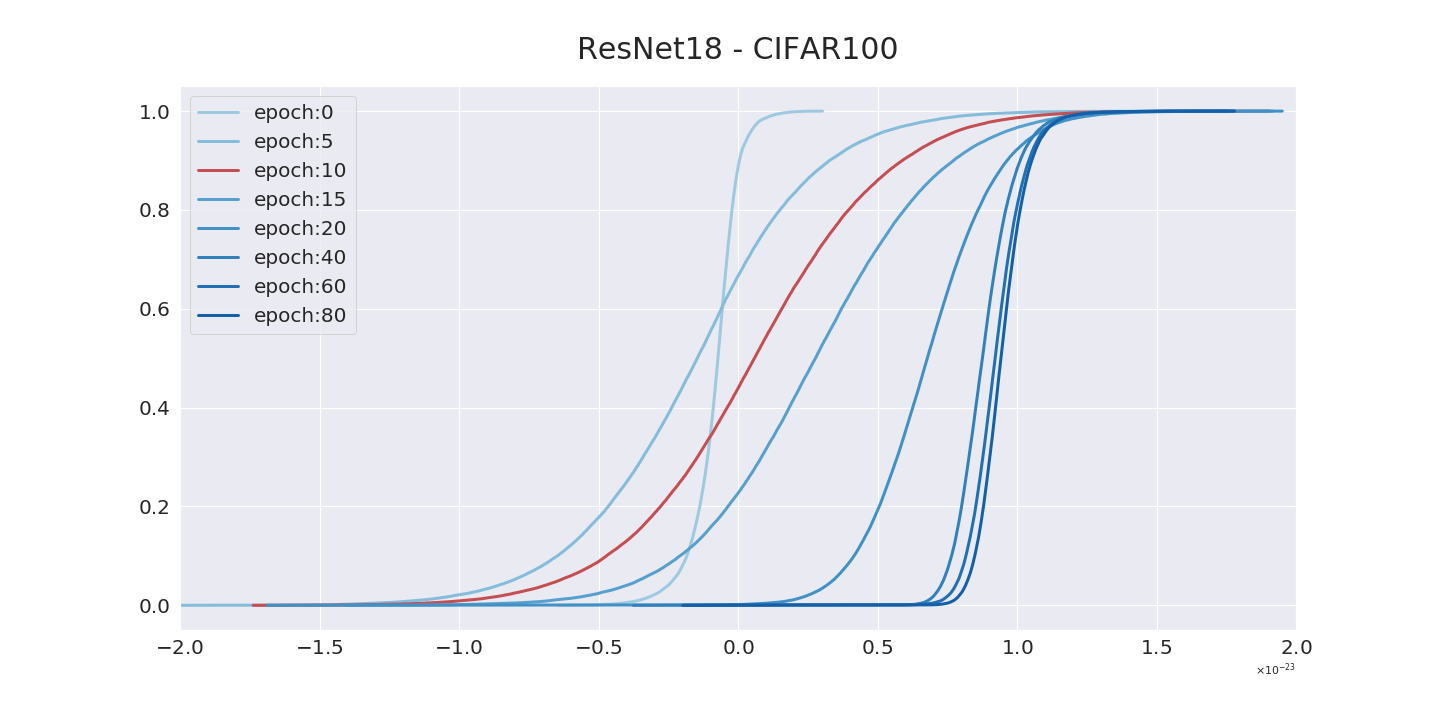}
\includegraphics[width=.32\textwidth]{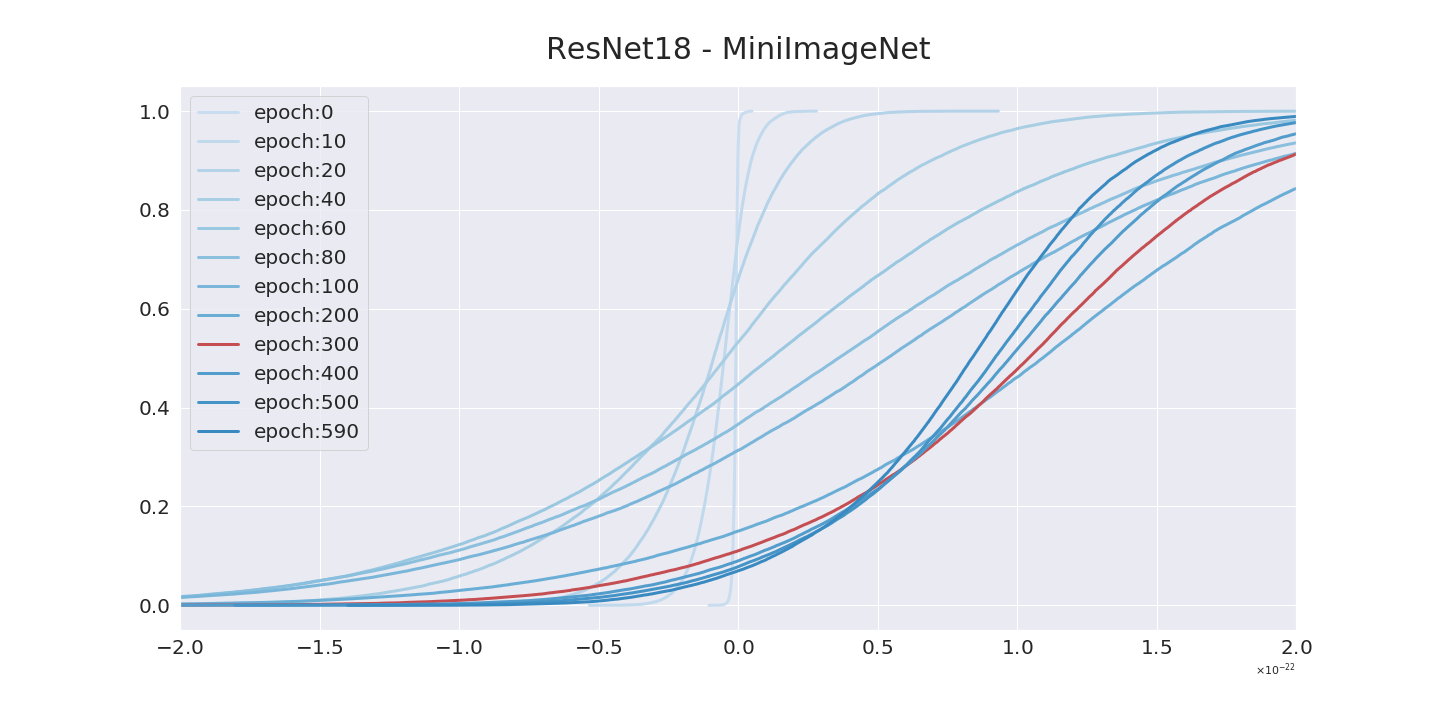}
\caption{Breiman's Dilemma II. Net structure: Basic CNN(400) (Left), ResNet-18 (Middle, Right). Dataset: CIFAR100 (Left, Middle), Mini-ImageNet (Right) with 10 percent labels corrupted. With a fixed network structure, we further explore how the complexity of dataset influences the margin dynamics. Taking ResNet-18 as an example, margin dynamics on CIFAR100 doesn't have any cross-over (phase transition), but on Mini-Imagenet a cross-over occurs.}\label{fig:phase-fixnet}
\end{figure}

From this experiment, one can see that simply counting the number of parameters and samples can not tell us if the model and data complexities are over-representative or comparable. Instead, phase transitions of margin dynamics provide us a tool to investigate their relationship. CNN(400) (5.8M parameters) has a too much expressive power for the simplest CIFAR10 dataset such that the training margins can be monotonically improved during training; but CNN(400)'s expressive power seems comparable to the more complex CIFAR100. Similarly, the more complex model ResNet-18 (11M parameters) has a too much expressive power for CIFAR100, but seems comparable to Mini-ImageNet.

\subsubsection{Comparisons of Basic CNNs, AlexNet, VGG16, and ResNet-18 in CIFAR10/100 and Mini-ImageNet}
In this part, we collect comparisons of various networks on CIFAR10/100 and Mini-ImageNet dataset. Figure \ref{fig:pract-examples} shows both success and failure cases with different networks and datasets. In particular, the predictability of generalization error based on Theorem \ref{thm:marg-err} and Theorem \ref{thm:qmargin} can be rapidly observed on the third column of Figure \ref{fig:pract-examples}, the heatmaps of rank correlations between training margin dynamics and test margin dynamics. On one hand, one can use the training margins to predict the test error as shown in the first column of Figure \ref{fig:pract-examples}. In these cases, model complexity is comparable to data complexity such that the training margin dynamics share similar phase transitions with test margin dynamics, indicated by block diagonal structures in rank correlations (e.g. CNN(100) - CIFAR10, AlexNet - CIFAR100, AlexNet - MiniImageNet, VGG16 - MiniImageNet, and ResNet-18 - MiniImageNet). On the other hand, such a prediction fails when models become over-expressive against datasets such that the training margin dynamics undergo different phase transitions to test margin dynamics, indicated by the lost of block diagonal structures in rank correlations (e.g. CNN(400)- CIFAR10, ResNet-18 - CIFAR100, VGG16 - CIFAR100).

As we have shown, phase transitions of margin dynamics play a central role in characterizing the trade-off between model expressive power and data complexity, hence the predictability of generalization error by our theorems. If one tries hard to improve training margins by arbitrarily increasing the model complexity, the training margin distributions can be monotonically enlarged but it may lead to overfitting. This phenomenon is not unfamiliar to us, since Breiman has pointed out that the improvement of training margins is not enough to guarantee a small generalization or test error in the boosting type algorithms \citep{Breiman99}. Now again we find the same phenomenon ubiquitous in deep neural networks. In this paper, the inspection of the trade-off between expressive power of models and complexity of data via phase transitions of margin dynamics provides us a new perspective to study the Breiman's dilemma in applications. 

\begin{figure}[H]
\centering

\includegraphics[width=.32\textwidth]{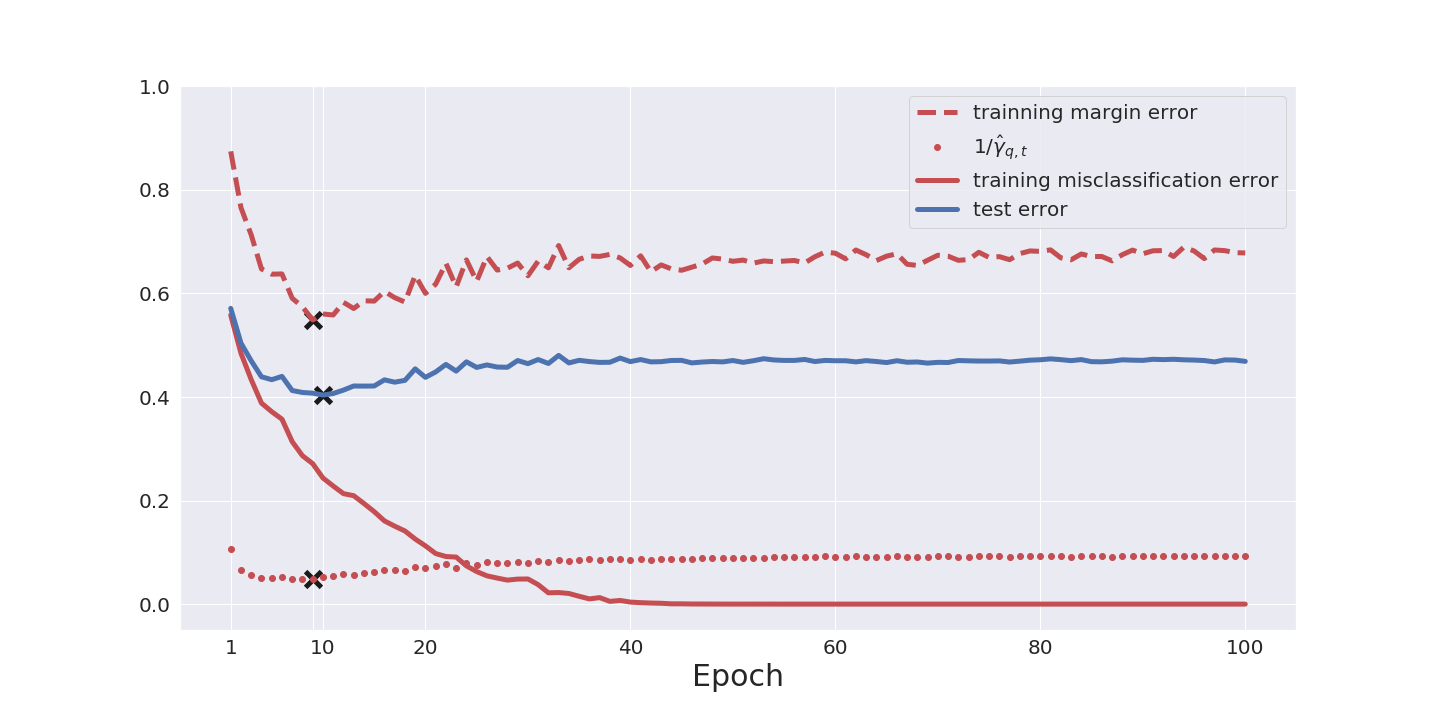}
\includegraphics[width=.32\textwidth]{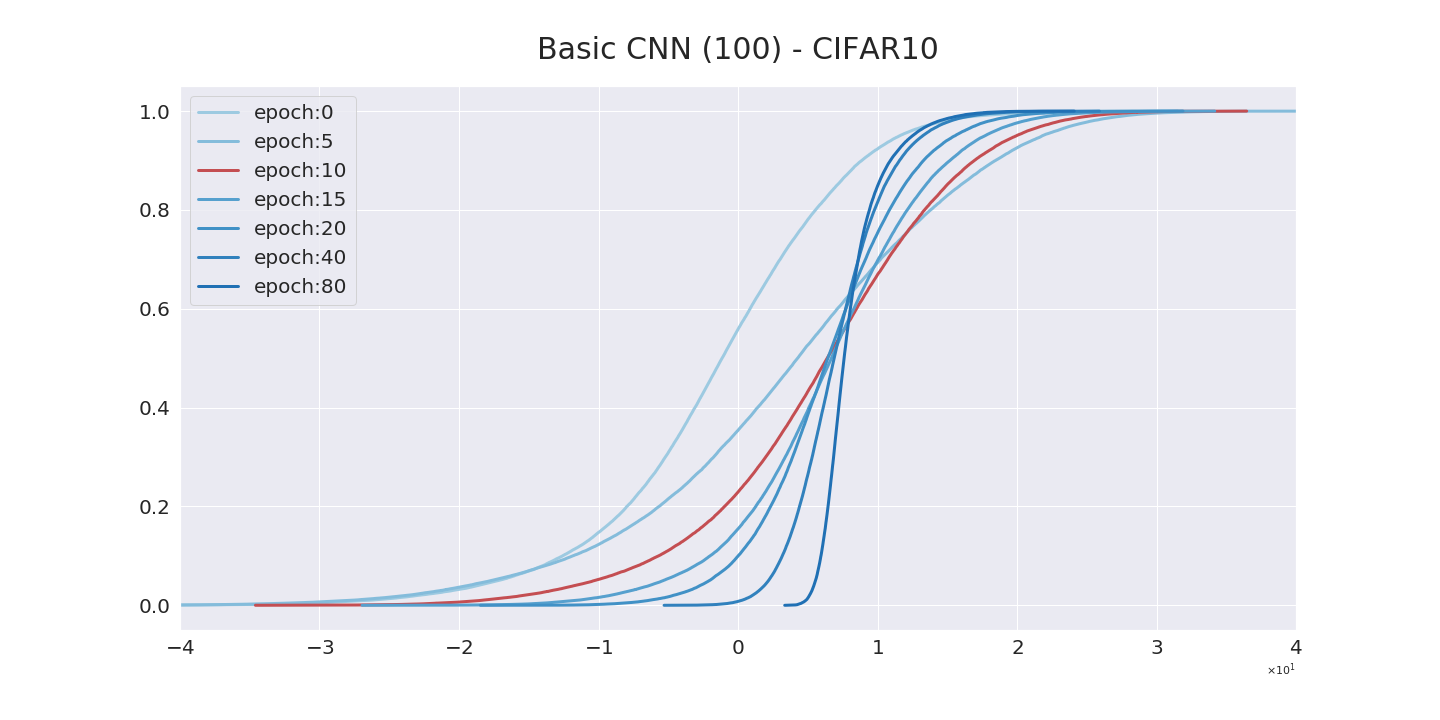}
\includegraphics[width=.32\textwidth]{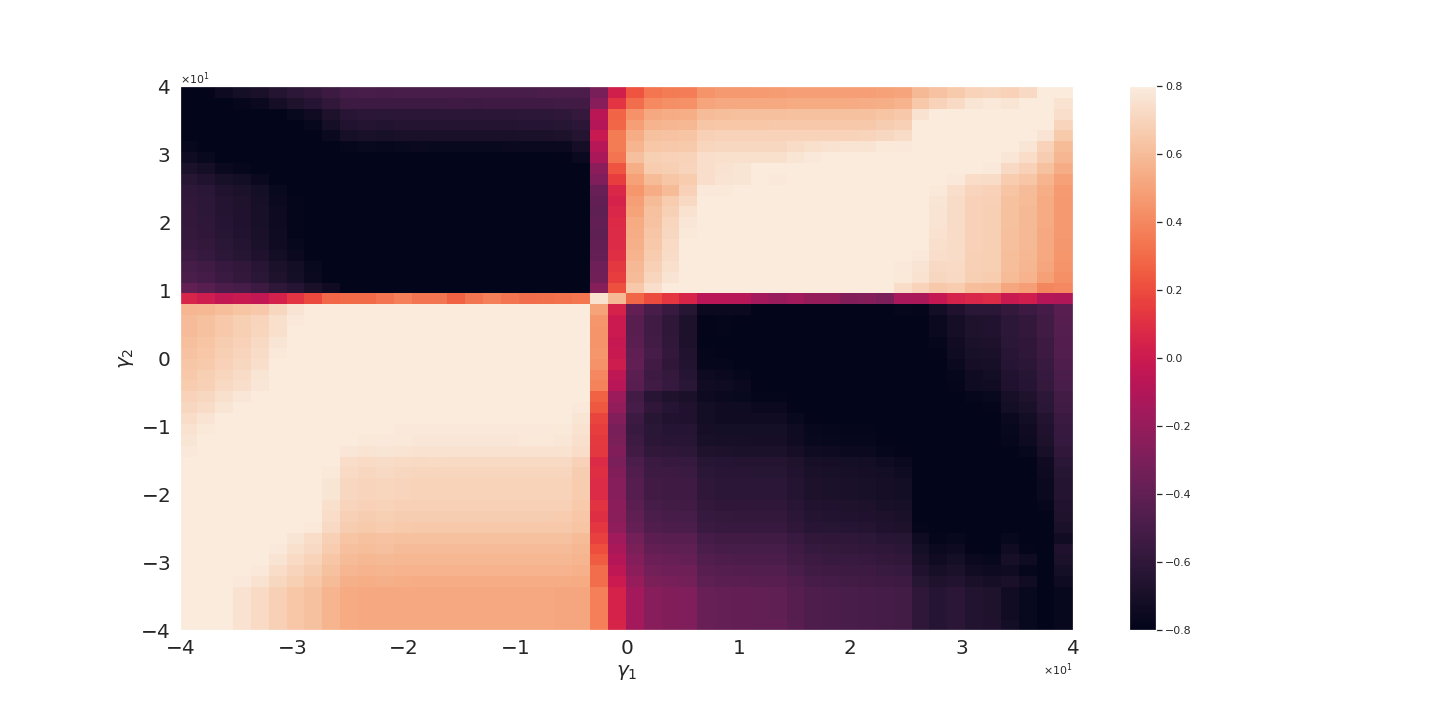}

\includegraphics[width=.32\textwidth]{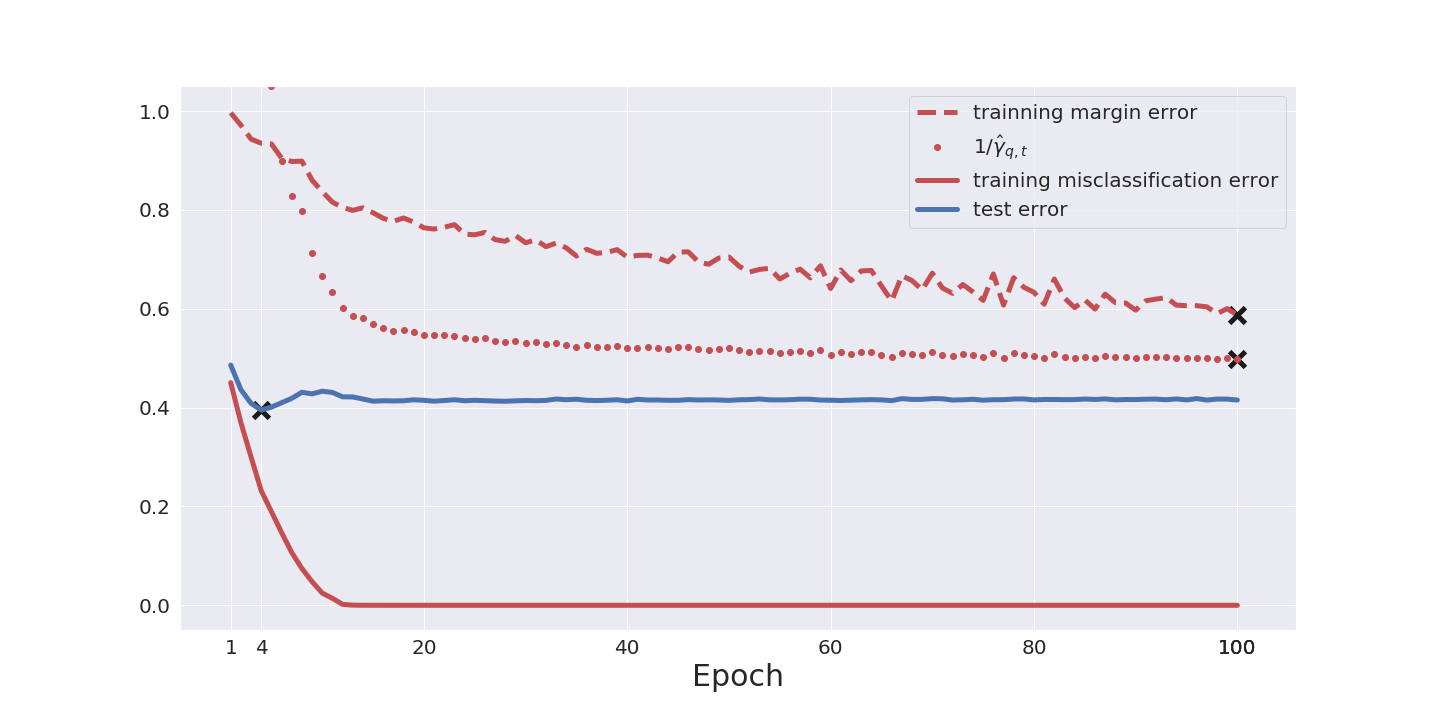}
\includegraphics[width=.32\textwidth]{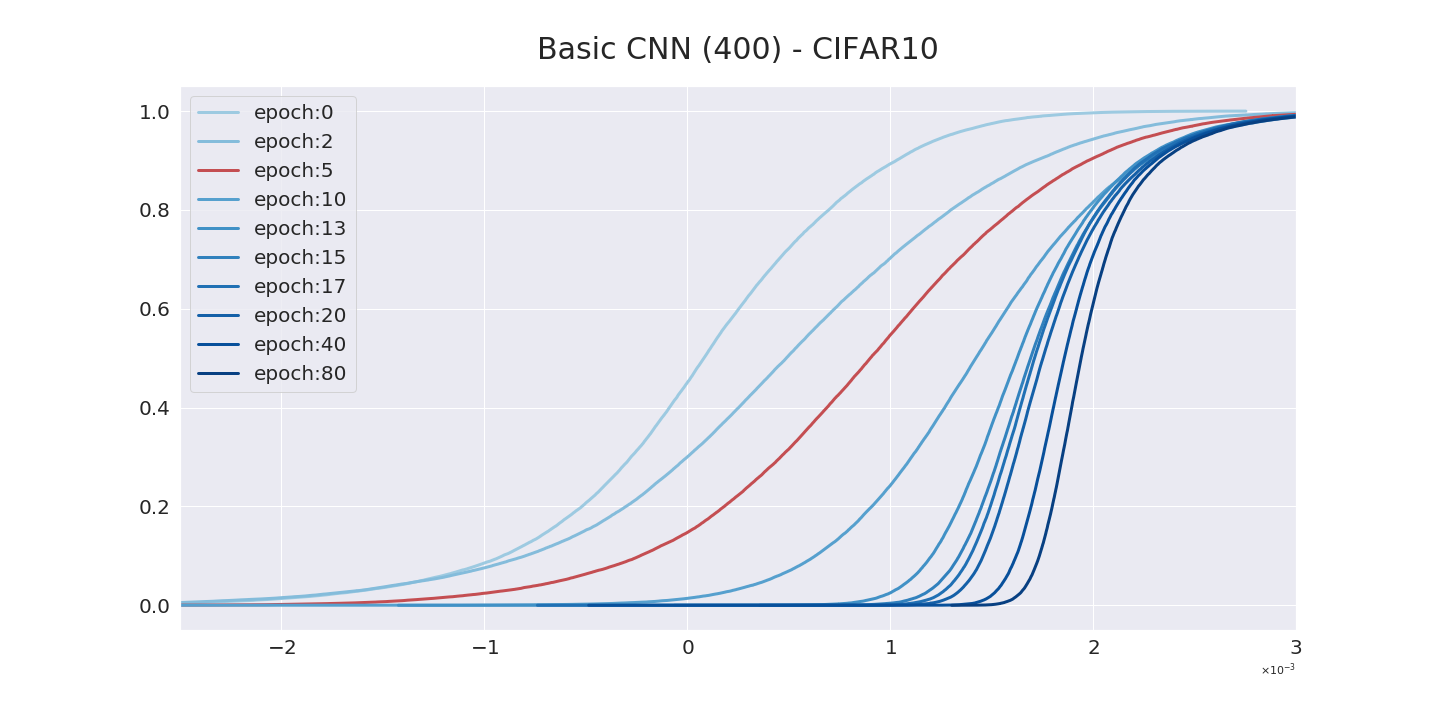}
\includegraphics[width=.32\textwidth]{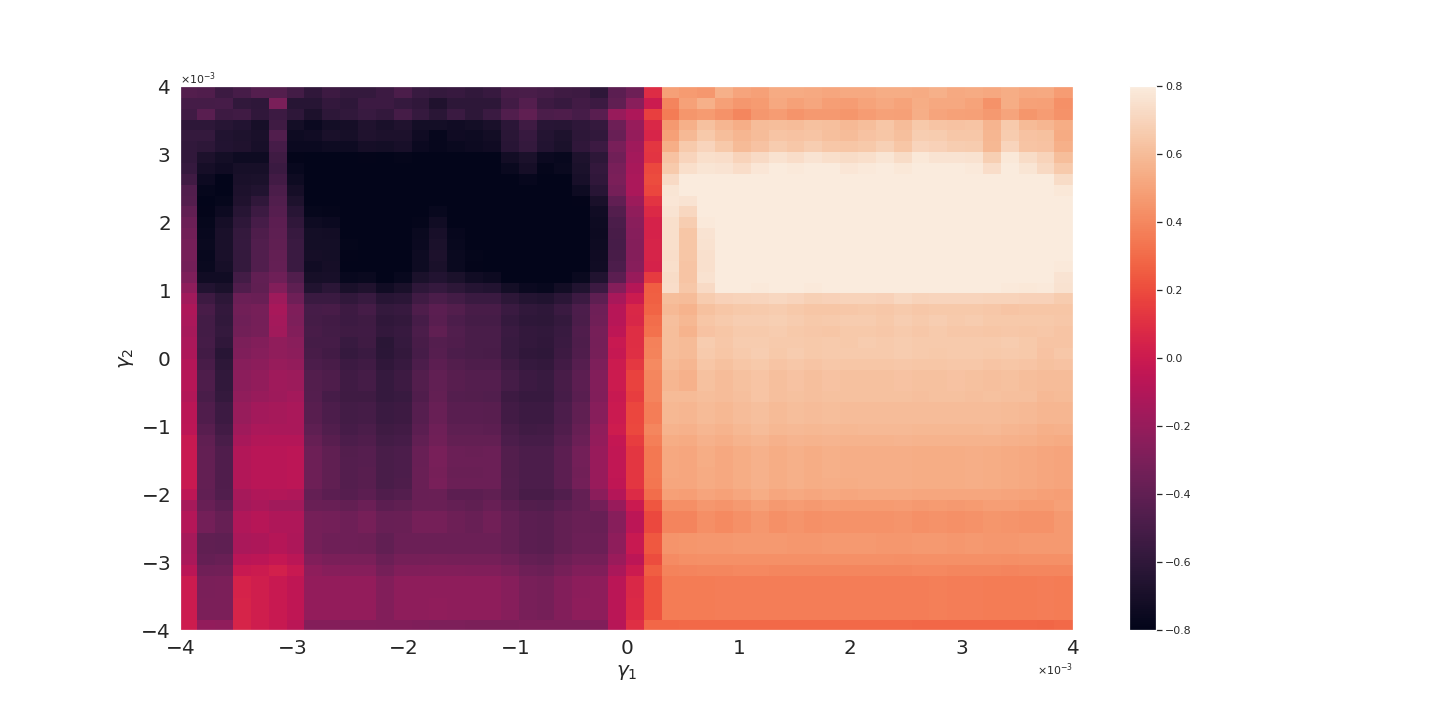}

\includegraphics[width=.32\textwidth]{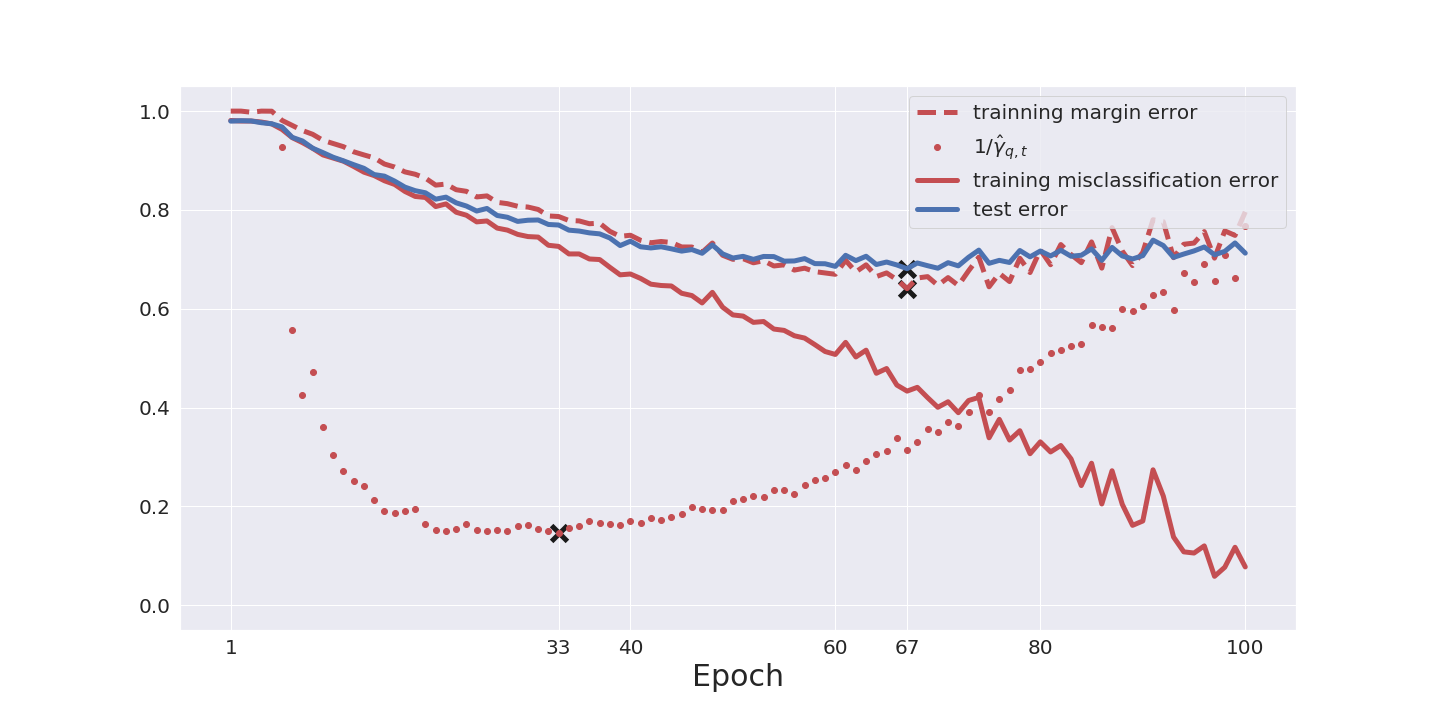}
\includegraphics[width=.32\textwidth]{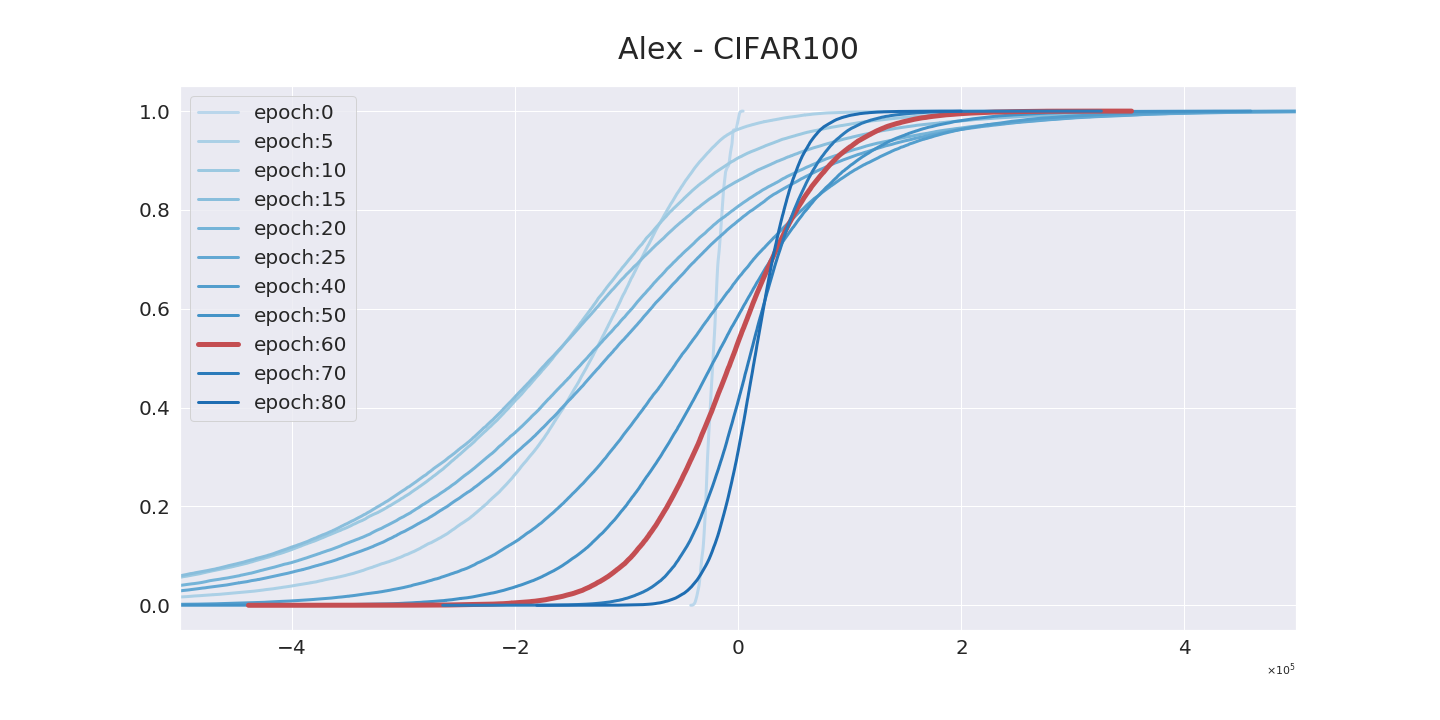}
\includegraphics[width=.32\textwidth]{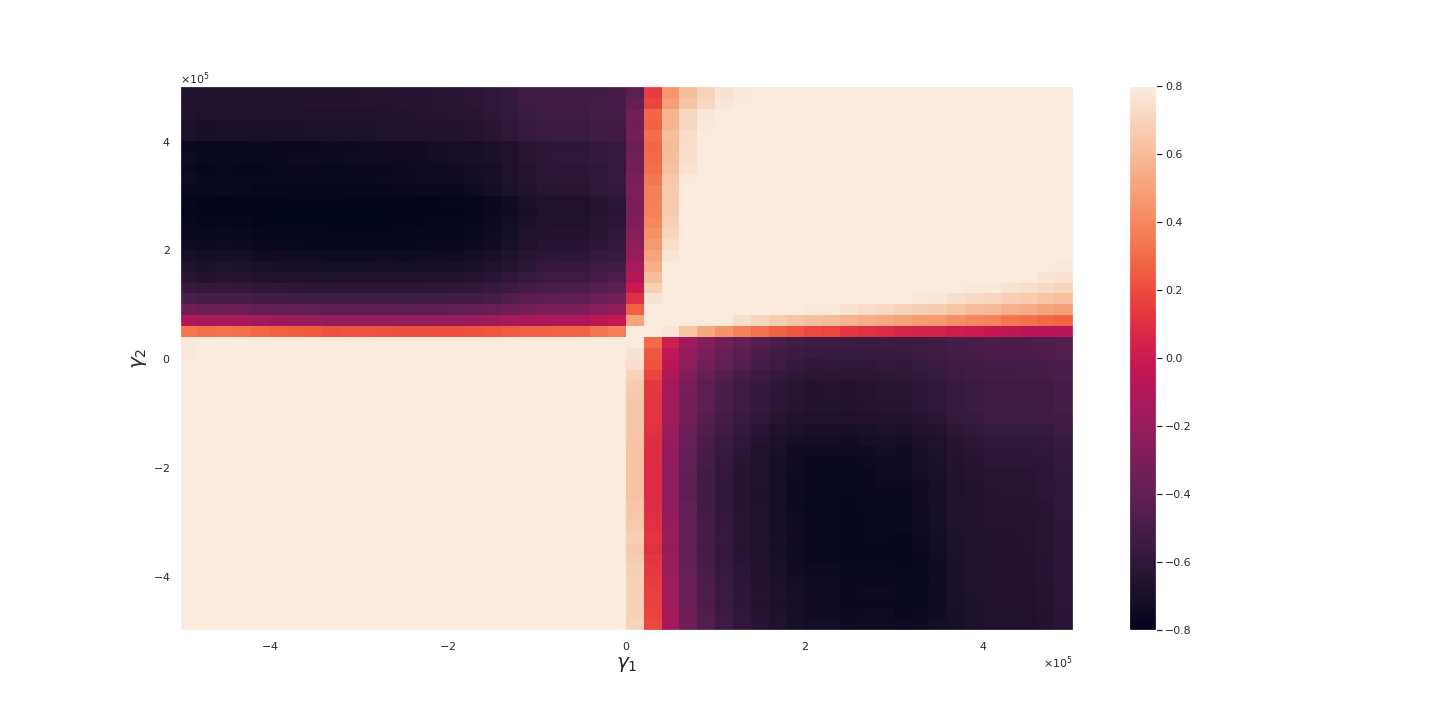}

\includegraphics[width=.32\textwidth]{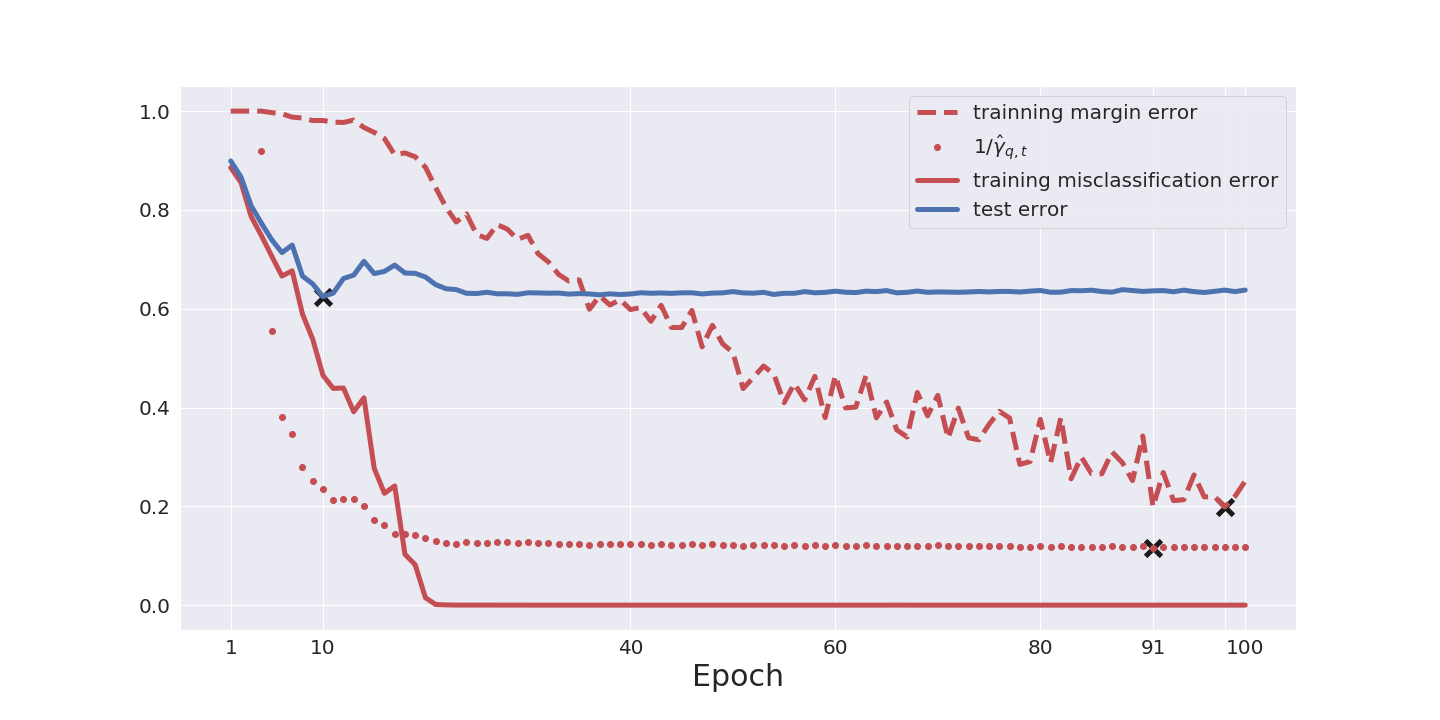}
\includegraphics[width=.32\textwidth]{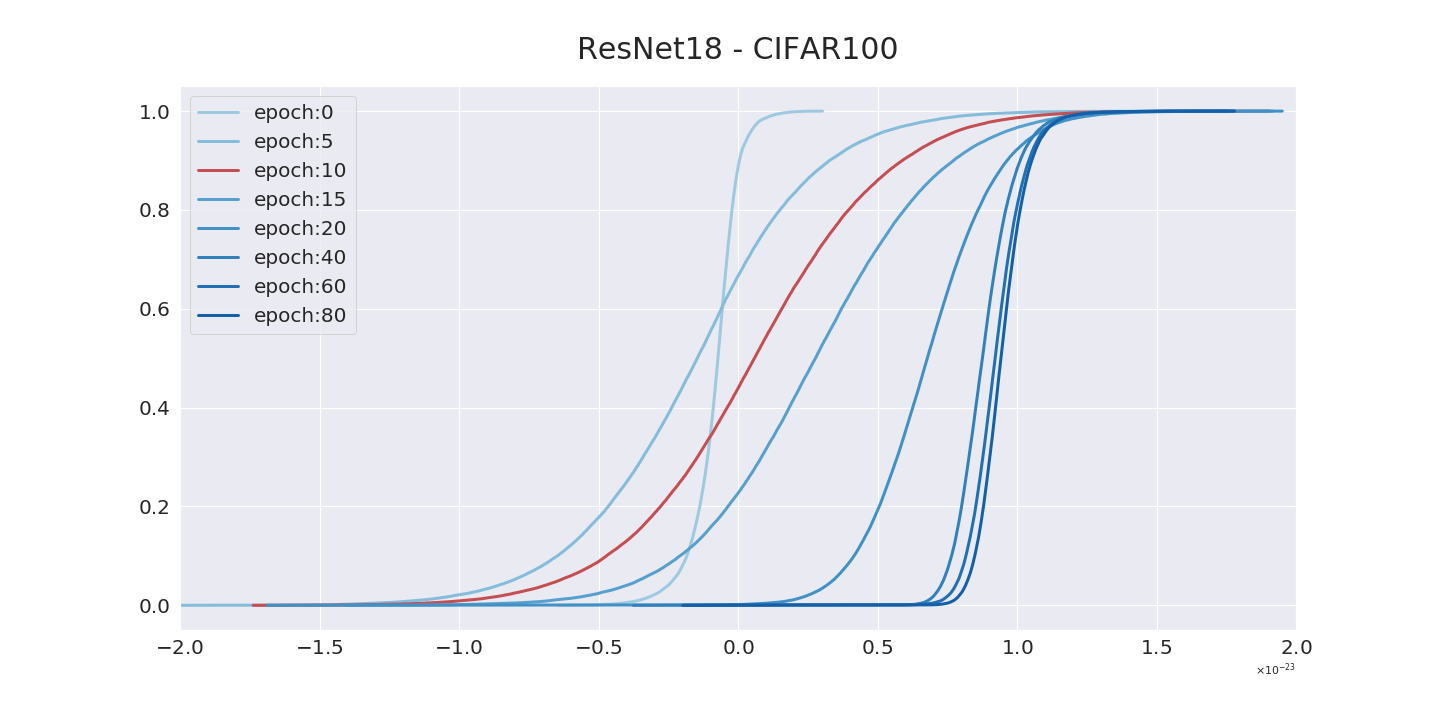}
\includegraphics[width=.32\textwidth]{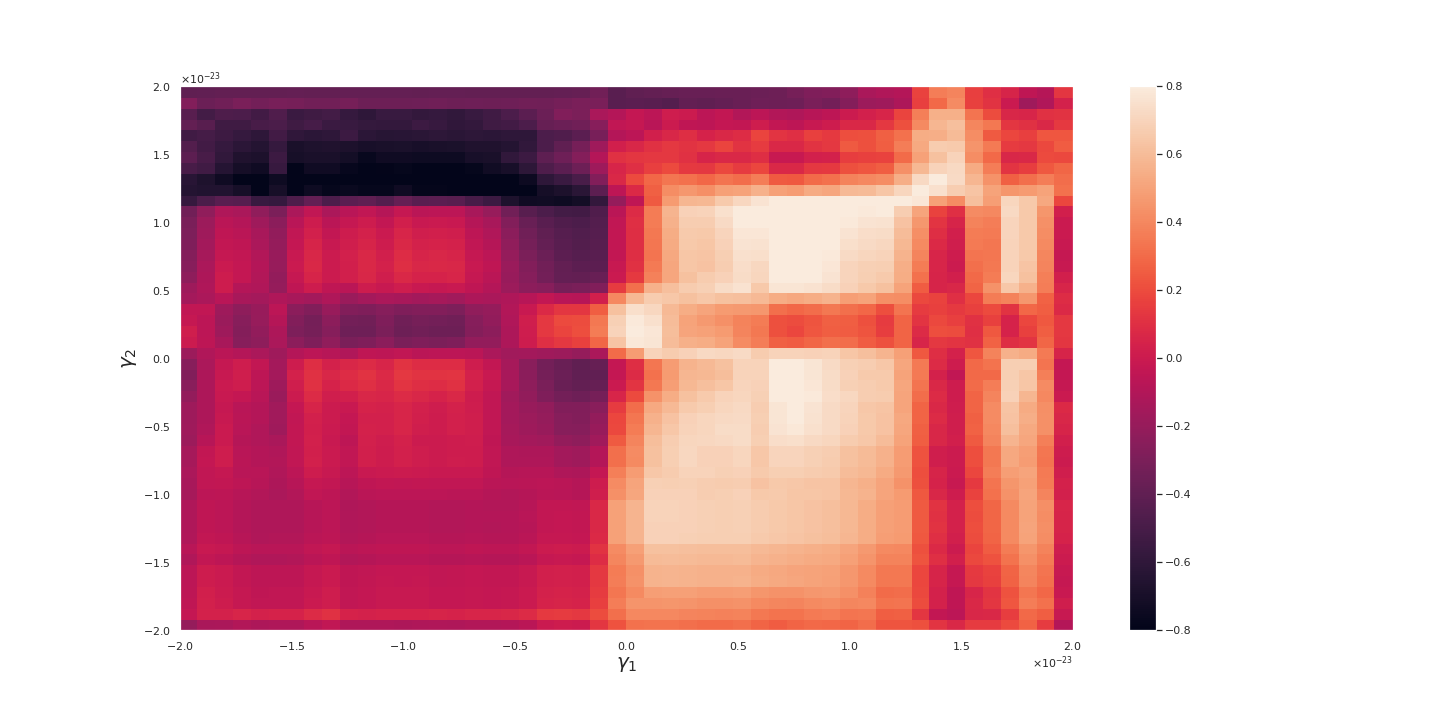}

\includegraphics[width=.32\textwidth]{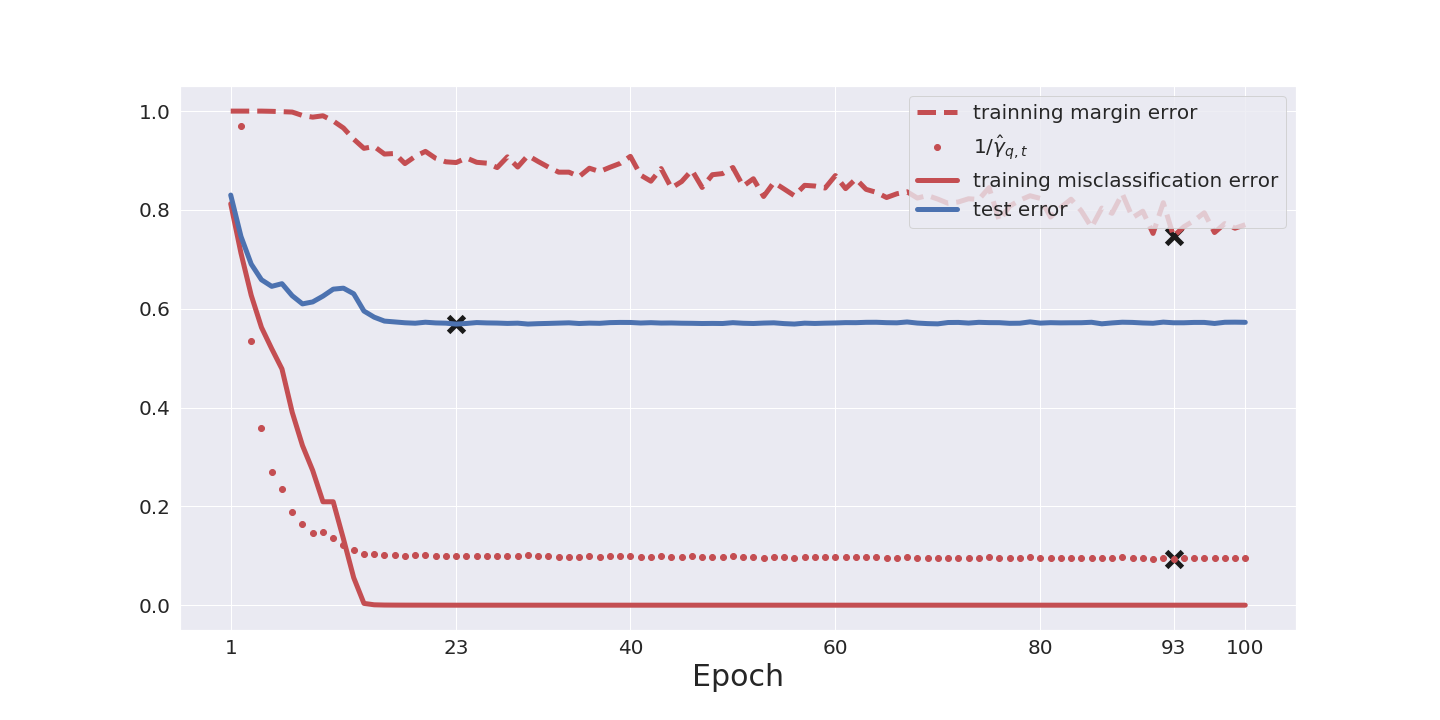}
\includegraphics[width=.32\textwidth]{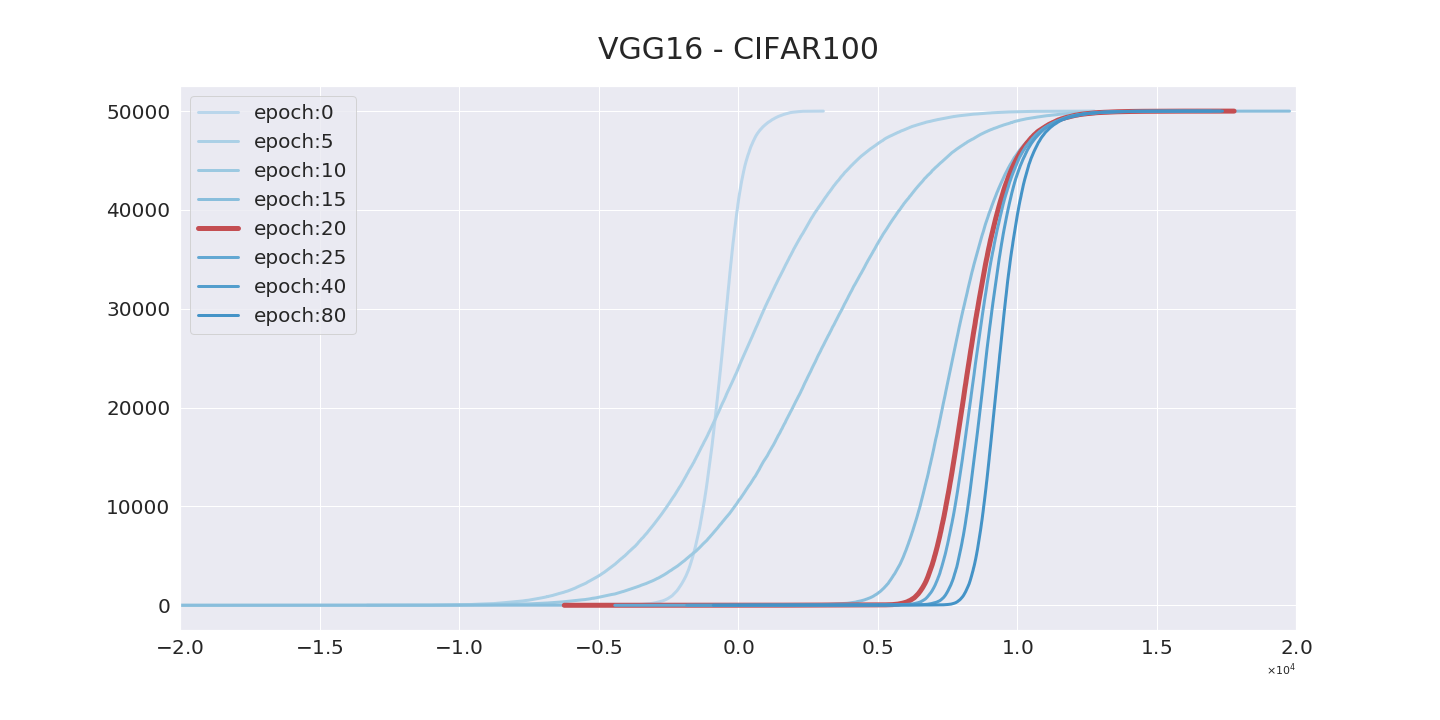}
\includegraphics[width=.32\textwidth]{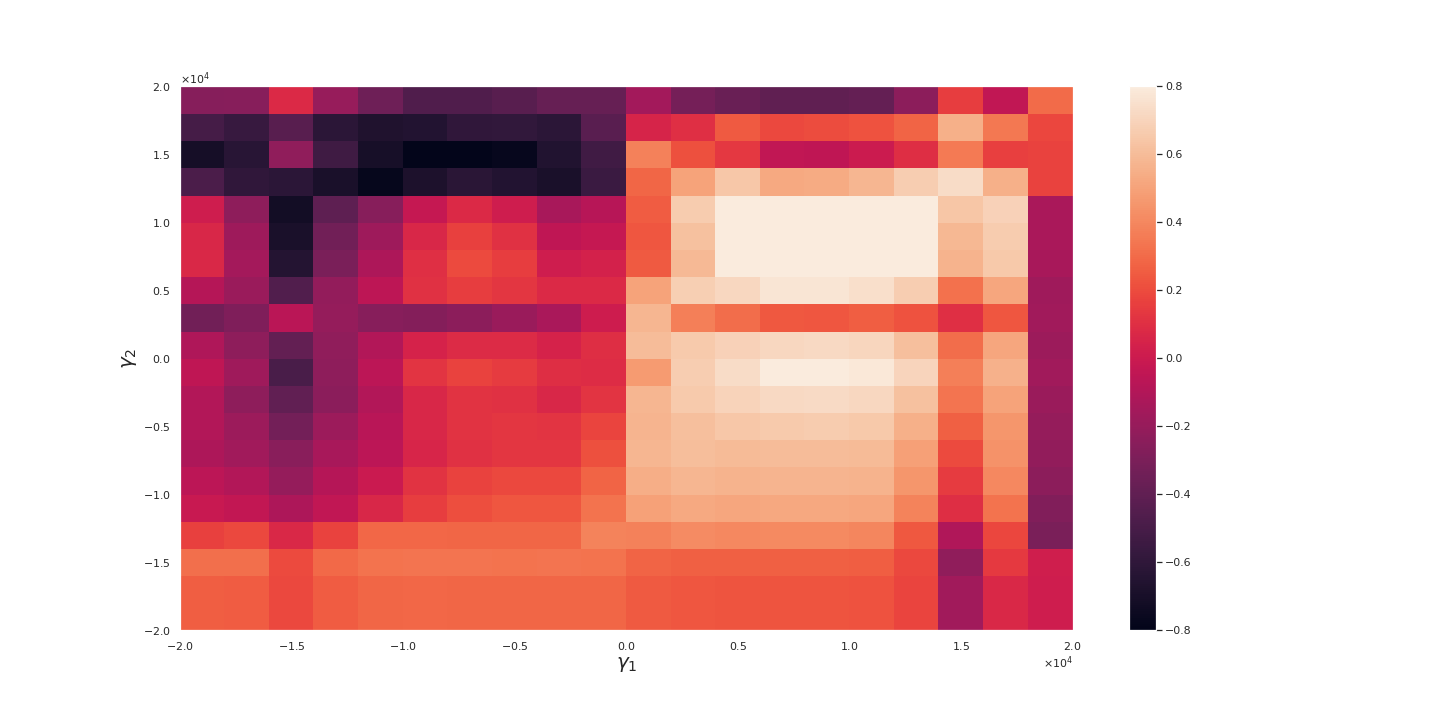}

\includegraphics[width=.32\textwidth]{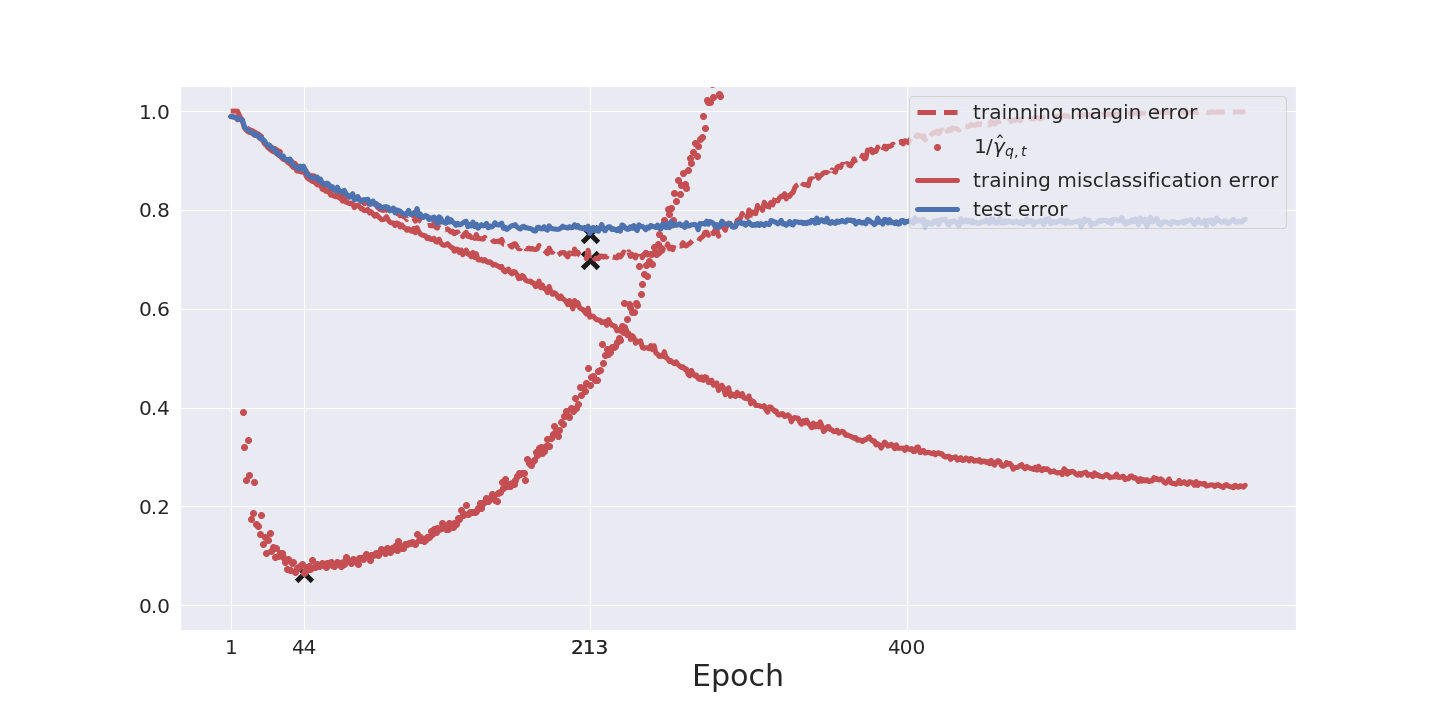}
\includegraphics[width=.32\textwidth]{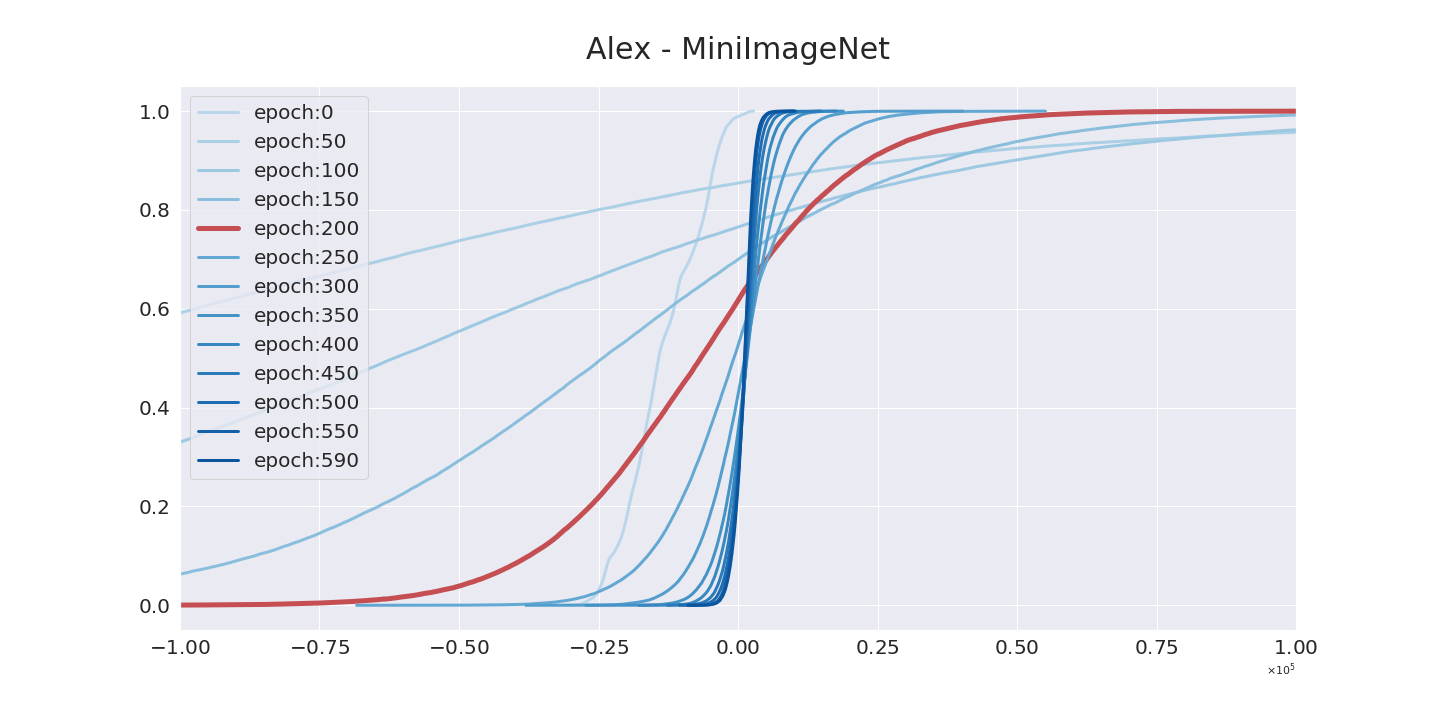}
\includegraphics[width=.32\textwidth]{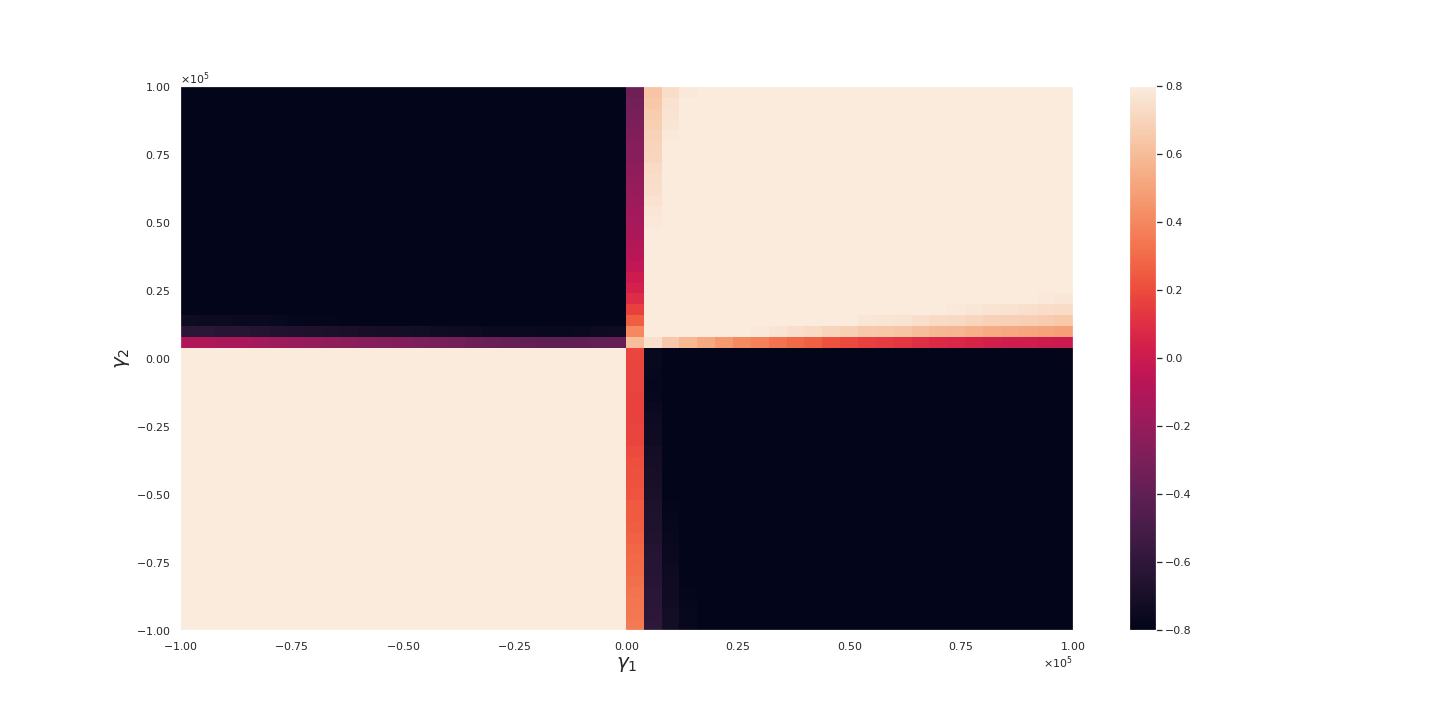}

\includegraphics[width=.32\textwidth]{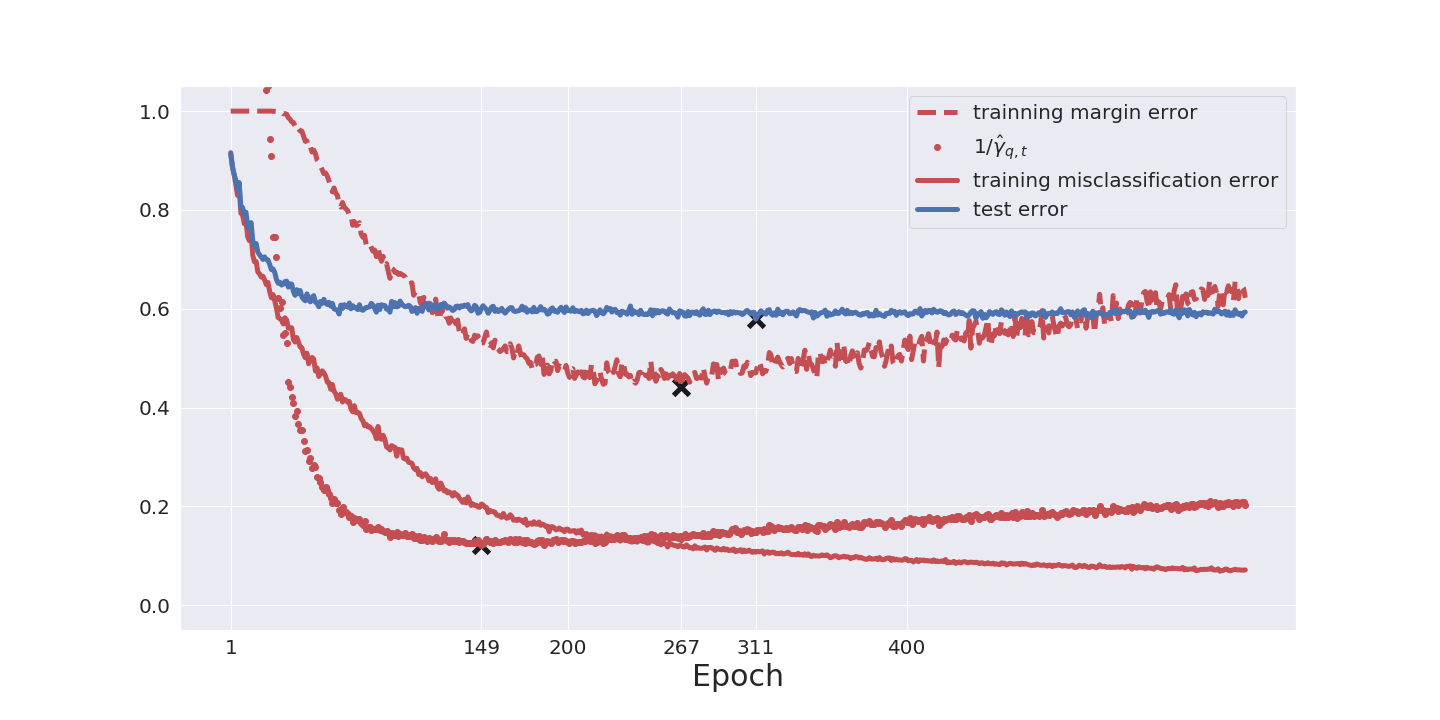}
\includegraphics[width=.32\textwidth]{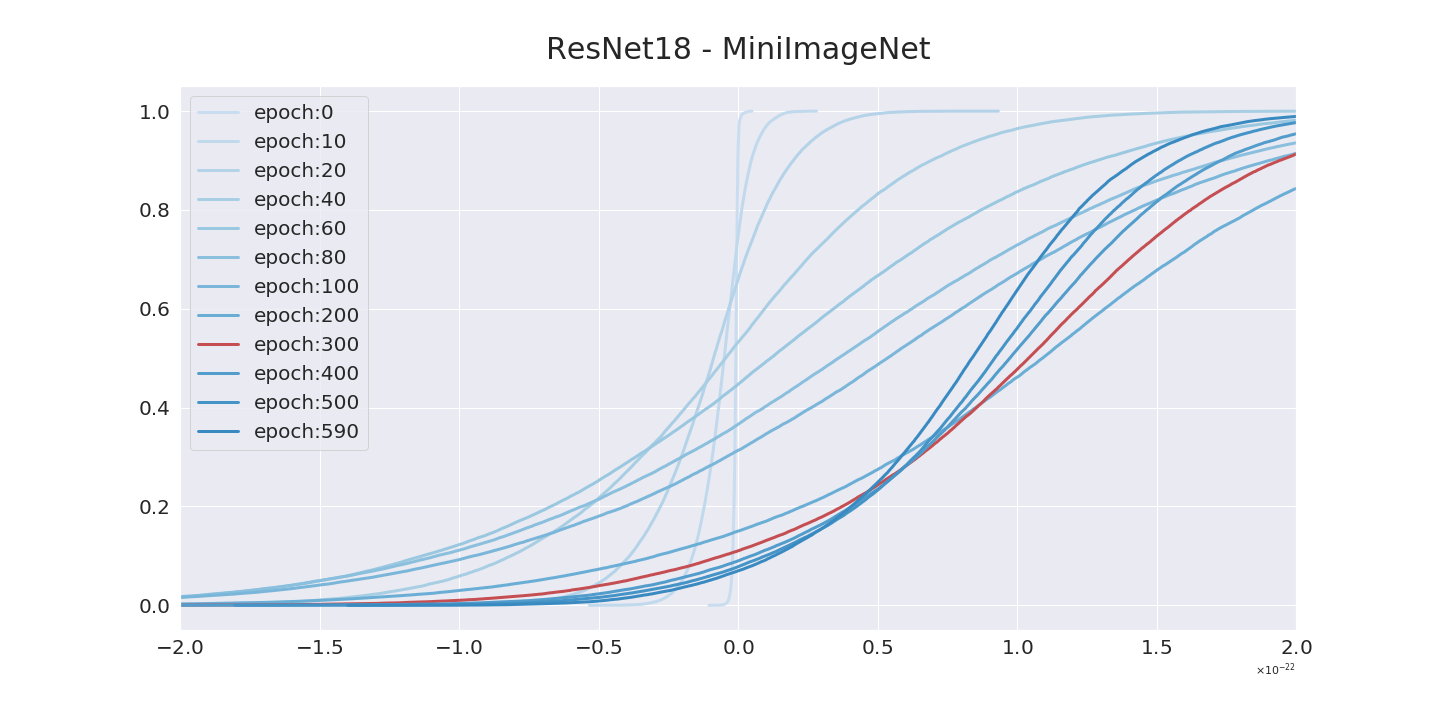}
\includegraphics[width=.32\textwidth]{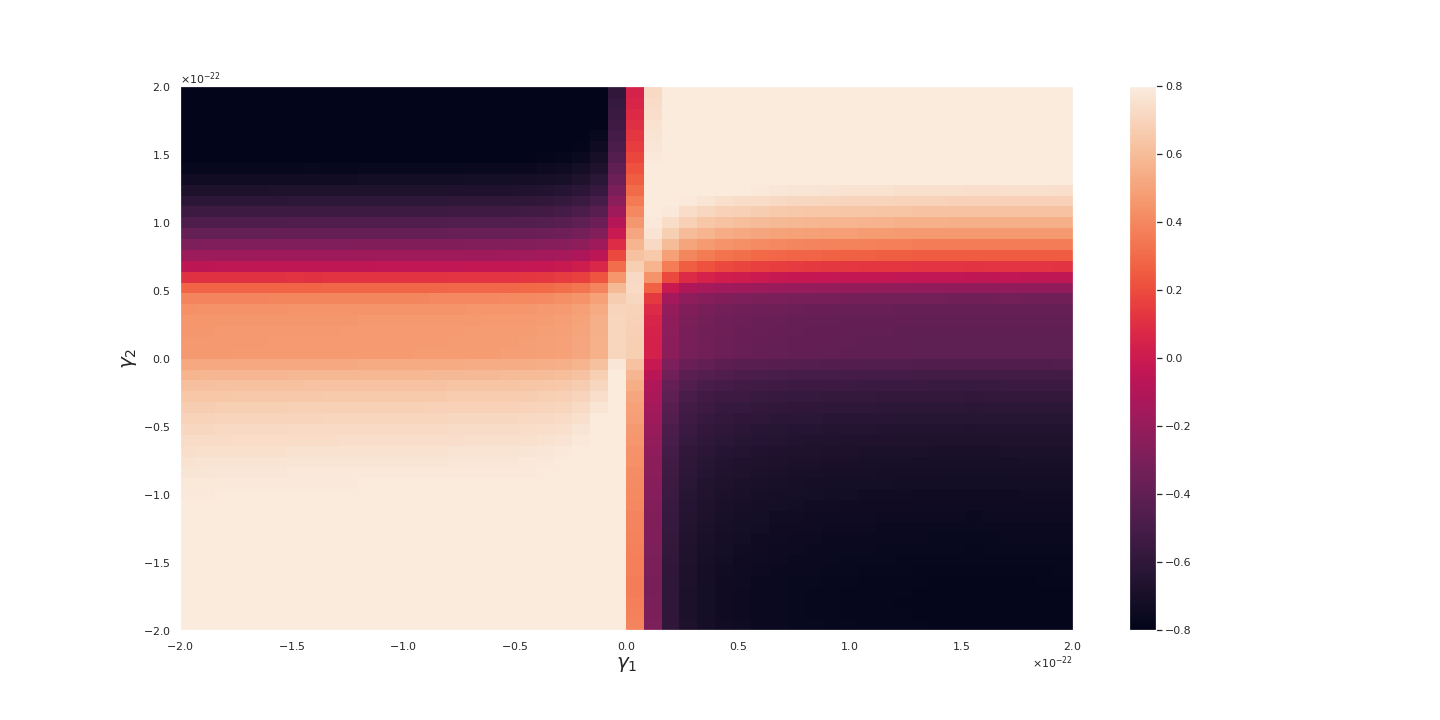}

\includegraphics[width=.32\textwidth]{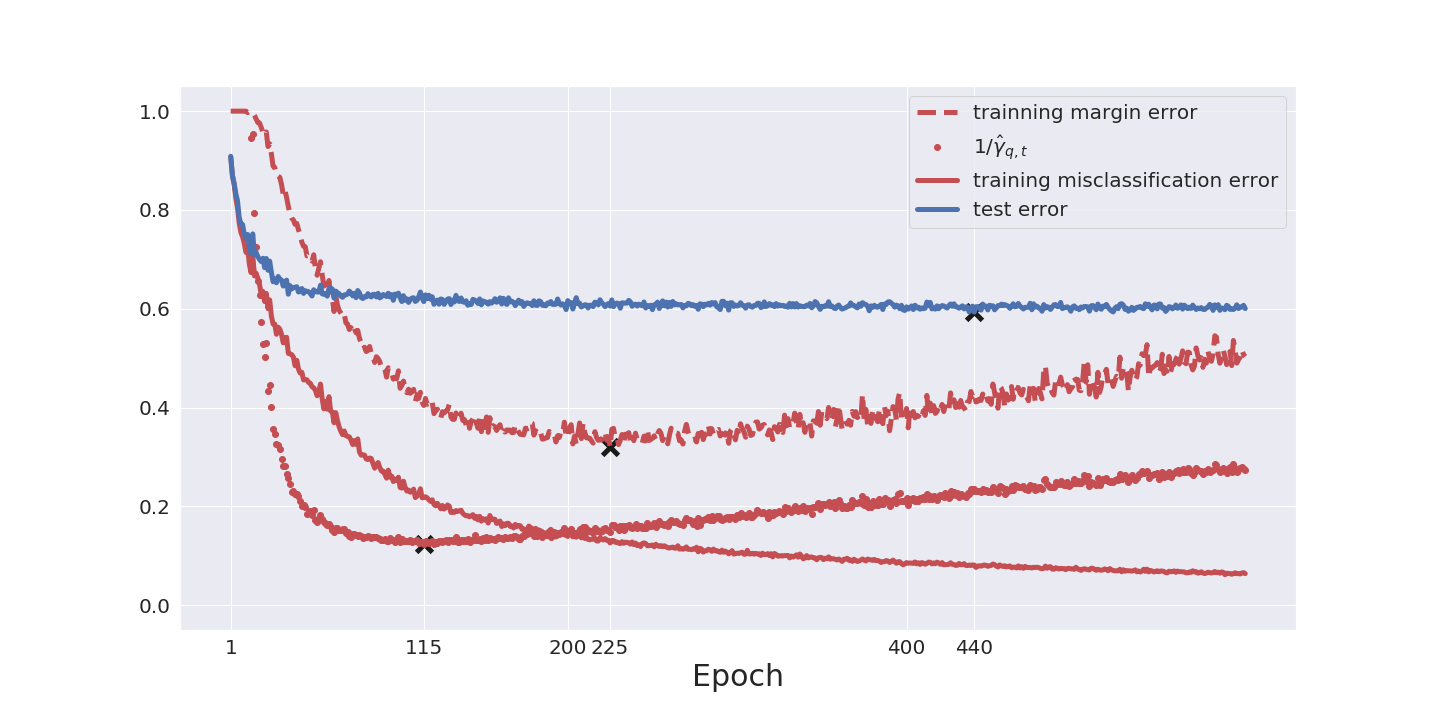}
\includegraphics[width=.32\textwidth]{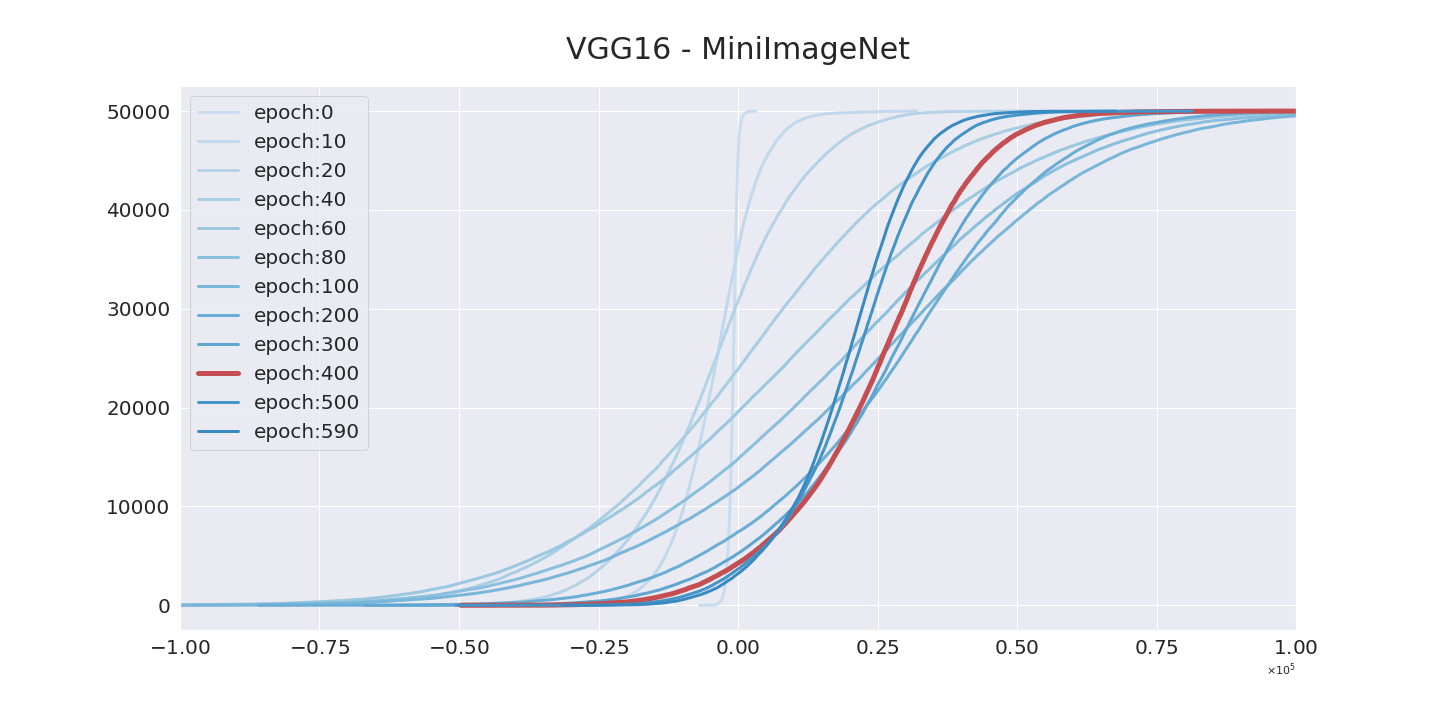}
\includegraphics[width=.32\textwidth]{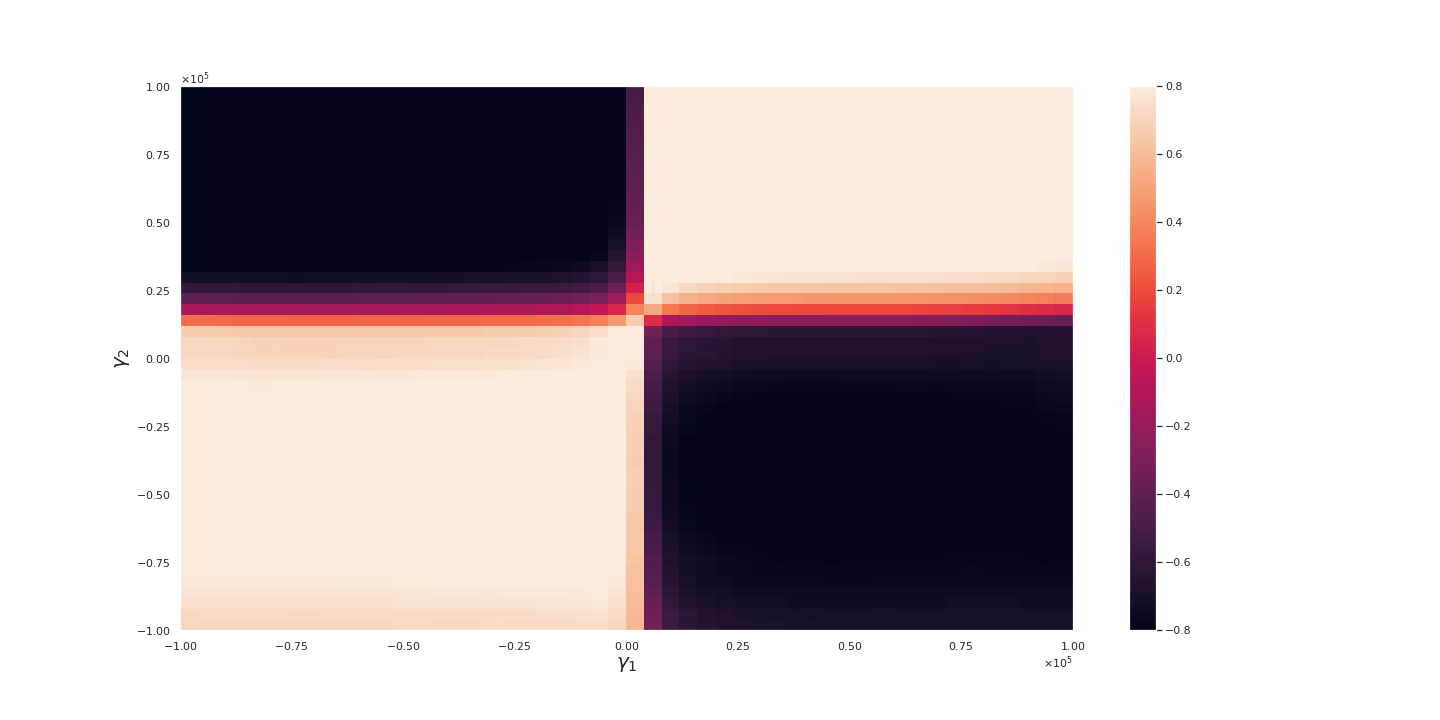}

\caption{Comparisons of Basic CNNs, AlexNet, VGG16, and ResNet-18 in CIFAR10/100, and Mini-ImageNet. The dataset and network in use are marked in titles of middle pictures in each row. Left: curves of training error, generalization error, training margin error and inverse quantile margin. Middle: dynamics of training margin distributions. Right: heatmaps are Spearman-$\rho$ rank correlation coefficients between dynamics of training margin error ($\eP_n[e_{\gamma_2}(\nf(x_i),y_i)]$) and dynamics of test margin error ($\P[e_{\gamma_1}(\nf_t(x),y)]$) drawn on the $(\gamma_1,\gamma_2)$ plane.}\label{fig:pract-examples}
\end{figure}

\subsection{Discussion: Effluence of Normalization Factor Estimates}  

\begin{figure}[H]
\centering
\includegraphics[width=.32\textwidth]{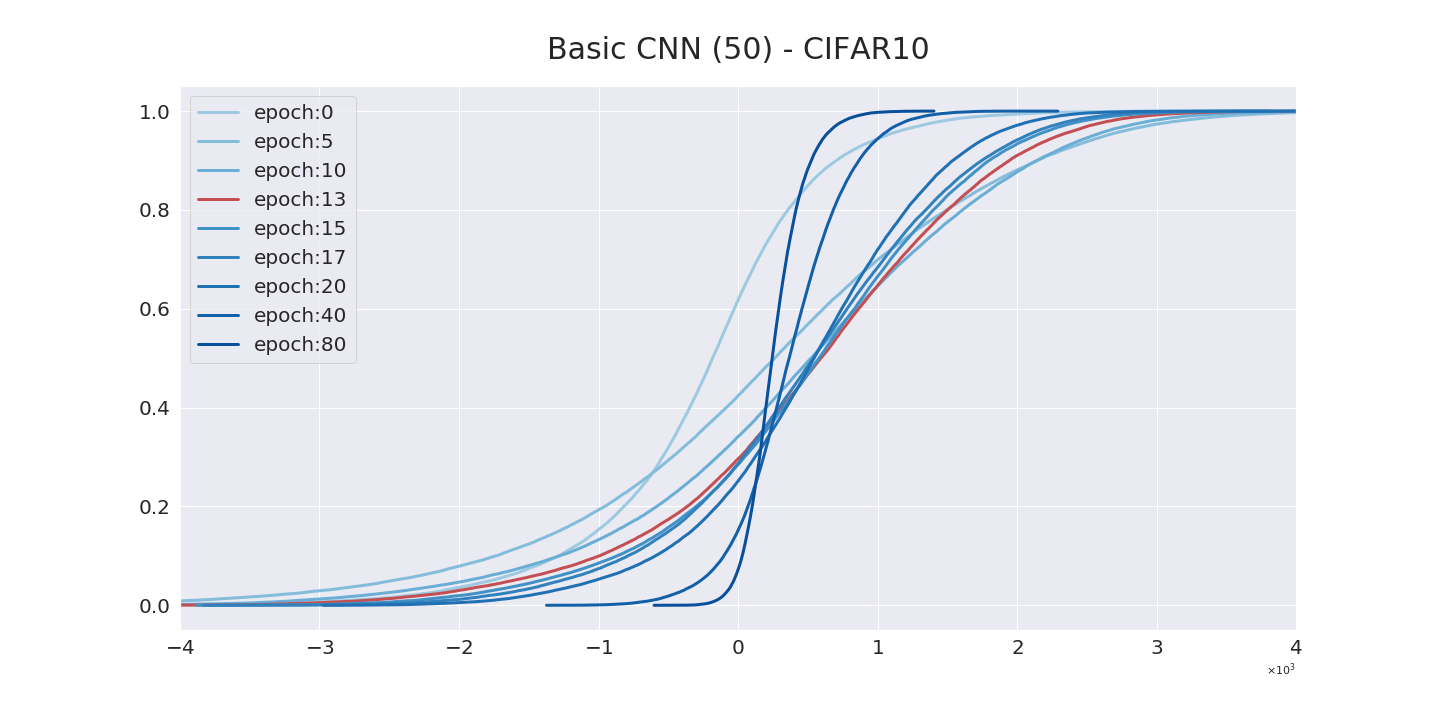}
\includegraphics[width=.32\textwidth]{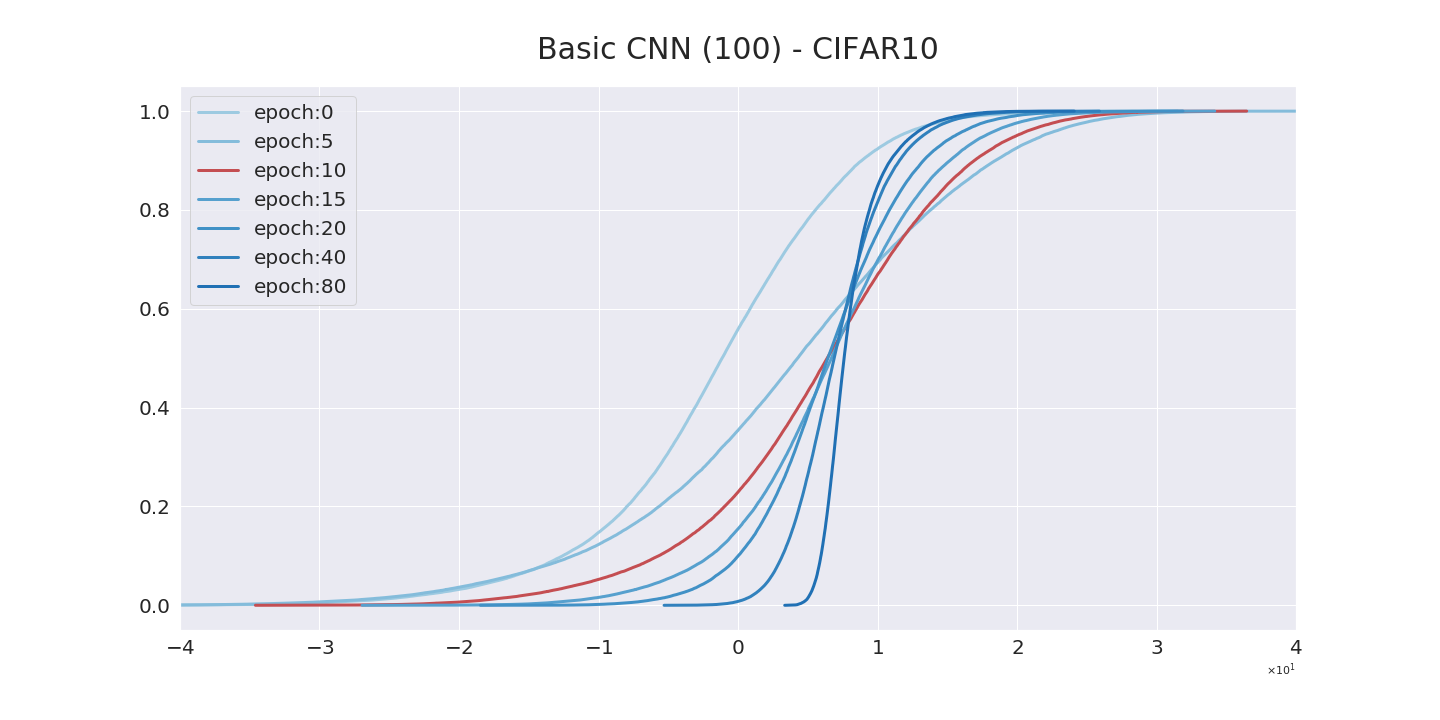}
\includegraphics[width=.32\textwidth]{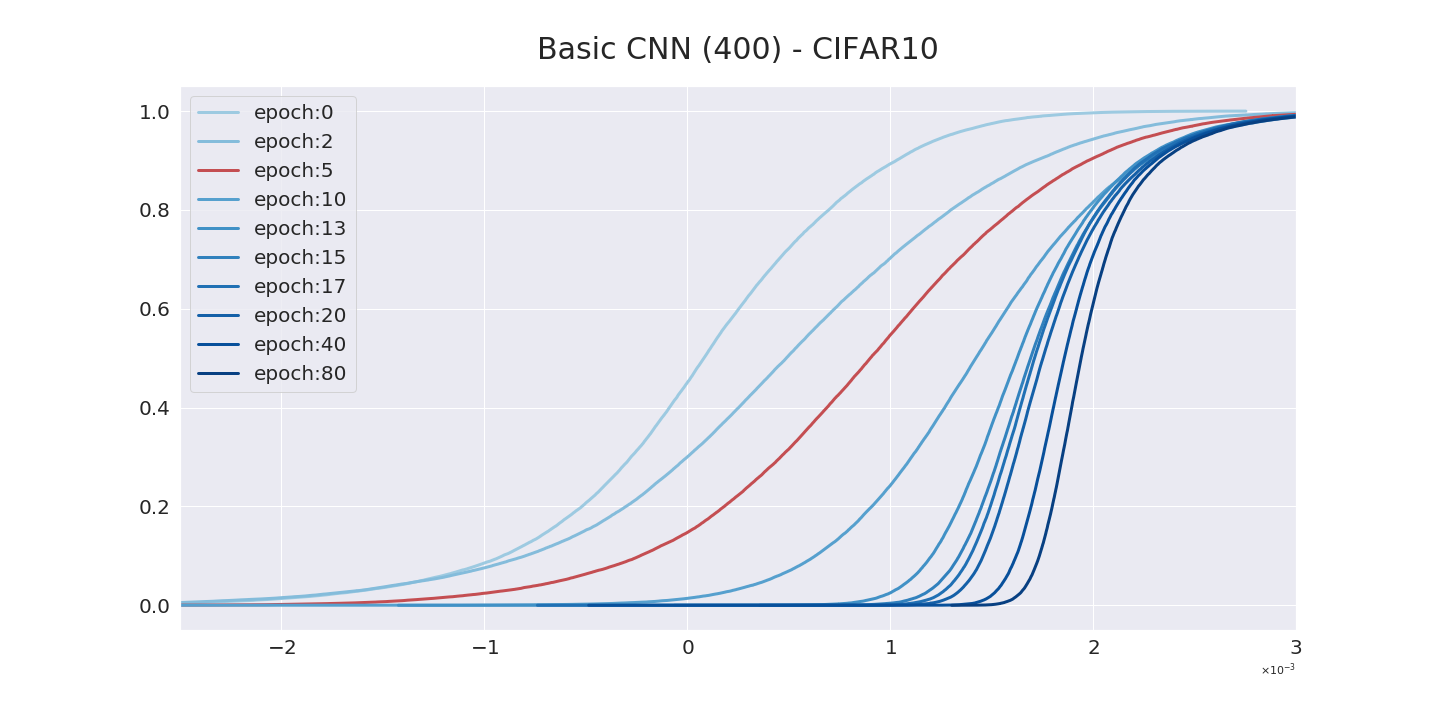}
\includegraphics[width=.32\textwidth]{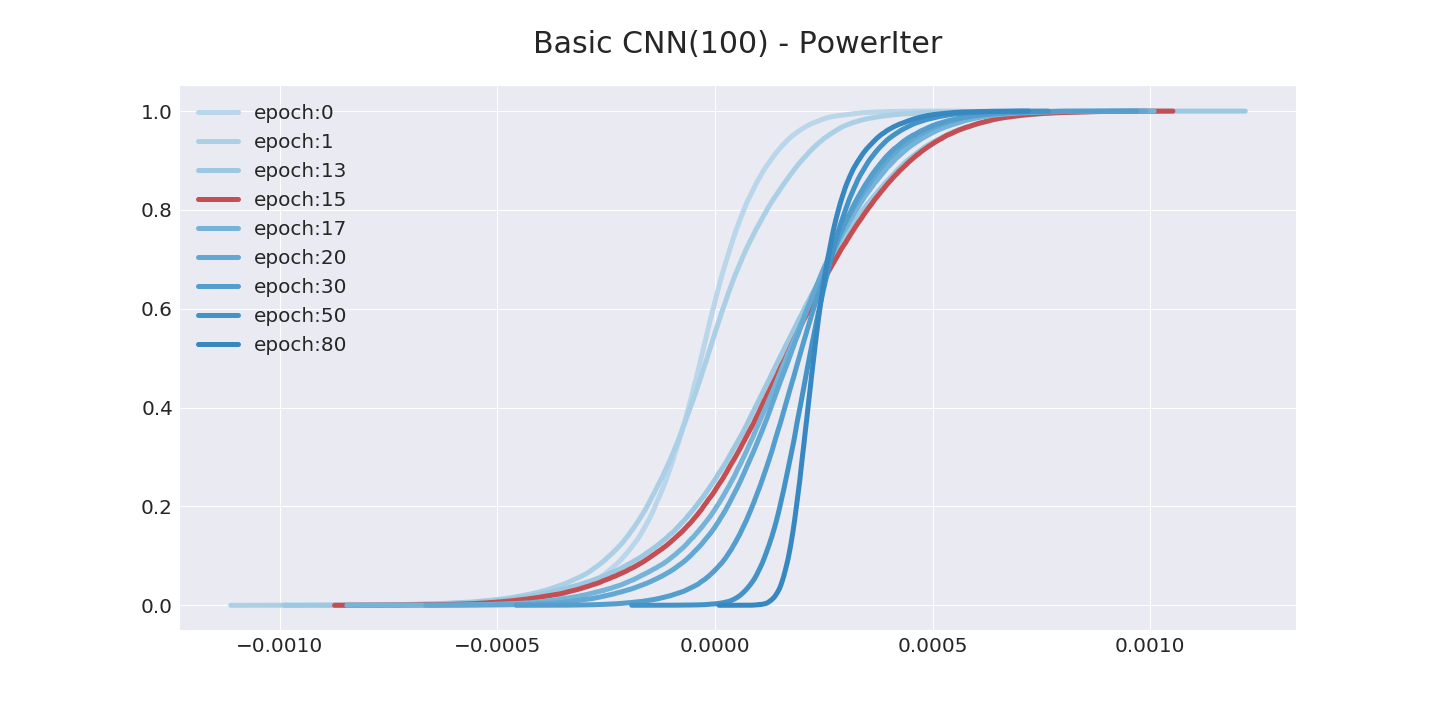}
\includegraphics[width=.32\textwidth]{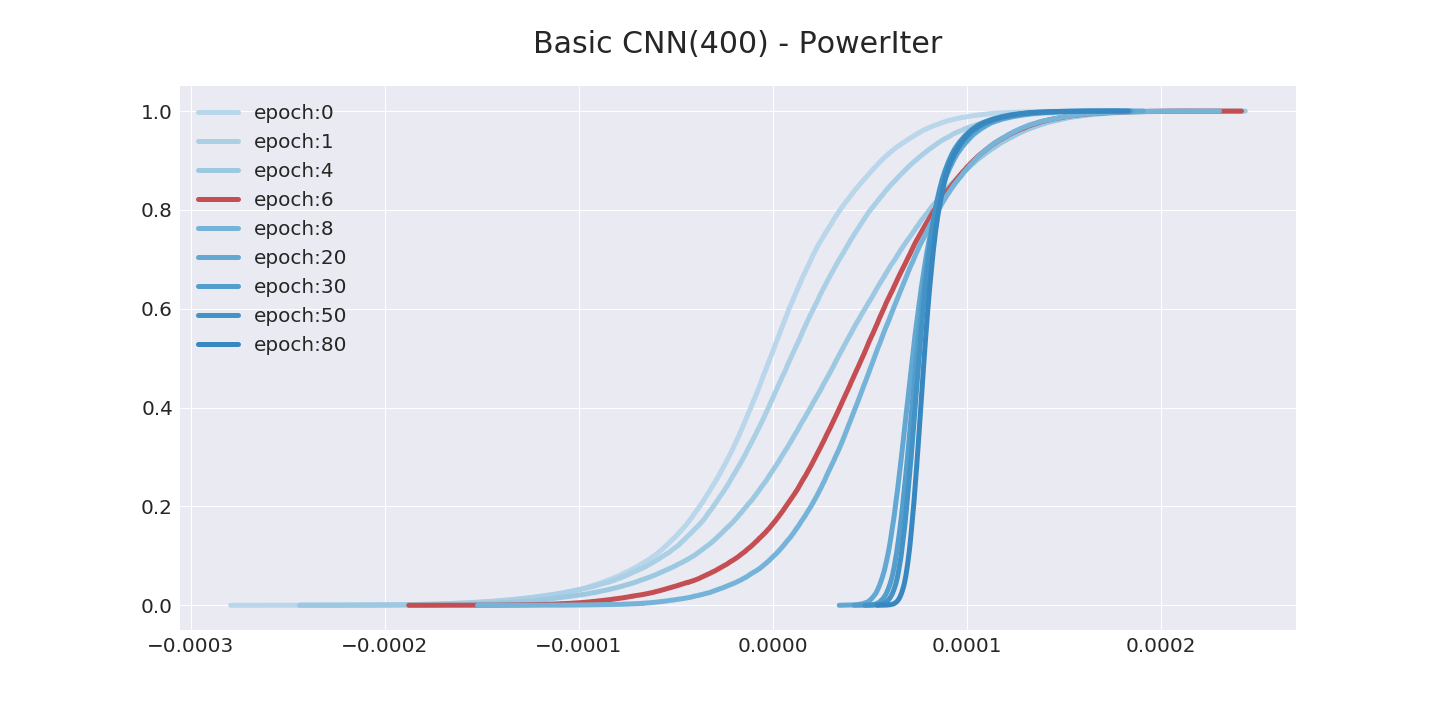}
\includegraphics[width=.32\textwidth]{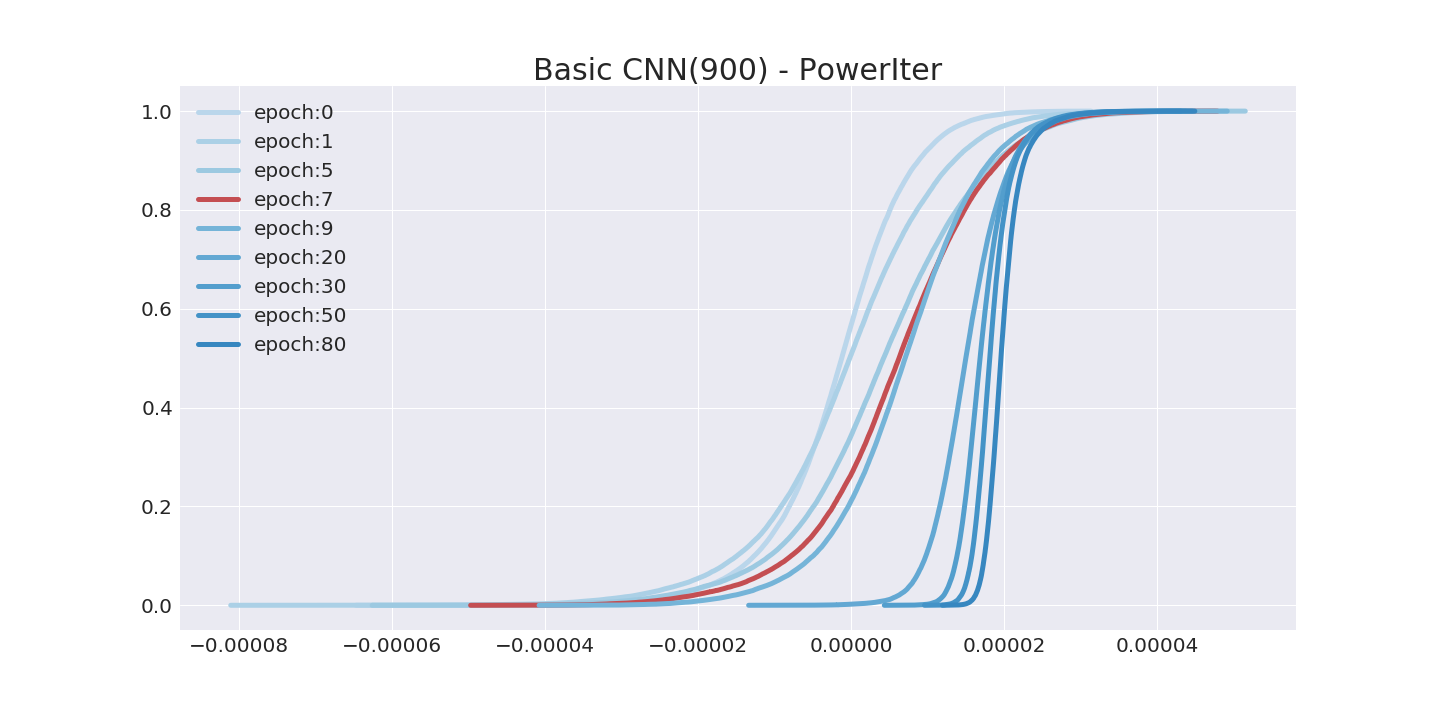}
\includegraphics[width=.32\textwidth]{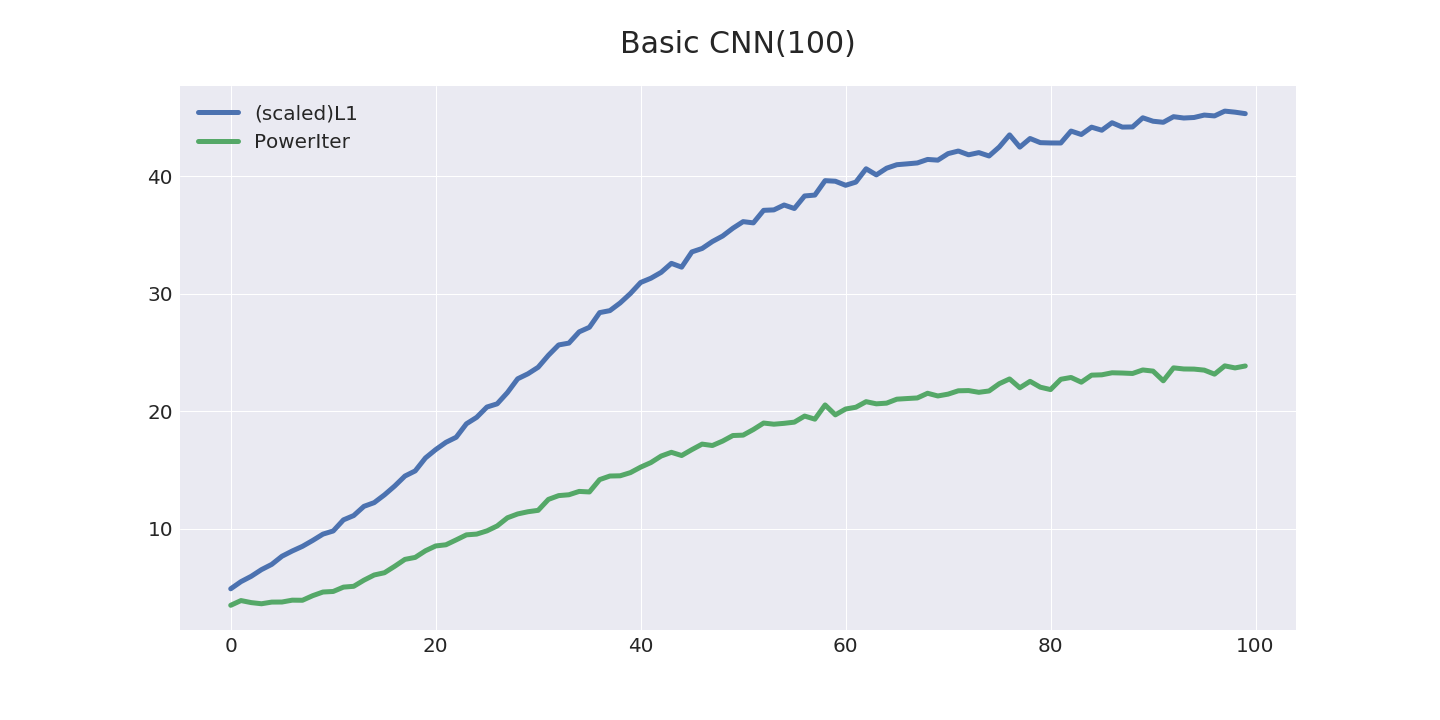}
\includegraphics[width=.32\textwidth]{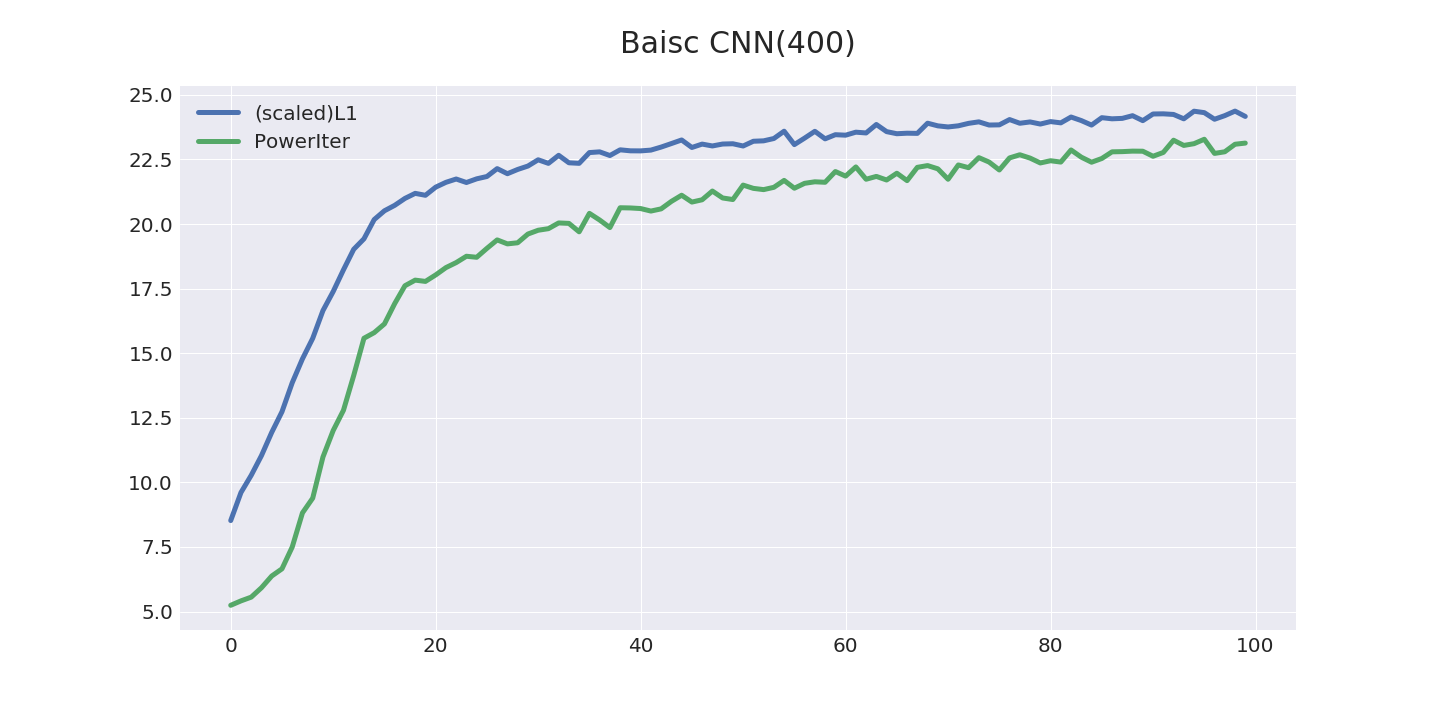}
\includegraphics[width=.32\textwidth]{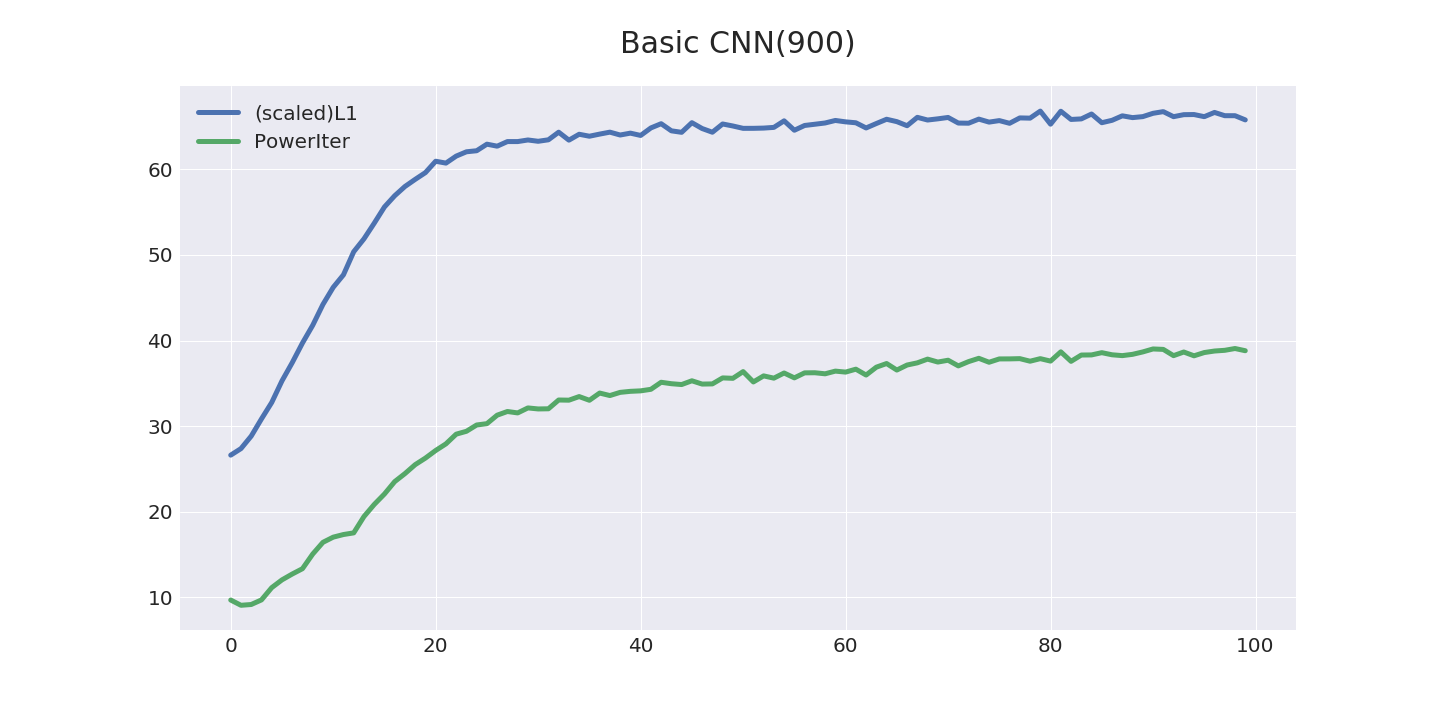}

\caption{Comparisons on normalization factor estimates by power iteration and the $\ell_1$-based estimate. Dataset: CIFAR10 with 10 percents corrupted. Net structure: Basic CNN with channels 50 (Top, Left), 100 (Top, Middle), 400 (Top Right), 200 (Middle, Left), 600 (Middle, Middle), 900 (Middle, Right). In the top row, the spectral norm in $L_f$ is estimated via the $\ell_1$-based estimate method and in the middle row, the spectral norm is estimated by power iteration. Bottom pictures show the estimates of $L_f$ by power iterations (in green color) and by the $\ell_1$-based estimate method (in blue color), respectively. The curves of $L_f$ estimates are rescaled for visualization since a fixed scaling factor along training doesn't influence the occurrence of cross-overs or phase transitions. Note that the original $\ell_1$-based estimates are of order $1e+17, 1e+19, 1e+21$ (100 channels, 400 channels, 900 channels, respectively) and the power iteration estimates are of $1e+3, 1e+3, 1e+3$ (100 channels, 400 channels, 900 channels, respectively). As shown above, a more accurate estimation of spectral norm may extend the range of predictability, but eventually faces the Breiman's dilemma if the model representation power grows too much against the dataset complexity. 
} \label{fig:powiter}
\end{figure}

In the end, it's worth to mention that different choices of the normalization factor estimation may affect the range of predictability, but still exhibit Breiman's dilemma. In all experiments above, normalization factor is estimated via the $\ell_1$-based estimate in Proposition \ref{prop:convnorm} in Section \ref{app:est}. One could also use power iteration \citep{miyato2018spectral} to present a more precise estimation on spectral norm. Usually the $\ell_1$-based estimates lead to a coarser upper bound than the power iterations, see Figure \ref{fig:powiter}. It is a fact that in training margin dynamics, large margins are typically improved at a slower speed than small margins. Therefore it turns out a more accurate estimation of spectral norm with faster increases in training may bring cross-overs (or phase transitions) in large training margins and extend the range of predictability. However Breiman's dilemma still persists when the balance between model representation power and dataset complexity is broken as model complexity arbitrarily grows. 


\section{Conclusion and Future Directions} \label{sec:conclusion}
In this paper, we show that Breiman's dilemma is ubiquitous in deep learning, in addition to previous studies on Boosting algorithms. We exhibit that Breiman's dilemma is closely related to the trade-off between the expressive power of models and the complexity of data. Large margins on training data do not guarantee a good control on model complexity. Instead, we have shown that phase transitions in dynamics of normalized margin distributions are able to reflect the trade-off between model expressiveness and data complexity. In particular, if high or large training margin distributions undergo decrease-increase phase transitions during training, similar to that of test margins, model expressiveness is comparable to data complexity and normalized training margin-based generalization bounds has the prediction power in capturing possible overfitting. We have shown two theorems derived from normalized Rademacher complexity bounds can be used to quantitatively capture a data-driven early stopping rule to prevent overfitting. However, if the training margin distributions, both high and low parts, undergo a uniform increase during training, model has over expressiveness with respect to the data and margin theory above fails. Such phase transitions of margin evolutions may reflect the \emph{degree-of-freedom} of models with respect to data, which measures the sensitivity of model prediction over data response. Roughly speaking, an increase-decrease phase transition in high margin distributions accompanying the increase of low margins, indicates the degree-of-freedom of models is relatively smaller than the data complexity where one has to sacrifice the high margin predictions to improve the low margin predictions. In contrast, a uniform increase of margins over all training sample suggests that the degree-of-freedom of models are larger than the data complexity. A detailed study is left for future on designing data-driven early stopping rule and degree-of-freedom for models by monitoring the margin dynamics.

\section*{Acknowledgement}
We thank Tommy Poggio, Peter Bartlett, and Xiuyuan Cheng for helpful discussions. The research was supported in part by the Hong Kong Research Grant Council (HKRGC) grant 16303817, Innovation and Technology Fund (ITF) UIM/390, the National Natural Science Foundation of China (No. 61370004, 11421110001), as well as awards from the Tencent AI Lab and Si Family Foundation.

\newpage
\appendix

\section{Appendix: More Experimental Figures}
\label{app:suppexp}

\titlespacing*{\subsection}{0pt}{2\baselineskip}{3\baselineskip}

\subsection{Architecture Details about Basic CNNs} 
\begin{figure}[htbp]
\centering
\includegraphics[width=.48\textwidth]{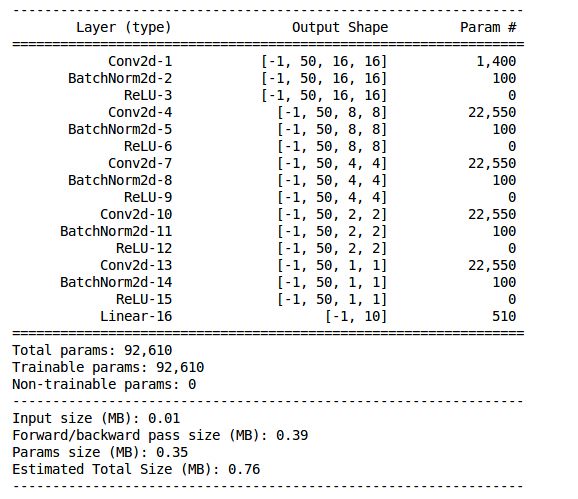}
\includegraphics[width=.48\textwidth]{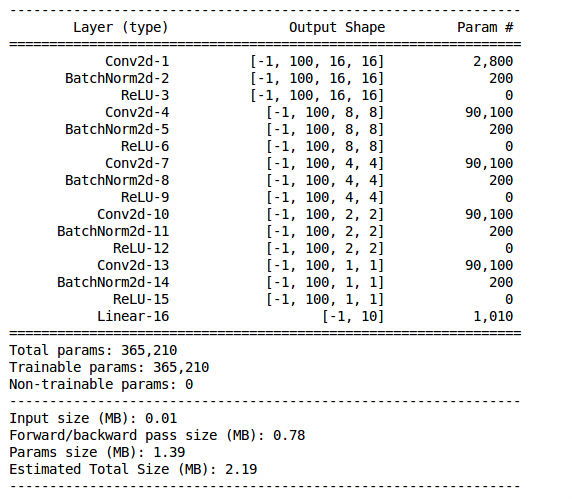}
\includegraphics[width=.48\textwidth]{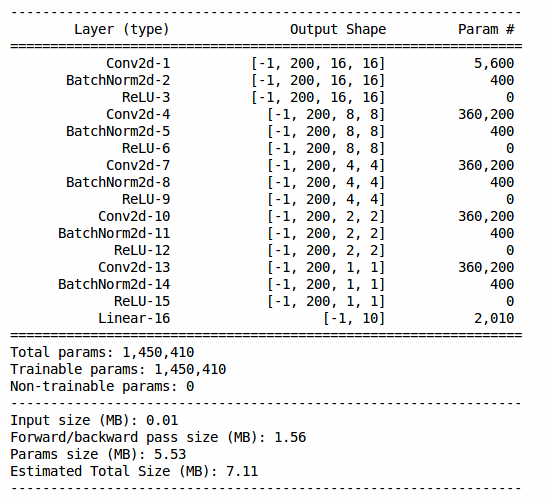}
\includegraphics[width=.48\textwidth]{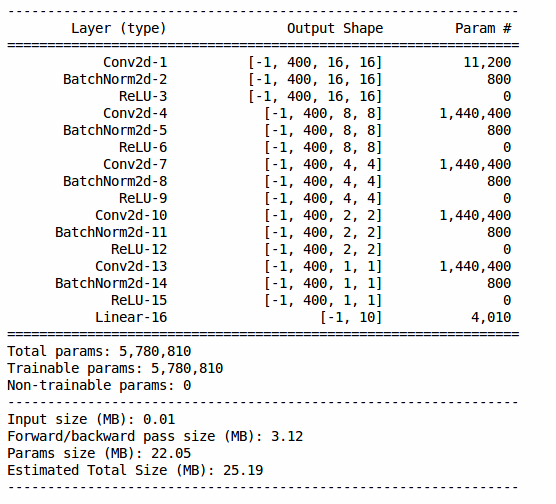}
\caption{Detailed information about CNN(50), CNN(100), CNN(200), and CNN(400).}\label{fig:cnns}
\end{figure}


\subsection{Two local minimums in ResNet-18}
\begin{figure}[H]
\centering
\includegraphics[width=.48\textwidth]{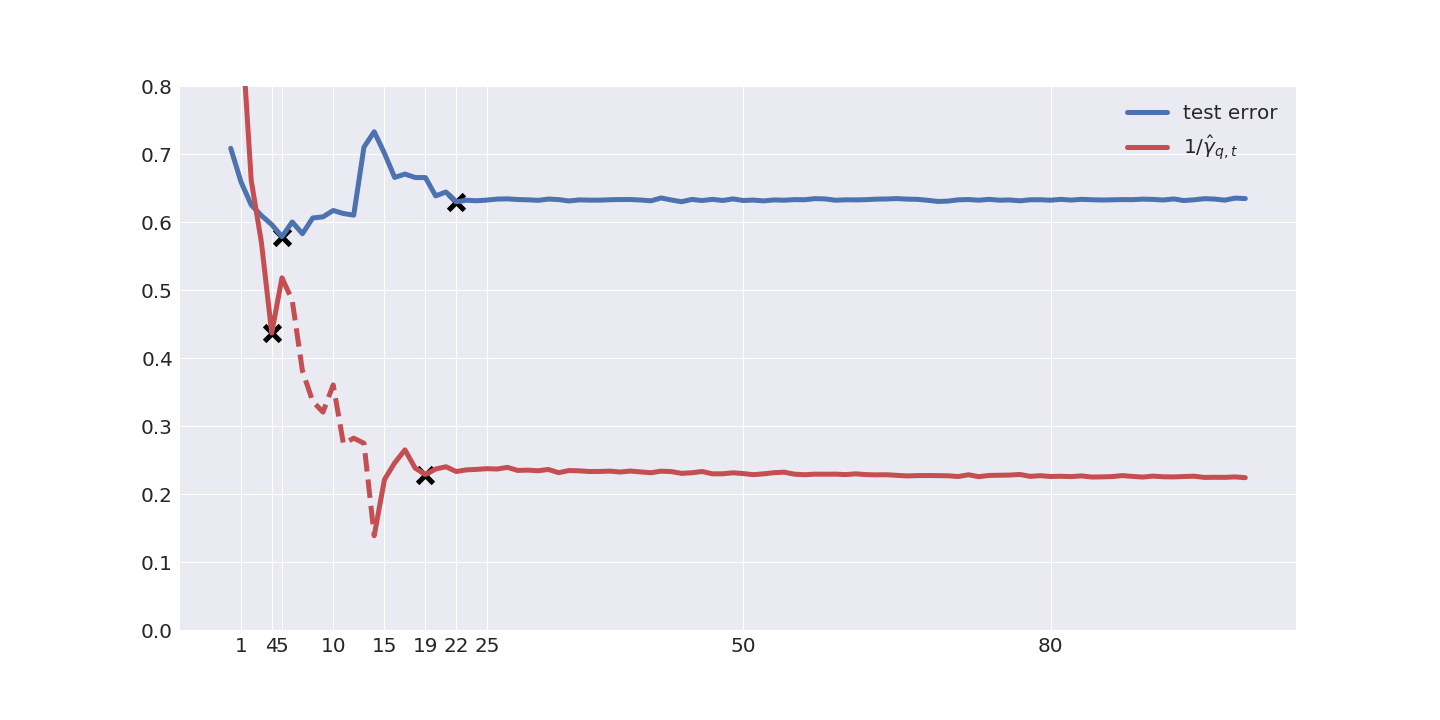}
\includegraphics[width=.48\textwidth]{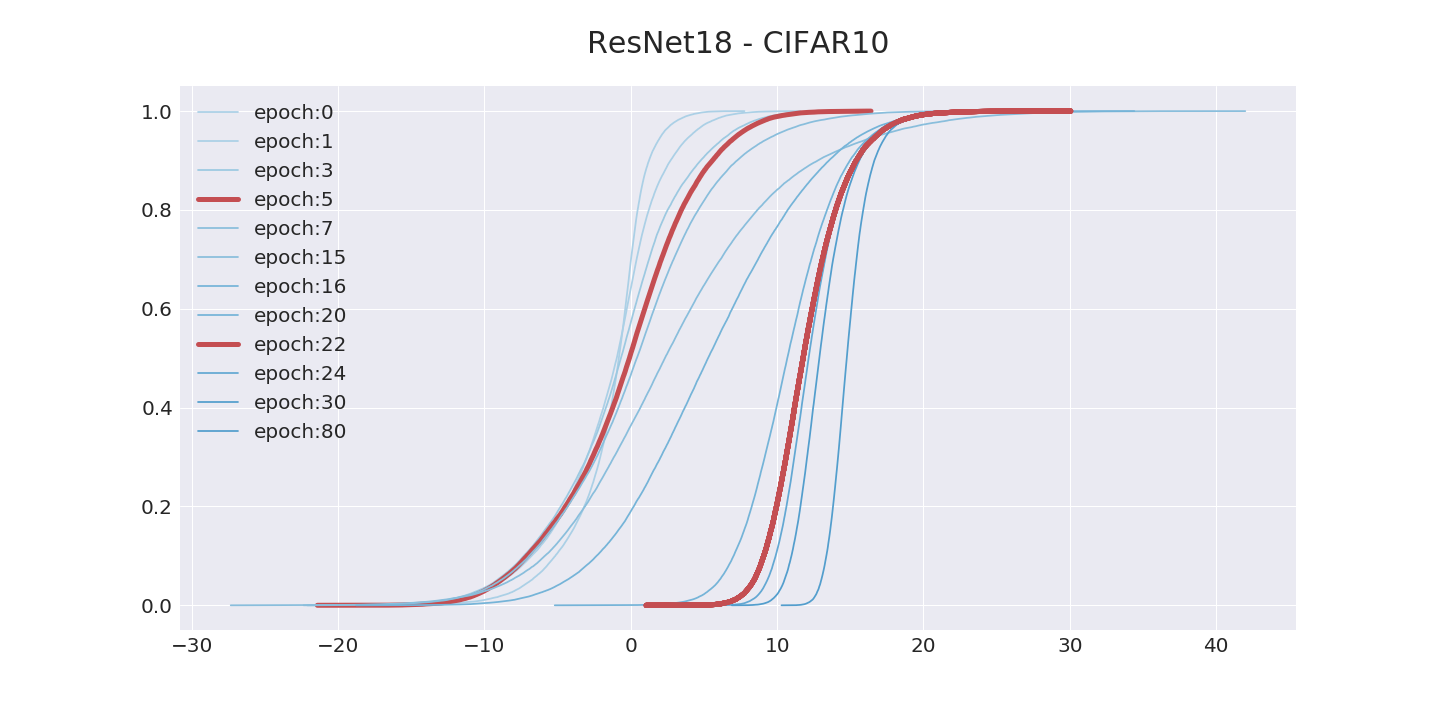}
\caption{Inverse quantile margin captures local optima, though may fail in predicting their relative order when model complexity is over-representative. Network: ResNet-18. Data: CIFAR10 with 10 percents label corrupted. Normalization factor, spectral complexity estimated by power iteration. Left: the dynamics of test error and inverse quantile margin with $q=0.95$. Overfitting occurs and two local minimums are marked with ``x'' in each dynamic. The dash line highlights the epochs when the training margins are monotonically improved. Right: dynamics of training margin distribution. Two distributions corresponding to local minima of test error are highlighted in red color. Since after the first (better) local minimum, the training margin distribution is uniformly improved in the second (worse) local minimum, that leads to the inverse quantile margin showing the second local minimum of smaller value. Yet the true order of the two local minima of test error is opposite. However, the inverse quantile margin still captures the optima locally, where the training margin distributions have cross-overs (phase-transitions) near local minima of test error.}\label{fig:res-qmar-twoloc}
\end{figure}


\section{Appendix: Proofs}
\label{app:pf}

\subsection{Auxiliary Lemmas}\label{app:lem}

\begin{lemma}\label{lem:ulll}
For any $\delta \in(0,1)$ and bounded-value functions $\gF_B:=\{ f:\gX\to \R: \|f\|_{\infty}\leq B\}$, the following holds with probability at least $1-\delta$,
\begin{equation}
    \sup_{f\in \gF_B} \E_n f(x) - \E f(x) \leq 
 2 \RC_n(\gF_B) + B\sqrt{\frac{\log(1/\delta)}{2n}}
\end{equation}
where 
\begin{equation}
    \RC_n(\gF) = \E \sup_{f\in \gF} \frac{1}{n} \sum_{i=1}^n \varepsilon_i f(x_{i})  
\end{equation}
is the Rademacher Complexity of function class $\gF$. 
\end{lemma}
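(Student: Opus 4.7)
The plan is to follow the textbook two-step argument: first establish concentration of the supremum around its mean via McDiarmid's bounded-differences inequality, then bound the mean by the Rademacher complexity through symmetrization with a ghost sample. This is standard but I will sketch the ingredients so that the constants work out.

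First I would define the random variable $\Phi(S) := \sup_{f\in\gF_B}\bigl(\E_n f - \E f\bigr)$. Since each $f\in\gF_B$ satisfies $|f|\leq B$, replacing one coordinate $x_i$ of $S$ with any $x_i'$ changes $\E_n f$ by at most $2B/n$, hence $|\Phi(S)-\Phi(S^{(i)})|\leq 2B/n$ (the supremum is $1$-Lipschitz in the objective). McDiarmid's inequality then yields, with probability at least $1-\delta$,
\[
\Phi(S)\ \leq\ \E[\Phi(S)]\ +\ B\sqrt{\frac{2\log(1/\delta)}{n}},
\]
which already matches the shape of the target bound up to constants; the slightly tighter $B\sqrt{\log(1/\delta)/(2n)}$ stated in the lemma can be recovered if one works with $[0,B]$-valued functions (bounded difference $B/n$), and in the two-sided case one absorbs the factor into the constant in front of $\RC_n$.

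Next I would bound $\E[\Phi(S)]$ by symmetrization. Introduce an independent ghost sample $S'=\{x_i'\}_{i=1}^n$ with the same distribution as $S$, so that $\E f = \E_{S'}[\E_n' f]$. By Jensen's inequality (pushing the sup inside the expectation over $S'$),
\[
\E[\Phi(S)]\ =\ \E_S\sup_{f\in\gF_B}\bigl(\E_n f - \E_{S'}\E_n' f\bigr)\ \leq\ \E_{S,S'}\sup_{f\in\gF_B}\frac{1}{n}\sum_{i=1}^n\bigl(f(x_i)-f(x_i')\bigr).
\]
Since $(x_i,x_i')$ and $(x_i',x_i)$ are identically distributed, the random variable $f(x_i)-f(x_i')$ is symmetric, so multiplying the $i$-th summand by a Rademacher sign $\varepsilon_i$ leaves the joint distribution unchanged. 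Splitting the $\varepsilon_i f(x_i)$ and $\varepsilon_i f(x_i')$ terms by the triangle inequality and noting both give the same expectation, I obtain
\[
\E[\Phi(S)]\ \leq\ 2\,\E_{S,\varepsilon}\sup_{f\in\gF_B}\frac{1}{n}\sum_{i=1}^n\varepsilon_i f(x_i)\ =\ 2\,\RC_n(\gF_B).
\]

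Combining the two displays yields the stated inequality. There is no real obstacle here: the argument is entirely standard, and the only item that requires mild care is matching the constant in the McDiarmid deviation term to the statement, which is a bookkeeping issue about whether $\|f\|_\infty\leq B$ is interpreted as a one-sided or two-sided bound. Everything else (symmetrization, Jensen, Rademacher randomization, triangle inequality) proceeds by a direct invocation of the usual steps.
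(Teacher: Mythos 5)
Your proposal is correct and follows essentially the same route as the paper's own proof: McDiarmid's bounded-differences inequality applied to $\sup_{f\in\gF_B}(\E_n f-\E f)$, followed by ghost-sample symmetrization and Rademacher randomization to bound the expectation by $2\RC_n(\gF_B)$. Your side remark about the constant is apt — the paper's proof likewise bounds the coordinate difference by $B/n$ rather than $2B/n$, which strictly speaking presumes a range-$B$ (e.g.\ $[0,B]$-valued) class rather than $\|f\|_\infty\leq B$ — but this bookkeeping point does not affect how the lemma is used.
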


For completeness, we include its proof that also needs the following well-known McDiarmid's inequality (see, e.g. \cite{wainwright19}).

\begin{lemma}[McDiarmid's Bounded Difference Inequality]\label{lem:mcdiarmid}
For $B_i$-bounded difference functions $h:\gX\to\R$ s.t. $|h(x_i,x_{-i}) - h(x_i^\prime,x_{-i})|\leq B_i $,
\[ \P\left\{ \E_n h - \E_x h(x) \geq \varepsilon \right\} \leq \exp\left(-\frac{2\epsilon^2}{\sum_{i=1}^n B_i^2} \right),\]
\end{lemma}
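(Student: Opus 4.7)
The plan is to prove McDiarmid's inequality by the standard Doob-martingale / bounded differences argument, reducing it to Hoeffding's lemma applied to a martingale difference sequence.

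First I would introduce the Doob martingale associated with $h(X_1,\ldots,X_n)$ under the product measure $\P = \P_1 \otimes \cdots \otimes \P_n$. Let $\gF_i = \sigma(X_1,\ldots,X_i)$ and define
\[ Z_i := \E[h \mid \gF_i], \qquad D_i := Z_i - Z_{i-1}, \]
so that $Z_0 = \E h$, $Z_n = h$, and $h - \E h = \sum_{i=1}^n D_i$ is a martingale difference sequence with respect to $(\gF_i)$. This reformulation turns the problem into bounding the upper tail of a sum of martingale differences.

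Next I would establish the conditional range bound $\sup D_i - \inf D_i \le B_i$. Conditioning on $\gF_{i-1}$, I would write $D_i$ as the integral against $\P_i$ of
\[ g(x_i) := \E\bigl[h(X_1,\ldots,x_i,X_{i+1},\ldots,X_n)\mid \gF_{i-1}\bigr] - Z_{i-1}, \]
so that $D_i = g(X_i)$. The bounded differences hypothesis $|h(x_i,x_{-i}) - h(x_i',x_{-i})| \le B_i$ then gives $|g(x_i) - g(x_i')| \le B_i$ for every pair $x_i,x_i'$, hence $\sup_{x_i} g(x_i) - \inf_{x_i} g(x_i) \le B_i$.

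The main step is then to invoke Hoeffding's lemma: any random variable $D$ with $\E[D\mid \gF_{i-1}]=0$ and conditional range in an interval of length $B_i$ satisfies
\[ \E\bigl[e^{s D_i}\mid \gF_{i-1}\bigr] \le \exp\!\bigl(s^2 B_i^2 / 8\bigr). \]
This is the key technical obstacle; I would prove it via the standard convexity trick (write $D_i$ as a convex combination of its range endpoints, use $e^{sx} \le \text{linear interpolation}$, and bound $\log\E e^{sD_i}$ by a second-order Taylor expansion whose second derivative is maximised at $1/4$ times the squared range length).

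Finally, iterating the tower property with the conditional MGF bound gives
\[ \E\bigl[e^{s(h - \E h)}\bigr] = \E\!\left[e^{s\sum_{i=1}^{n-1} D_i}\,\E\bigl[e^{sD_n}\mid \gF_{n-1}\bigr]\right] \le \cdots \le \exp\!\Bigl(\tfrac{s^2}{8}\sum_{i=1}^n B_i^2\Bigr). \]
A Chernoff-Markov bound then yields
\[ \P\{h - \E h \ge \varepsilon\} \le \exp\!\Bigl(-s\varepsilon + \tfrac{s^2}{8}\sum_i B_i^2\Bigr), \]
and optimising by setting $s = 4\varepsilon / \sum_i B_i^2$ produces the stated bound $\exp(-2\varepsilon^2 / \sum_i B_i^2)$. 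I expect the Hoeffding-lemma step to be the only nontrivial piece; the rest is bookkeeping with conditional expectations.
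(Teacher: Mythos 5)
The first thing to say is that the paper offers no proof of this lemma at all: it is imported as a standard result with a pointer to \cite{wainwright19} and used only as an ingredient in the proof of Lemma \ref{lem:ulll} (there with $B_i=B/n$). Your proposal is therefore necessarily a different route --- the canonical Doob-martingale proof --- and it is correct. The decomposition $h-\E h=\sum_{i=1}^n D_i$ with $D_i=\E[h\mid\gF_i]-\E[h\mid\gF_{i-1}]$, the conditional range bound $\sup D_i-\inf D_i\le B_i$, Hoeffding's lemma $\E[e^{sD_i}\mid\gF_{i-1}]\le \exp(s^2B_i^2/8)$ (applicable since $\E[D_i\mid\gF_{i-1}]=0$ by the tower property), the telescoped MGF bound, and the Chernoff optimization at $s=4\varepsilon/\sum_i B_i^2$, which indeed gives $\exp(-4\varepsilon^2/\sum_i B_i^2+2\varepsilon^2/\sum_i B_i^2)=\exp(-2\varepsilon^2/\sum_i B_i^2)$, all check out. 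Two details are worth making explicit in a full write-up. First, independence of the coordinates is essential and you rightly build it in via the product measure: it is what lets you realize $Z_i$ as $h$ with $X_{i+1},\ldots,X_n$ integrated out at frozen first $i$ coordinates, so the bounded-difference hypothesis transfers to your function $g$ and hence bounds the conditional range of $D_i$; without independence that range need not be controlled by $B_i$ and the inequality can fail. In the paper's setting this is supplied by the i.i.d.\ sampling assumption on $S$. Second, in the Hoeffding-lemma step it is the \emph{range} of $D_i$, not a symmetric absolute bound, that enters --- this is precisely why the constant is $B_i^2/8$ per coordinate and why the final exponent carries the factor $2$ rather than $1/2$; your convexity-plus-second-order-expansion sketch is the right argument for it.
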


\begin{proof}[Proof of Lemma \ref{lem:ulll}]
It suffices to show that for $\bar{f} = f(x) - \E f(x)$,  
\begin{equation}
    \sup_{f\in \gF_B} \E_n \bar{f}= \sup_{f\in \gF_B} \E_n \bar{f} - \E \sup_{f\in \gF_B} \E_n \bar{f}  + \E \sup_{f\in \gF_B} \E_n \bar{f}   
\end{equation}
where with probability at least $1-\delta$,
\begin{equation} \label{eq:bd}
    \sup_{f\in \gF_B} \E_n \bar{f} - \E \sup_{f\in \gF_B} \E_n \bar{f} \leq B\sqrt{\frac{\log 1/\delta}{2n}} 
\end{equation}
by McDiarmid's bounded difference inequality, and
\begin{equation} \label{eq:rc}
    \E \sup_{f\in \gF_B} \E_n \bar{f} \leq 2\RC_n(\gF) 
\end{equation}
using Rademacher complexity. 

To see (\ref{eq:bd}), we are going to show that $\sup_{f\in \gF_B} \E_n \bar{f}$ is a bounded difference function. Consider $g(x_1^n)=\E_n\bar{f}=\frac{1}{n}\sum_{i=1}^n f(x_i) - \E_x f(x)$. Assume that the $i$-th argument $x_i$ changes to $x^\prime_i$, then for every $g$, 
\begin{eqnarray*}
 g(x_i, x_{-i})-\sup_{g} g(x_i^\prime, x_{-i}) & \leq & g(x_i, x_{-i}) - g(x_i^\prime,x_{-i}) \\
 & \leq & \frac{1}{n} [f(x_i) - f(x_i^\prime)] \\
 & \leq & \frac{B}{n}.
\end{eqnarray*}
Hence $\sup_g g(x_i, x_{-i})-\sup_{g} g(x_i^\prime, x_{-i}) \leq B/n$, which implies that $\sup_{f\in \gF_B} \E_n \bar{f}$ is a $B/n$- bounded difference function. Then (\ref{eq:bd}) follows from the McDiarmid's inequality (Lemma \ref{lem:mcdiarmid}) using $B_i = B/n$ and $\delta=\exp(-2n\varepsilon^2/B^2)$.

As to (\ref{eq:rc}), 
\begin{eqnarray*}
\E \sup_{f\in \gF_B} \E_n \bar{f} & = & \E_{x_1^n} \sup_{f\in \gF_B} \E_{y_1^n} \left[\E_n f(x_1^n) - \E_n f(y_1^n)\right]  \\
& \leq & \E_{x_1^n,y_1^n} \sup_{f\in \gF_B} \left[\E_n f(x_1^n) - \E_n f(y_1^n)\right]  \\ 
& = & \E_{x_1^n,y_1^n} \sup_{f\in \gF_B}\E_{\varepsilon_1^n} \frac{1}{n}\sum_{i=1}^n\varepsilon_i  \left(f(x_i) - f(y_i)\right), \ \ \varepsilon_i \in \{\pm 1\} \sim \Bn(n,1/2) \\
& \leq & \E_{x_1^n,y_1^n, \varepsilon_1^n}\sup_{f\in \gF_B} \frac{1}{n} \sum_{i=1}^n( \varepsilon_i f(x_i) - \varepsilon_i f(y_i) )\\
& \leq & 2 \E_{x_1^n, \varepsilon_1^n}\sup_{f\in \gF_B} \frac{1}{n} \sum_{i=1}^n \varepsilon_i f(x_i) = 2 \RC(\gF_B)
\end{eqnarray*}
that ends the proof.
\end{proof}

We also need the following contraction inequality of Rademacher Complexity  \citep{LedTal91,MeiZha03}. 
\begin{lemma}[Rademacher Contraction Inequality]\label{lem:contraction}
For any Lipschitz function: $\phi:\R \to \R$ such that $|\phi(x) - \phi(y)|\leq L | x - y|$, 
\[ \RC(\phi\circ \gF ) \leq L \RC(\gF).\] 
\end{lemma}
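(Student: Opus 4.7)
The plan is to prove the inequality by a ``peeling'' argument that swaps the nonlinearity $\phi$ for its Lipschitz linearization $L\cdot\mathrm{id}$ one coordinate at a time. Fixing a sample $x_1,\ldots,x_n$ and writing $\E_\varepsilon$ for expectation over the Rademacher signs, I would introduce the interpolating quantities
\[
T_k := \E_\varepsilon \sup_{f\in\gF} \frac{1}{n}\left[ \sum_{i=1}^{k} \varepsilon_i\, L f(x_i) \;+\; \sum_{i=k+1}^{n} \varepsilon_i\, \phi(f(x_i)) \right], \qquad k=0,1,\ldots,n,
\]
so that $T_0$ equals the conditional Rademacher complexity of $\phi\circ\gF$ and $T_n = L\,\RC_n(\gF)$ by direct inspection. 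It then suffices to establish the monotonicity $T_k \leq T_{k+1}$ for every $k$; chaining these inequalities and taking outer expectation over $x_1,\ldots,x_n$ gives the claim.

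The key step, which I expect to be the main obstacle, is this per-coordinate monotonicity. Fixing $k$ and conditioning on all $\varepsilon_i$ with $i\neq k+1$, the sums over $i\neq k+1$ become a deterministic real-valued functional $g(f)$ on $\gF$, and expanding the remaining expectation over $\varepsilon_{k+1}\in\{\pm 1\}$ as an average reduces the comparison to
\[
\tfrac12\sup_{f\in\gF}\bigl[g(f)+\phi(f(x_{k+1}))\bigr] + \tfrac12\sup_{f\in\gF}\bigl[g(f)-\phi(f(x_{k+1}))\bigr] \leq \tfrac12\sup_{f\in\gF}\bigl[g(f)+L f(x_{k+1})\bigr] + \tfrac12\sup_{f\in\gF}\bigl[g(f)-L f(x_{k+1})\bigr].
\]
For any $\eta>0$, pick near-suprema $f_1,f_2\in\gF$ of the two left-hand branches, so the left-hand side is bounded by $\tfrac12[g(f_1)+g(f_2)] + \tfrac12[\phi(f_1(x_{k+1}))-\phi(f_2(x_{k+1}))] + \eta$. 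The Lipschitz hypothesis gives $\phi(f_1(x_{k+1}))-\phi(f_2(x_{k+1})) \leq L|f_1(x_{k+1})-f_2(x_{k+1})|$, and a case split on the sign of $f_1(x_{k+1})-f_2(x_{k+1})$ removes the absolute value while preserving the correct pairing: in one case we regroup as $\tfrac12[g(f_1)+L f_1(x_{k+1})] + \tfrac12[g(f_2)-L f_2(x_{k+1})]$, in the other as the swap, and in either case each summand is at most the corresponding supremum on the right-hand side. Letting $\eta\downarrow 0$ closes the inequality.

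Telescoping $T_0\leq T_1\leq\cdots\leq T_n$ then yields $\RC_n(\phi\circ\gF) \leq L\,\RC_n(\gF)$ conditionally on the sample, and taking expectation over the data completes the proof. Two sanity checks worth flagging: the argument uses only that $\varepsilon_{k+1}$ takes the values $\pm 1$ with equal probability, so the Rademacher symmetry is essential; and the hypothesis $\phi(0)=0$ is \emph{not} required, since adding a constant to $\phi$ leaves the one-sided Rademacher complexity unchanged (as $\E\varepsilon_i=0$).
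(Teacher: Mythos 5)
Your proof is correct, and it is worth noting that the paper itself does not prove this lemma at all: it states it and defers to the literature, citing Ledoux--Talagrand and Meir--Zhang, and remarking that the factor-one form appears as Talagrand's Lemma in Mohri et al. Your coordinate-by-coordinate peeling argument is precisely the standard proof given in those references, and every step checks out: the interpolants $T_k$ telescope correctly, the conditioning on $\varepsilon_i$ for $i\neq k+1$ legitimately absorbs both the already-linearized terms $\varepsilon_i L f(x_i)$ and the remaining $\varepsilon_i\,\phi(f(x_i))$ into a single deterministic functional $g(f)$ (the $1/n$ factor only rescales the Lipschitz constant homogeneously), and the case split on the sign of $f_1(x_{k+1})-f_2(x_{k+1})$ pairs each near-supremum with the correct sign of $\pm L f(x_{k+1})$ on the right-hand side. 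Your two flagged sanity checks are also accurate and relevant here: the paper's definition of $\RC_n$ is the one-sided version without absolute values, which is exactly why no $\phi(0)=0$ hypothesis is needed (a constant shift of $\phi$ is annihilated by $\E\,\varepsilon_i=0$), and why the factor of $2$ from the Ledoux--Talagrand absolute-value version --- which the paper explicitly notes is dropped --- never arises in your argument. In short, you have supplied the missing standard proof that the paper's citations point to, at the correct constant.
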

\cite{LedTal91} has an additional factor 2 in the contraction inequality which is dropped in \cite{MeiZha03}. Its current form is stated in \cite{mohri2012foundations} as Talagrand's Lemma (Lemma 4.2). 


The last lemma gives the Rademacher complexity of the hypothesis space of maximum over functions in different hypothesis spaces \citep{LedTal91}.
\begin{lemma}\label{lem:maxrad}
Let $\gF_1, \ldots, \gF_m$ be $m$ hypothesis space and define
$$ \gM=\{\max\{f_1(x), \ldots, f_m(x)\}: \gX\to\R, f_i\in\gF_i, i=1,\ldots, m\}.$$ 
Then,
\begin{equation*}
\RC_n(\gM) \leq \sum_{i=1}^m\RC_n(\gF_i).
\end{equation*}
\end{lemma}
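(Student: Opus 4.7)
My plan is to reduce the general $m$ case to the $m=2$ case by induction, and to handle $m=2$ via the elementary identity
\[ \max(a,b) \;=\; \frac{a+b}{2} + \frac{|a-b|}{2}, \]
which converts a max into a sum plus an absolute value, both of which play nicely with Rademacher complexity.

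Concretely, for $m=2$, any $h\in\gM$ can be written as $h(x)=\tfrac{1}{2}(f_1(x)+f_2(x))+\tfrac{1}{2}|f_1(x)-f_2(x)|$ with $f_i\in\gF_i$. Plugging this into the definition of $\RC_n(\gM)$, I use subadditivity of supremum to split
\[ \RC_n(\gM) \;\leq\; \tfrac{1}{2}\RC_n(\gF_1+\gF_2) \;+\; \tfrac{1}{2}\RC_n(|\gF_1-\gF_2|), \]
where $\gF_1\pm\gF_2:=\{f_1\pm f_2:f_i\in\gF_i\}$ and $|\gF_1-\gF_2|:=\{|f_1-f_2|:f_i\in\gF_i\}$. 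For the first term, subadditivity of the sup over a sum gives $\RC_n(\gF_1+\gF_2)\leq\RC_n(\gF_1)+\RC_n(\gF_2)$. For the second term, viewing $|\gF_1-\gF_2|$ as $\phi\circ(\gF_1-\gF_2)$ with $\phi(t)=|t|$ which is $1$-Lipschitz, the contraction inequality (Lemma~\ref{lem:contraction}) yields $\RC_n(|\gF_1-\gF_2|)\leq\RC_n(\gF_1-\gF_2)$, and then the distributional symmetry $\varepsilon_i\overset{d}{=}-\varepsilon_i$ gives $\RC_n(-\gF_2)=\RC_n(\gF_2)$, so subadditivity again produces $\RC_n(\gF_1-\gF_2)\leq\RC_n(\gF_1)+\RC_n(\gF_2)$. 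Summing the two halves yields $\RC_n(\gM)\leq\RC_n(\gF_1)+\RC_n(\gF_2)$.

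For the general $m$, I induct using the associativity $\max\{f_1,\ldots,f_m\}=\max\{f_1,\max\{f_2,\ldots,f_m\}\}$. Writing $\gM_{m-1}$ for the class of pointwise maxima of $f_2,\ldots,f_m$, the inductive hypothesis gives $\RC_n(\gM_{m-1})\leq\sum_{i=2}^m\RC_n(\gF_i)$, and applying the $m=2$ bound to the pair $(\gF_1,\gM_{m-1})$ closes the induction.

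The one step I expect to require care is the application of the contraction lemma to $|\gF_1-\gF_2|$: the lemma as stated is for a single Lipschitz $\phi$ applied to a single function class, so I must first view $\gF_1-\gF_2$ as one function class indexed by the pair $(f_1,f_2)\in\gF_1\times\gF_2$, and only then compose with $\phi=|\cdot|$; otherwise one might mistakenly try to take a sup jointly over two arguments of $\phi$, where contraction does not directly apply. Aside from this, the argument is bookkeeping and does not need tightness beyond what the three building-block inequalities provide.
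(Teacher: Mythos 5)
Your proof is correct. The paper itself states this lemma without proof, citing Ledoux--Talagrand, and your argument --- the identity $\max(a,b)=\tfrac12(a+b)+\tfrac12|a-b|$, subadditivity of the supremum, the contraction inequality applied to $\phi=|\cdot|$ on the pair-indexed class $\gF_1-\gF_2$, symmetry of the Rademacher variables, and induction on $m$ --- is exactly the standard argument behind that citation. Your care in treating $\gF_1-\gF_2$ as a single class indexed by $(f_1,f_2)$ before composing with $\phi$ is the right precaution, and the factor-free contraction lemma quoted in the paper (Lemma~\ref{lem:contraction}) does apply here since the paper's $\RC_n$ is the one-sided version without absolute values, so no constant is lost.
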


\subsection{Proof of Proposition \ref{prop:lowbound}}
\begin{proof}[Proof of Proposition \ref{prop:lowbound}]
The key idea is to approximate the linear function restricted in the Lipschitz ball by the neural network, where the local linearity of activation functions plays an important role. Therefore, we can show a subset of $\gH_L$ whose Rademacher complexity is larger than that of the (restricted) linear function.

We consider the Taylor expansion of $\sigma(x)$ around $x_0$, $\sigma(x)=\sigma(x_0) + \sigma'(x_0)(x-x_0) + o(x-x_0)$, and thus,
\begin{equation}\label{eq:prop1-1}
\sup_{x\in [x_0-\delta,x_0+\delta]} \frac{|\sigma(x)-(\sigma(x_0)+\sigma'(x_0)(x-x_0))|}{\delta} \to 0\ \textnormal{as}\ \delta\to 0^+,
\end{equation}
and there exists a $\delta_0>0$, $\forall\ 0<\delta\leq\delta_0$,
\begin{equation}\label{eq:prop1-2}
\frac{1}{\sigma'(x_0)}(\sigma(x)-\sigma(x_0)) + x_0 \in [x_0- \delta, x_0 + \delta]\ \textnormal{if}\ x \in [x_0- \delta/2, x_0 + \delta/2].\\
\end{equation}

Without loss of generality, we assume $x_0 = 0, \sigma(0)=0$ and $\sigma'(0)=1$ since we can always do a linear transformation before and after each activation function and the additional Lipschitz can be bounded by a constant. We further assume the Lipschitz constant $L_\sigma=1$ for simplicity. 

Let $\gT (r):=\{\left<w_0, x\right>: \|w_0\|_2\leq r\}$ be the class of linear function with Lipschitz semi-norm less than $r$ and we show that given a $M>0$, for each $t\in\gT (L)$, there exists $f\in\gF$ with $\|f\|_\gF\leq L$ and $y_0\in\{1, \ldots, K\}$ such that $h(x)= [f(x)]_{y_0}$ satisfying $|h(x)-\left<w_0, x\right>|\to 0,\ \forall\ \|x\|_2\leq M$.

To see this, define $t(x)=\left<w_0, x\right>$ with $\|w_0\|_2\leq L$, which satisfies $t\in \gT(L)$. Next we construct a particular $l$-layer network $f_{({w_0}, \delta, M)}: x\to x_l$ as follows
\begin{align*}
x^{0} & = x, \\
x^{0.5} & = W_1x^0 + b_1 = ((\left<w_0, x^0\right>\frac{\delta}{ML}), 0, \ldots, 0), \\
x^{1}  & = \sigma (x^{0.5}) = (\sigma(\left<w_0, x^0\right>\frac{\delta}{ML}), 0, \ldots, 0), \\
x^{i-0.5} &= W_{i}x^{i-1}+b_i = ([x^{i-1}]_1, 0, \ldots, 0),\\
x^{i} & = \sigma (x^{i-0.5}) = (\sigma[x^{i-1}]_1, 0, \ldots, 0) \ \ \ i=2,\ldots,l-1, \\
f_{w_0, \delta, M}(x)  & = W_l x^{l-1}+b_l = (\frac{MLx^{l-1}}{\delta}, 0, \ldots, 0). 
\end{align*}

With such a construction $f_{w_0, \delta, M}(x)$, define $h(x) = [f_{w_0, \delta, M}(x)]_1$. Then $h\in\gH_L$ since $\|f\|_\gF\leq \Pi_{i=1}^l \|W_i\|_\sigma\leq \|w_0\|\frac{\delta}{ML}\frac{ML}{\delta}\leq L$, and 
\begin{align}
|\left<w_0, x\right>- [f_{w_0, \delta, M}(x)]_1| &\leq \frac{ML}{\delta} |\sigma^{l-1}(\tilde{x}) - \tilde{x}|,\nonumber \\
&\leq ML\sum_{i=1}^{l-1} \frac{|\sigma^{i}(\tilde{x}) - \sigma^{i-1}(\tilde{x})|}{\delta}\xrightarrow{\delta\to 0^+} 0, \label{eq:prop1-3}
\end{align}
where $\tilde{x}= \left<w_0, x^0\right>\frac{\delta}{ML}$ and $\sigma^k$ stands for the composite of $k$ $\sigma$ functions. The second inequality is implied from (\ref{eq:prop1-1}) and (\ref{eq:prop1-2}) since $\tilde{x}\in[-\delta, \delta]$. Moreover, given $M>0$ and $\delta>0$, we define a subclass $\gH_L^{\delta, M}\subset \gH_L$ by,
\begin{equation*}
\gH_L^{\delta, M} = \{h(x): h(x)=[f_{w, \delta, M}(x)]_1\ \textnormal{with}\ \|w\|_2\leq L\}
\end{equation*}
We firstly consider the empirical Rademacher complexity for a given sample set $S$ of size $n$. Let $M_S=\sup_{x\in S}\|x\|_2$ and for any given $\delta>0$,
\begin{align}
\RC_S (\gH_L) &\geq \RC_S(\gH_L^{\delta, M_S}), \nonumber \\
&= \E_{\epsilon}\sup_{h\in\gH_L^{\delta, M_S}}\frac{1}{n}\sum_{i=1}^n \epsilon_i h(x_i), \nonumber \\
& = \E_{\epsilon}\sup_{\|w\|_2\leq L}\frac{1}{n}\sum_{i=1}^n \epsilon_i [f_{w, \delta, M_S}(x_i)]_1, \nonumber \\
& = \E_{\epsilon}\sup_{\|w\|_2\leq L}\frac{1}{n}\sum_{i=1}^n \epsilon_i \left(\left<w,x_i\right> - (\left<w,x_i\right> - [f_{w, \delta, M_S}(x_i)]_1)\right), \nonumber \\
&\geq \E_{\epsilon}\left[\sup_{\|w\|_2\leq L}\frac{1}{n}\sum_{i=1}^n \epsilon_i \left<w, x_i\right>\right] + \ldots \nonumber \\
& \ \ \ \ \ \ \ \ \ \ - \E_{\epsilon}\left[\sup_{\|w\|_2\leq L}\frac{1}{n}\sum_{i=1}^n\epsilon_i (\left<w, x_i\right>- [f_{w, \delta, M_S}(x_i)]_1)\right], \nonumber \\
& \geq \E_{\epsilon}\left[\sup_{\|w\|_2\leq L}\frac{1}{n}\sum_{i=1}^n \epsilon_i \left<w, x_i\right>\right] -\sup_i\sup_{\|w\|_2\leq L}|\left<w, x_i\right>- [f_{w, \delta, M_S}(x_i)]_1|, \nonumber \\
& = L\E_\epsilon\left\|\frac{1}{n}\sum_{i=1}^n\epsilon_i x_i \right\|_2 -\sup_i\sup_{\|w\|_2\leq L}|\left<w, x_i\right>- [f_{w, \delta, M_S}(x_i)]_1|, \label{eq:prop1-4} \\
& \geq CL\sqrt{\frac{1}{n}\sum_{i=1}^n\|x_i\|_2} - \sup_i\sup_{\|w\|_2\leq L}|\left<w, x_i\right>- [f_{w, \delta, M_S}(x_i)]_1|, \label{eq:prop1-5}
\end{align}
where (\ref{eq:prop1-4}) is implied from the Cauchy-Schwarz inequality and (\ref{eq:prop1-5}) is due to the Khintchine inequality.\\ 
From (\ref{eq:prop1-3}), we can choose proper $\delta_{M_S}>0$ such that, 
\begin{equation*}
\sup_i\sup_{\|w\|_2\leq L}|\left<w, x_i\right>- [f_{w, \delta_{M_S}, M_S}(x_i)]_1| \leq \frac{CL}{2}\E_S{\sqrt{ \frac{1}{n}\sum_{i=1}^n\|x_i\|_2}},
\end{equation*}
and the right hand side is independent with $S$. Then by taking expectation over $S$ in upper bound (\ref{eq:prop1-5}), 
\begin{equation*}
\RC_n(\gH_L) \geq CL \E_S{\sqrt{\frac{1}{n}\sum_{i=1}^n\|x_i\|_2}},
\end{equation*}
where we absorb a factor $1/2$ into constant $C$ without changing the notation.
\end{proof}

\subsection{Proof of Theorem \ref{thm:marg-err}}
\begin{proof}[Proof of Theorem \ref{thm:marg-err}]
Given $\theta>0$, we firstly introduce a useful lower bound of $\zeta(f(x), y)$, 
\begin{align*}
\zeta^\theta(f(x), y) :=& [f(x)]_y - \max_{y'}([f(x)]_{y'} - \theta 1[y=y']),\\
 	= & \min \left\{  [f(x)]_y - \max_{y'\neq y}([f(x)]_{y'} - \theta 1[y=y']), \theta 1[y=y'] \right\}, \\
	\leq & [f(x)]_y - \max_{y'\neq y}([f(x)]_{y'} - \theta 1[y=y']), \\
	= & [f(x)]_y - \max_{y'\neq y}[f(x)]_{y'} = \zeta(f(x), y).
\end{align*}
Therefore $\zeta^\theta(f(x), y)=\min(\zeta(f(x),y),\theta)$, that implies following equality for all $\theta\geq \gamma_2$,
\[
\ell_{(\gamma_1, \gamma_2)}(\zeta^{\theta}(f(x), y)) = \ell_{(\gamma_1, \gamma_2)}\zeta(f(x), y).
\]
Now define $\gG_L$ and $\gG^\theta_L$ as follows, 
\begin{align*}
\gG_L =& \{g(x, y)=\zeta(f(x), y): \gX\times\gY\to\R, f\in\gF\ \textnormal{with}\ \|f\|_{\gF}\leq L\},\\
\gG^\theta_L :=& \{g(x, y)=\zeta^{\theta}(f(x), y): \gX\times\gY\to\R, f\in\gF\ \textnormal{with}\ \|f\|_{\gF}\leq L\},
\end{align*}
and we can shift our attention from $\gG_L$ to $\gG^\theta_L$ which is the key to achieve a $O(K)$ factor in Theorem \ref{thm:marg-err} rather than $O(K^2)$. 

To see this, let $\nf:=f/L_f$ be the normalized network and thus $\zeta^{2\gamma_2}(\nf(x),y)\in\gG^{2\gamma_2}_1$. Then for any $\gamma_2>\gamma_1\geq 0$,
\begin{align}
P[\zeta(\nf(x),y)<\gamma_1] &\leq \E[\ell_{(\gamma_1, \gamma_2)}(\zeta(\nf(x), y))], \nonumber \\
& = \E[\ell_{(\gamma_1, \gamma_2)}(\zeta^{2\gamma_2}(\nf(x), y))], \nonumber \\
& \leq \eP_n \ell_{(\gamma_1, \gamma_2)}(\tilde{f}(x), y) + 2\RC_n({l_{(\gamma_1, \gamma_2)}\circ\gG^{2\gamma_2}_1}) + \sqrt{\frac{\log(1/\delta)}{2n}}, \nonumber \\
&\leq \eP_n \ell_{(\gamma_1, \gamma_2)}(\tilde{f}(x), y) + \frac{2}{\Delta}\RC_n(\gG^{2\gamma_2}_1) + \sqrt{\frac{\log(1/\delta)}{2n}}, \label{eq:thm1-1}
\end{align}
where the first inequality is implied from $1[\zeta <\gamma_1]\leq\ell_{(\gamma_1, \gamma_2)}(\zeta)$, the second inequality is a direct consequence of Lemma \ref{lem:ulll}, the third inequality results from Rademacher Contraction Inequality (Lemma \ref{lem:contraction}). 

Now we will do a detailed analysis on $\RC_n(\gG^{2\gamma_2}_1)$,
\begin{align}
\RC_n(\gG_1^{2\gamma_2}) &= \frac{1}{n}\E_{S,\epsilon}\left[\sup_{\|f\|_{\gF}\leq 1}\sum_{i=1}^n \epsilon_i \left([f(x_i)]_{y_i}  -  (\max_{y'} [f(x)]_{y'} - 2\gamma_21[y_i=y'])\right)\right],\nonumber \\
&\leq \underbrace{\frac{1}{n}\E_{S,\epsilon}\left[\sup_{\|f\|_{\gF}\leq 1}\sum_{i=1}^n\epsilon_i [f(x_i)]_{y_i}\right]}_{A_1} + \ldots \nonumber \\
& \ \ \ \ \ \ \ \ \ \ 
 	+\underbrace{\frac{1}{n} \E_{S,\epsilon}\left[\sup_{\|f\|_{\gF}\leq 1}\sum_{i=1}^n\epsilon_i \max_{y'}\left([f(x_i)]_{y'} - 2\gamma_21[y_i=y']\right)\right]}_{A_2},\label{eq:thm1-2}
\end{align}
\begin{align*}
A_1 &= \frac{1}{n}\E_{S,\epsilon}\left[\sup_{\|f\|_{\gF}\leq 1}\sum_{i=1}^n\epsilon_i\sum_{y\in\gY} [f(x_i)]_{y}1[y=y_i]\right], \\
&\leq \frac{1}{n}\sum_{y\in\gY}\E_{S,\epsilon}\left[\sup_{\|f\|_{\gF}\leq 1}\sum_{i=1}^n\epsilon_i[f(x_i)]_{y}1[y=y_i]\right], \\
&= \frac{1}{n}\sum_{y\in\gY}\E_{S,\epsilon}\left[\sup_{\|f\|_{\gF}\leq 1}\sum_{i=1}^n\epsilon_i[f(x_i)]_{y}\left(\frac{2\cdot1[y=y_i]-1}{2} + \frac{1}{2}\right)\right],\\
&\leq \frac{1}{2n}\sum_{y\in\gY}\E_{S,\epsilon}\left[\sup_{\|f\|_{\gF}\leq 1}\sum_{i=1}^n\epsilon_i[f(x_i)]_{y}\left(2\cdot1[y=y_i]-1\right)\right] + \ldots \\
&\ \ \ \ \ \ \ \ \ \ +	\frac{1}{2n}\sum_{y\in\gY}\E_{S,\epsilon}\left[\sup_{\|f\|_{\gF}\leq 1}\sum_{i=1}^n\epsilon_i[f(x_i)]_{y}\right],\\
&\leq \frac{1}{2n}\sum_{y\in\gY}\E_{S,\epsilon}\left[\sup_{\|f\|_{\gF}\leq 1}\sum_{i=1}^n\epsilon_i'[f(x_i)]_{y}\right] +
	\frac{1}{2n}\sum_{y\in\gY}\E_{S,\epsilon}\left[\sup_{\|f\|_{\gF}\leq 1}\sum_{i=1}^n\epsilon_i[f(x_i)]_{y}\right],\\
& \ \ \ \ \ \ \ \ \ \ \mbox{where $\epsilon'_i:=\epsilon_i (2\cdot1[y=y_i]-1) \overset{d}{=} \epsilon_i \sim \frac{1}{2}\delta_{-1}+\frac{1}{2}\delta_1 $},\\
&= \frac{1}{n}\sum_{y\in\gY}\E_{S,\epsilon}\left[\sup_{\|f\|_{\gF}\leq 1}\sum_{i=1}^n\epsilon_i[f(x_i)]_{y}\right],\\
&\leq \frac{1}{n}\sum_{y\in\gY}\E_{S,\epsilon}\left[\sup_{h\in\gH_1}\sum_{i=1}^n\epsilon_ih(x_i)\right],\\
&= K\RC_n(\gH_1).
\end{align*}
For the second term $A_2$ in (\ref{eq:thm1-2}),
\begin{align*}
A_2 &\leq \frac{1}{n} \sum_{y\in\gY} \E_{S,\epsilon}\left[\sup_{\|f\|_{\gF}\leq 1}\sum_{i=1}^n\epsilon_i \left([f(x_i)]_{y} - 2\gamma_21[y_i=y]\right)\right],\\
&= \frac{1}{n}\sum_{y\in\gY}\E_{S,\epsilon}\left[\sup_{\|f\|_{\gF}\leq 1}\sum_{i=1}^n\epsilon_i [f(x_i)]_{y}\right] - 
	\frac{1}{n}\sum_{y\in\gY}\E_{S,\epsilon}\left[\sum_{i=1}^n\epsilon_i 2\gamma_21[y_i=y]\right],\\
&= \frac{1}{n}\sum_{y\in\gY}\E_{S,\epsilon}\left[\sup_{\|f\|_{\gF}\leq 1}\sum_{i=1}^n\epsilon_i [f(x_i)]_{y}\right],\\
&\leq \frac{1}{n}\sum_{y\in\gY}\E_{S,\epsilon}\left[\sup_{h\in\gH_1}\sum_{i=1}^n\epsilon_ih(x_i)\right],\\
&= K\RC_n(\gH_1),
\end{align*}
where the first inequality is followed by Lemma \ref{lem:maxrad}. Note that $\zeta^{2\gamma_2}$ allows us to take maximum over $y\in\gY$ rather than $y\in\gY/\{y_i\}$, where in the second case, we have to take summation over two indices, that is $y$ and $y_i$, to get a margin function on $x$, and this will result in a factor $O(K^2)$. We finish the proof by combining the upper bound on $A_1$ and $A_2$ into (\ref{eq:thm1-1}),
\begin{align*}
P[\zeta(\nf(x),y)<\gamma_1] & \leq \eP_n \ell_{\gamma_1, \gamma_2}(\nf (x), y) + \frac{4K}{\Delta}\RC_n(\gH_1) + \sqrt{\frac{\log(1/\delta)}{2n}},\\
&\leq \eP_n \ell_{\gamma_2}(\nf (x), y) + \frac{4K}{\Delta}\RC_n(\gH_1) + \sqrt{\frac{\log(1/\delta)}{2n}},
\end{align*}
where the second inequality is implied from $\ell_{(\gamma_1, \gamma_2)}(\zeta)\leq 1[\zeta <\gamma_2]$.
\end{proof}
\begin{remark}
The key idea, that constructing $\zeta^\theta$ to use summation over one index results in $y$ in a factor $O(K)$, follows the proof of Theorem 2 in \cite{kuznetsov2015rademacher}. However, typical result toward multi-class margin bound has the factor $O(K^2)$ instead \citep{cortes2013multi,mohri2012foundations}.
\end{remark}

\subsection{Proof of Theorem \ref{thm:qmargin}}
\begin{proof}[Proof of Theorem \ref{thm:qmargin}]
Firstly, we show after normalization, the normalize margin has an upper bound, 
\begin{align*}
\|f(x)\|_{2} &= \|\sigma_{l}(W_{l}x_{l-1}+b_{l})\|_{2},\\
	        & \leq L_{\sigma_{l}}\|W_{l}x_{l-1}+b_{l}\|_{2}, \\
	        & \leq (L_{\sigma_{l}}\|\bar{W}_{l}\|_{\sigma})(\|x_{l-1}\|_{2} + 1) \\
	        & \ldots \\
	        & \leq \Pi_{i=1}^{l} (L_{\sigma_{i}}\|\bar{W}_{i}\|_{\sigma})\|x\|_{2} + \Sigma_{i=1}^{l}(\Pi_{j=i}^{l}(L_{\sigma_{i}}\|\bar{W}_{i}\|_{\sigma})), 
\end{align*}
where $x_{i}=\sigma_{i}(W_{i}x_{i-1}+b_{i})$ with $x_{0}=x$, $\bar{W}_{i} = (W_{i}, b_{i})$ and $L_{\sigma_{i}}$ is the Lipschitz constant of activation function $\sigma_{i}$ with $\sigma_{i}(0)=0, i=1,\ldots,l$. In the sequel as we consider the explosion of network Lipschitz over depths typically met in applications, we assume without loss of generality that $\|\bar{W}_{i}\|_{\sigma}\geq 1$ (otherwise we take the unit ball bound). Then, for normalized network $\nf = f/L_f $ with $L_{f} = \Pi_{i=1}^{l} (L_{\sigma_{i}}\|\bar{W}_{i}\|_{\sigma})$ and $\|x\|_{2}\leq M$,
\begin{equation*}
\|\nf(x)\|_{2}\leq M + l.
\end{equation*}
Therefore $\zeta(\nf(x), y)\leq 2\|\nf(x)\|_{2} = 2(M+l)\eqqcolon M_{1}$, and the quantile margin is also bounded $\hat{\gamma}_{q,t}\leq M_{1}$ for all $q\in(0,1),t=1,\ldots,T$. 

The remaining proof follows the idea from \citep{koltchinskii2002empirical, mohri2012foundations}. For any $\epsilon>0$, we take a sequence of $\epsilon_{k}$ and $\gamma_{k}, k=1,2,\ldots$ by $\epsilon_{k}=\epsilon+\sqrt{\frac{\log k}{n}}$ and $\gamma_{k}=M_{1}2^{-k}$. Let $A_{k}$ be the event $\eP[\zeta(\nf_{t}(x),y) < 0] > \eP_{n}[\zeta(\nf(x),y)<\gamma_{k}] + \frac{4K}{\gamma_{k}}\RC(\gH_{1}) + \epsilon_{k}.$ Then by Theorem \ref{thm:marg-err},
\begin{equation*}
\eP(A_{k}) \leq \exp(-2n\epsilon_{k}^{2}),
\end{equation*}
where the probability is taken over samples $\{x_{1},...x_{n}\}$. We further consider the probability for none of $A_{k}$ occurs,
\begin{align*}
\eP(\exists A_{k}) &\leq \Sigma_{k=1}^{\infty} P(A_{k}),\\
		        &\leq \Sigma_{k=1}^{\infty} \frac{1}{k^{2}}\exp(-2n\epsilon^{2}),\\
		        &\leq 2\exp(-2n\epsilon^{2}).
\end{align*}
Hence, fix a $q\in [0,1]$, for any $t=1,\ldots,T$, if $\hat{\gamma}_{q,t}>0$, there exists a $\hat{k}_t\geq 1$ (denoted as $\hat{k}$ for simplicity) such that,
\begin{equation}
\gamma_{\hat{k}+1} \leq \hat{\gamma}_{q,t} < \gamma_{\hat{k}}.
\end{equation}
Therefore, 
\begin{align*}
A_{\hat{k}+1} &\supseteq \eP[\zeta(\nf_{t}(x),y)<0] > \eP_{n}[\zeta(\nf_{t}(x),y)<\hat{\gamma}_{q,t}] + \frac{4K}{\gamma_{\hat{k}+1}}\RC(\gH_{1}) + \epsilon_{\hat{k}+1},\\
	    &\supseteq \eP[\zeta(\nf_{t}(x),y)<0] > \eP_{n}[\zeta(\nf_{t}(x),y)<\hat{\gamma}_{q,t}] + \frac{8K}{\hat{\gamma}_{q,t}}\RC(\gH_{1}) + \epsilon_{\hat{k}+1},\\
	    &= \eP[\zeta(\nf_{t}(x),y)<0] > \eP_{n}[\zeta(\nf_{t}(x),y)>\hat{\gamma}_{q,t}] + \frac{8K}{\hat{\gamma}_{q,t}}\RC(\gH_{1}) + \ldots \\
	    &\ \ \ \ \ \ \ \ \ \ \ \ \ \ \ \ \ \ \ \ + \epsilon + \sqrt{\frac{\log(\hat{k}+1)}{n},}\\
	    &\supseteq \eP[\zeta(\nf_{t}(x),y)<0] > \eP_{n}[\zeta(\nf_{t}(x),y)>\hat{\gamma}_{q,t}] + \frac{8K}{\hat{\gamma}_{q,t}}\RC(\gH_{1}) + \ldots \\
	    &\ \ \ \ \ \ \ \ \ \ \ \ \ \ \ \ \ \ \ \ +\epsilon + \sqrt{\frac{\log\log_{2}(2M_{1}/\hat{\gamma}_{q,t})}{n}}.
\end{align*} 
The first inequality is implied from $\eP_{n}[\zeta(\nf_{t}(x),y)<\hat{\gamma}_{q,t}] > \eP_{n}[\zeta(\nf_{t}(x),y)<\gamma_{\hat{k}+1}]$, since $\gamma_{\hat{k}+1}\leq \hat{\gamma}_{q,t}$. The second inequality is implied from $\hat{\gamma}_{q,t}<2\gamma_{\hat{k}+1}$ and thus, $1/\gamma_{\hat{k}+1} < 2/\hat{\gamma}_{q,t}$. The third equality is the direct definition of $\epsilon_{\hat{k}}$. The last inequality is implied from $\hat{k}+1=\log_{2}(M_{1}/\gamma_{\hat{k}+1})$ and again, $1/\gamma_{\hat{k}+1} < 2/\hat{\gamma}_{q,t}$. The conclusion is proved immediately by letting $\epsilon=\sqrt{\frac{1}{2n}\log \frac{2}{\delta}}$.
\end{proof}

\subsection{Proof of Proposition \ref{prop:convnorm}}\label{app:convnorm}

\begin{proof}[Proof of Lemma \ref{prop:convnorm}] (A)
\begin{eqnarray*}
\|w\ast x\|_2^2 & = & \sum_u \left(\sum_v x(v) w(u-v)\right)^2 \\
& = & \sum_u \left(\sum_v (x(v) \sqrt{|w(u-v)|}\cdot \sqrt{|w(u-v)|} \right)^2 \\
& \leq & \sum_u \left\{\left(\sum_v x(v)^2 |w(u-v)|\right) \left(\sum_v |w(u-v)|\right)\right\},  \\
& = & \|w\|^2_1 \|x\|_2^2,
\end{eqnarray*}
where the second last step is due to the Cauchy-Schwartz inequality. 

(B) Similarly,
\begin{eqnarray*}
\|w\ast x\|_2^2 & = & \sum_{u,j\leq \mathrm{C_{out}}} \left(\sum_{v,i\leq {\mathrm{C_{in}}}} x(v,i) w(j,i,u-v)\right)^2 \\
& = & \sum_{u,j} \left(\sum_{v,i} (x(v,i) \sqrt{|w(j,i,u-v)|}\cdot \sqrt{|w(j,i,u-v)|} \right)^2 \\
& {(a) \atop \leq} & \sum_{u,j} \left\{\left(\sum_{v,i} x(v,i)^2 |w(j,i,u-v)|\right) \left(\sum_{v,i} |w(j,i,u-v)|\right)\right\},  \\
& {(b) \atop =} & \sum_{j}  \| w(j,\cdot,\cdot)\|_1   \left(\sum_{u,v,i} x(v,i)^2 |w(j,i,u-v)|\right),  \\
& {(c) \atop \leq} & \sum_{j}  \| w(j,\cdot,\cdot)\|_1 \left(\sum_{v,i} x(v,i)^2 \|w(j,i,\cdot)\|_1\right),  \\
& {(d) \atop \leq} & \sum_{j}  \| w(j,\cdot,\cdot)\|_1 (\max_i \|w(j,i,\cdot)\|_1) \left(\sum_{v,i} x(v,i)^2 \right), \\
& {(e) \atop \leq} & (\max_{i,j} \|w(j,i,\cdot)\|_1)  \| w\|_1  \left(\sum_{v,i} x(v,i)^2 \right), 
\end{eqnarray*}
where the inequality (a) is due to the Cauchy-Schwartz inequality, step (b) and (c) are due to $\sum_{u} |w(j,i,u-v)|\leq \sum_{v} |w(j,i,u-v)|=\|w(j,i,\cdot)\|_1$ where equality holds if the stride is 1. 

In particular, for a convolution kernel $w$ of large stride $S\geq 1$,  $\sum_{u} |w(j,i,u-v)|\leq D\|w(j,i,\cdot)\|_\infty \leq \max_{i,j} D\|w(j,i,\cdot)\|_\infty$. Hence step (e) becomes 
\[ D  \|w\|_\infty  \| w\|_1\|x\|_2^2, \] 
which gives the stride-sensitive bound.
\end{proof}

\newpage

\bibliography{iclr2019_conference}
\bibliographystyle{natbib}


\end{document}